\definecolor{mydarkblue}{rgb}{0.03,0.2,0.4}
\newcommand{\Prob}[1]{\mathbb{P}\left[ #1 \right]}
\newcommand{\E}[1]{\mathbb{E}\left[#1\right]}
\newcommand{\U}{U}
\newcommand{\R}{\mathbb{R}}
\newcommand{\tlast}{t'}
\newcommand{\threshold}{\gamma}
\newcommand{\regret}{\mathcal{R}}
\newcommand{\argmax}{\text{arg}\,\text{max}}
\newcommand{\argmin}{\text{arg}\,\text{min}}
\newcommand{\ci}[1]{\sqrt{\frac{\log(1/\delta)}{2#1}}}
\newtheorem{theorem}{Theorem}
\newtheorem{lemma}[theorem]{Lemma}
\title{Exemplar Guided Active Learning}
\author{%
  Jason Hartford \\
  AI21 Labs\\ 
  \texttt{jasonh@cs.ubc.ca}\And
Kevin Leyton-Brown\\
  AI21 Labs\\ 
  \texttt{kevinlb@cs.ubc.ca}\And
Hadas Raviv \\
  AI21 Labs\\ 
  \texttt{hadasr@ai21.com}\And
Dan Padnos\\
  AI21 Labs\\ 
  \texttt{danp@ai21.com}\And
Shahar Lev\\
  AI21 Labs\\ 
  \texttt{shaharl@ai21.com}\And
  Barak Lenz\\
    AI21 Labs\\ 
  \texttt{barakl@ai21.com}
}
\begin{document}

\maketitle

\begin{abstract}
We consider the problem of wisely using a limited budget to label a small subset of a large unlabeled dataset. We are motivated by the NLP problem of word sense disambiguation. For any word, we have a set of candidate labels from a knowledge base, but the label set is not necessarily representative of what occurs in the data: there may exist labels in the knowledge base that very rarely occur in the corpus because the sense is rare in modern English; and conversely there may exist true labels that do not exist in our knowledge base. Our aim is to obtain a classifier that performs as well as possible on examples of each “common class” that occurs with frequency above a given threshold in the unlabeled set while annotating as few examples as possible from “rare classes” whose labels occur with less than this frequency. The challenge is that we are not informed which labels are common and which are rare, and the true label distribution may exhibit extreme skew. We describe an active learning approach that (1) explicitly searches for rare classes by leveraging the contextual embedding spaces provided by modern language models, and (2) incorporates a stopping rule that ignores classes once we prove that they occur below our target threshold with high probability. We prove that our algorithm only costs logarithmically more than a hypothetical approach that knows all true label frequencies and show experimentally that incorporating automated search can significantly reduce the number of samples needed to reach target accuracy levels.
\end{abstract}

\section{Introduction}

We are motivated by the problem of labelling a dataset for word sense disambiguation, where we want to use a limited budget to collect annotations for a reasonable number of examples of each sense for each word. 
This task can be thought of as an active learning problem \citep{settles2012active}, but with two nonstandard challenges.
First, for any given word we can get a set of candidate labels from a knowledge base such as WordNet \citep{wordnet}. However, this label set is not necessarily representative of what occurs in the data: there may exist labels in the knowledge base that do not occur in the corpus because the sense is rare in modern English;  conversely, there may also exist true labels that do not exist in our knowledge base. For example, consider the word ``bass.'' It is frequently used as a noun or modifier, e.g., ``the \emph{bass} and alto are good singers'', or ``I play the \emph{bass} guitar''. It is also commonly used to refer to a type of fish, but because music is so widely discussed online, the fish sense of the word is orders of magnitude less common than the low-frequency sound sense in internet text. The Oxford dictionary \citep{lexico} also notes that bass  once referred to a fibrous material used in matting or chords, but that sense is not common in modern English. We want a method that collects balanced labels for the common senses, ``\emph{bass} frequencies'' and ``\emph{bass} fish'', and ignores sufficiently rare senses, such as ``fibrous material''.
Second, the empirical distribution of the true labels may exhibit extreme skew: word sense usage is often power-law distributed \citep{mccarthy-etal-2007-unsupervised} with frequent senses occurring orders of magnitudes more often than rare senses.

When considered individually, neither of these constraints is incompatible with existing active learning approaches: 
incomplete label sets do not pose a problem for any method that relies on classifier uncertainty for exploration (new classes are simply added to the classifier as they are discovered); and extreme skew in label distributions has been studied under the guided learning framework wherein annotators are asked to explicitly search for examples of rare classes rather than simply label examples presented by the system \citep{attenberg2010label}. 
But taken together, these constraints make standard approaches impractical. Search-based ideas from guided learning are far more sample efficient with a skewed label distribution, but they require both a mechanism through which annotators can search for examples and a correct label set because it is undesirable to ask annotators to find examples that do not actually occur in a corpus.

Our approach is as follows. We introduce a frequency threshold, $\threshold$, below which
a sense will be deemed to be ``sufficiently rare'' %
to be ignored (i.e. for sense $y$, if $P(Y = y) = p_y < \threshold$ the sense is rare); otherwise it is a ``common'' sense of the word for which we want a balanced labeling with other common senses. Of course, we do not know $p_y$, so it must be estimated online. We do this by providing a stopping rule that stops searching for a given sense when we can show with high probability that it is sufficiently rare in the corpus. 
We automate the search for rare senses by leveraging the high-quality feature spaces provided by modern self-supervised learning approaches \citep{devlin2018bert, radford2019language, 2019t5}. We leverage the fact that one typically has access to a single example usage of each word sense\footnote{Example usages can be found in dictionaries or other lexical databases such as WordNet.}, which enables us to search for more examples of a sense in a local neighborhood of the embedding space. This allows us to develop a hybrid guided and active learning approach that automates the guided learning search procedure. 
Automating the search procedure makes the method cheaper (because annotators do not have to explicitly search) and allows us to maintain an estimate of $\hat{p}_y$ by using importance-weighted samples. Once we have found examples of common classes, we switch to more standard active learning methods to find additional examples to reduce classifier uncertainty.

Overall, this paper makes two key contributions. First, we present an Exemplar Guided Active Learning (EGAL) algorithm that offers strong empirical performance under extremely skewed label distributions by leveraging exemplar embeddings.
Second, we identify a stopping rule that makes EGAL robust to misspecified label sets and prove that this robustness only imposes a logarithmic cost over a hypothetical approach that knows the correct label set. 
Beyond these key contributions, we also present a new Reddit word sense disambiguation dataset, which is designed to evaluate active learning methods for highly skewed label distributions.

\section{Related Work}

\paragraph{Active learning under class imbalance}
The decision-boundary-seeking behavior of standard active learning methods which are driven by classifier uncertainty has a class balancing effect under moderately skew data \citep{attenberg2013class}. But, under extreme class imbalance, these methods may exhaust their labeling budgets before they ever encounter a single example of the rare classes. This issue is caused by an epistemic problem: the methods are driven by classifier uncertainty, but standard classifiers cannot be uncertain about classes that they have never observed. Guided learning methods \citep{attenberg2010label} address this by assuming that annotators can explicitly search for rare classes using a search engine (or some other external mechanism). Search may be more expensive than annotation, but the tradeoff is worthwhile under sufficient class imbalance. However, explicit search is not realistic in our setting: search engines do not provide a mechanism for searching for a particular sense of a word and we care about recovering all classes that occur in our dataset with frequency above $\threshold$, so searching by sampling uniformly at random would require labelling $n\geq \mathcal{O}(\frac{1}{\threshold^2})$ samples\footnote{For the probability of seeing at least one example to exceed $1-\delta$, we need at least $n \geq\frac{\log{1/\delta}}{\threshold^2}$ samples. See Lemma \ref{lemma:stopping} for details.} to find all such classes with high probability. 

Active learning with extreme class imbalance has also been studied under the ``active search'' paradigm \citep{NIPS2019_8734} that seeks to find as many examples of the rare class as possible in a finite budget of time, rather than minimizing classifier uncertainty. Our approach instead separates explicit search from uncertainty minimization in two different phases of the algorithm.

\paragraph{Active learning for word sense disambiguation}

Many authors have showed that active learning is a useful tool for collecting annotated examples for the word sense disambiguation task.  \citet{chen2006empirical} showed that entropy and margin-based methods offer significant improvements over random sampling. To our knowledge, \citet{zhu-hovy-2007-active} were the first to discuss the practical aspects of highly skewed sense distributions and their effect on the active learning problems. They studied over- and under-sampling techniques which are useful once one has examples, but did not address the problem of finding initial points under extremely skewed distributions.

\citet{zhu-etal-2008-active} and \citet{dligach-palmer-2011-good} respectively share the two key observations of our paper: good initializations lead to good active learning performance and language models are useful for providing a good initialization. Our work modernizes  these earlier papers by leveraging recent advances in self-supervised learning. The strong generalization provided by large-scale pre-trained embeddings allow us to guide the initial search for rare classes with exemplar sentences which are not drawn from the training set. We also provide  stopping rules that allow our approach to be run without the need to carefully select the target label set, which makes it practical run in an automated fashion.

\citet{yuan2016semi}  also leverage embeddings but they use label propagation to nearest neighbors in embedding space. This approach is similar to ours in that it also uses self-supervised learning, but we have access to ground truth through the labelling oracle which offers some protection against the possibility that senses are poorly clustered in embedding space.

\paragraph{Pre-trained representations for downstream NLP tasks}

There are a large number of recent papers showing that combining extremely large datasets with large Transformer models \citep{vaswani2017attention} and training them on simple sequence prediction objectives leads to contextual embeddings that are very useful for a variety of downstream tasks. In this paper we use contextual embeddings from BERT \citep{devlin2018bert} but because the only property we leverage is the fact that the contextual embeddings provide a useful notion of distance between word senses, the techniques described are compatible with any of the recent contextual models \citep[e.g.][]{radford2019language, 2019t5}. %

\section{Exemplar-guided active learning}

\begin{algorithm}[ht]
  \SetKwInOut{Input}{Input}
  \Input{%
  $\mathcal{D} = \{X_i\}_{i\in 1 \dots n}$ a dataset of unlabeled examples\\
  $\phi: \text{domain}(X)\rightarrow \R^d$, $d: \R^d\times \R^d \rightarrow \R$ an embedding and distance function\\
  $l: X_i \rightarrow y_i$ a labeling operation (such as querying a human expert)\\
  $L:$ The total number of potential class labels\\
  $\threshold:$ the label-probability threshold\\
  $E$ the set of exemplars, $|E| = L$\\
  $B:$ a budget for maximum number of queries\\
  $b:$ batch size of queries sampled before the model is retrained\\
  }
  $\quad$\\
  $\mathcal{A}\leftarrow \{1, \dots, L\}$ \textcolor{gray}{\# Set of active classes}\\
  $\mathcal{C}\leftarrow \emptyset$ \textcolor{gray}{\# Set of completed classes}\\
   $\mathcal{D}^{(l)}$ $\leftarrow \emptyset$ \textcolor{gray}{\# Set of labeled examples}\\
   \While{$| \mathcal{D}^{(l)} | < B$}{
      $\mathcal{A}' = \mathcal{A}$ {\textcolor{gray}{\# $\mathcal{A}'$ is the target set of classes for the algorithm to find.}}\\
      \While{$\mathcal{A}'\neq \emptyset$ and number of collected samples $<b$}{
        Select random $i'$ from $\mathcal{A}'$ and set $\mathcal{A}' \leftarrow \mathcal{A}' \setminus \{i'\}$ and $X \leftarrow \emptyset$\\
        \Repeat{(Number of unique labels in $y = \lfloor b / L\rfloor$) or (Number of labeled samples = $b$)}{
            $X \leftarrow X \cup \{x\} $ where $x$ is selected with exemplar $E_{i'}$ using either equation \ref{eqn:importance_search} or $\epsilon$-greedy sampling.\\
            $y \leftarrow \{l(x) \text{ for x in } X\}$ {\textcolor{gray}{\# Label each example in $X$}}
        }
        Update empirical means $\hat{p}_y$ and remove any classes with $\hat{p}_y + \sigma_y < \gamma$ from $\mathcal{A}$ and $\mathcal{A}'$\\
        $\mathcal{D}^{(l)}$ $\leftarrow $$\mathcal{D}^{(l)}$ $\cup (X, y)$\\
        $\mathcal{A}\leftarrow \{i \in \mathcal{A}: i \text{ not in unique labels in } {D}^{(l)} \}$ {\textcolor{gray}{\# Remove observed labels from the active set}}
      }
  Sample the remainder of the batch, $(X, y)$, using  using either algorithm in equation \ref{eqn:thompson} \\
  $\mathcal{D}^{(l)}$ $\leftarrow $$\mathcal{D}^{(l)}$ $\cup (X, y)$\\
  Update empirical means $\hat{p}_y$ and remove any classes with $\hat{p}_y + \sigma_y < \gamma$ from $\mathcal{A}$\\
  $\mathcal{A}\leftarrow \{i \in \mathcal{A}: i \text{ not in unique labels in } {D}^{(l)} \}$\\
  Update classifier using $\mathcal{D}^{(l)}$.
   }
   \caption{EGAL: Exemplar Guided Active Learning}
   \label{Frequentist}
  \end{algorithm}

We are given a large training set of unlabeled examples described by features (typically provided by an embedding function), $X_i \in \R^d$, and our task is to build a classifier, $f: \R^d \rightarrow \{1, \dots, K\}$, that maps a given example to one of $K$ classes. We are evaluated based on the accuracy of our trained classifiers on a \emph{balanced} test set of the ``common classes'': those classes, $y_i$, that occur in our corpus with frequency, $p_{y_i} > \threshold$, where $\threshold$ is a known threshold. Given access to some labelling oracle, $l: \R^d \rightarrow \{1, \dots, K\}$, that can supply the true label of any given example at a fixed cost, we aim to spend our labelling budget on a set of training examples such that our resulting classifier minimizes the $0-1$ loss on the $k(\threshold) = \sum_i \mathbf{1}\bigl[p_{y_i}\geq \threshold\bigr]$ classes that exceed the threshold,
$
    \mathcal{L} = \frac{1}{k(\threshold)}  \sum_{i=1}^{K} \mathbf{1}\bigl[p_{y_i}\geq \threshold\bigr] \mathbf{E}_{X: l(X)=y_k}[\mathbf{1}(f(X) \neq y_k)].
$

That is, any label that occurs with probability at least $\threshold$ in the observed data generating process will receive equal weight, $\frac{1}{k(\threshold)}$, in the test set and anything that occurs less frequently can be ignored. The task is challenging because rare classes (i.e. those which occur with frequency $\threshold < p_{y} \ll 1$) are unlikely to be found by random sampling, but still contribute a $\frac{1}{k(\threshold)}$ fraction of the overall loss. 

Our approach leverages the guided learning insight that ``search beats labelling under extreme skew'', but automates the search procedure. We assume we have access to high-quality embeddings---such as those from a modern statistical language model---which gives us a way of measuring distances between word usage in different contexts. If these distances capture word senses and we have an example usage of the word for each sense, a natural search strategy is to examine the usage of the word in a neighborhood of the example sentence. This is the key idea behind our approach: we have a search phase where we sample the neighborhood of the exemplar embedding until we find at least one example of each word sense, followed by a more standard active learning phase where we seek examples that reduce classifier uncertainty.

In the description below we denote the embedding vector associated with the target   word in sentence, $i$, as $x_i$. %
For each sense, $y$, denote the embedding of an example usage as $\tilde{x}_{y}$. We assume that this example usage is selected from a dictionary or knowledge base so we do not include it in our classifier's training set. Full pseudocode is given in Algorithm \ref{Frequentist}.

\paragraph{Guided search} 

Given an example embedding, $\tilde{x}_{y}$, we could search for similar usages of the word in our corpus by iterating over corpus examples, $x_i$, sorted by distance, $d_{i} = \|x_i - \tilde{x}_y\|_2$. However, using this approach does not give us a way of maintaining an estimate of $\hat{p}_{y}$, the empirical frequency of the word sense in the corpus which we need for our stopping rule that stops searching for classes that are unlikely to exist in the corpus. Instead we sample each example  to label, $x_i$, from a Boltzmann distribution over unlabeled examples,
\begin{equation}
    x_i: i \sim \text{Cat}(q = [q_{1},\dots, q_{n}]), \,\, q_{i} = \frac{\exp(- d_{i} / \lambda_y)}{\sum_i \exp(- d_{i} / \lambda_y)},
  \label{eqn:importance_search}
\end{equation}

where  $\lambda_y$ is a temperature hyper-parameter that controls the sharpness of $q$.

We sample with replacement and maintain a count vector ${c}$ that tracks the number of times an example has been sampled. If an example has previously been sampled, labelling does not diminish our annotation budget because we can simply look up the label, but maintaining these counts lets us maintain an unbiased estimate of $p_y$, the empirical frequency of the sense label and gives a way of choosing $\lambda_y$, the length scale hyper-parameter, which we describe below. We continue drawing samples until we have a batch of $b$ labeled examples.

\paragraph{Optimizing the length scale} The sampling procedure selects examples to label in proportion to how similar they are to our example sentence, where similarity is measured by a square exponential kernel on the distance $d_i$. To use this, we need to choose a length scale, $\lambda_y$, which selects how to scale distances such that most of the weight is applied to examples that are close in embedding space. One challenge is that embedding spaces may vary across words and in density around different example sentences. If $\lambda_y$ is set either too large or too small, one tends to sample few examples from the target class because for extreme values of $\lambda$, $q$ either concentrates on a single example (and is uniform over the rest) or is uniform over all examples. We address this with a heuristic that automatically selects the length scale for each example sentence $x_y$. We choose  $\lambda$ that minimizes 
\begin{align}
\lambda_y = \argmin_\lambda \E{\frac{1}{\sum_{i\in B} w_i^2}}; \,\, w_i = \frac{c_i(x_y)}{\sum_{j\in B} c_j(x_y)}.
\label{eqn:lengthscale}
\end{align}
This score measures the effective sample size that results from sampling a batch of $B$ examples for example sentence $x_y$. The score is minimized when $\lambda$ is set such that as much probability mass as possible is placed on a small cluster of examples. This gives the desired sampling behavior of searching a tight neighborhood of the exemplar embedding.
Because the expectation can be estimated using only unlabeled examples, we can optimize this by averaging the counts from multiple runs of the sampling procedure and finding the minimal score by binary search. 

\paragraph{Active learning} 

The second phase of our algorithm builds on standard active learning methods. Most active learning algorithms select unlabeled points to label, ranking them by an ``informativeness'' score for various notions of informativeness. 
The most widely used scores are the \emph{uncertainty sampling} approaches, such as \emph{entropy} sampling, which scores examples by $s_{\text{ENT}}$, the entropy of the classifier predictions, and the \emph{least confidence} heuristic $s_{\text{LC}}$, which selects the unlabeled example about which the classifier is least confident. They are defined as
\begin{align}
  s_{\text{ENT}}(x) &= -\sum_i P(y_i| x; \theta) \log P(y_i| x; \theta); \quad s_{\text{LC}}(x)  = -\max_y P(y| x; \theta).
  \label{eqn:thompson}
\end{align}
Typically examples are selected to maximize these scores, $x_i = \argmax_{x \in \mathcal{X}_{\text{pool}}} s(x)$, but again we can sample from a distribution implied by the score function to select examples $x_i: i \sim \text{Cat}(q' = [q'_{1},\dots, q'_{n}])$ and maintain an estimate of $p_y$ in a manner analogous to Equation \ref{eqn:lengthscale} as
$
  q'_{i} = \frac{\exp(s_{LC}(x_i) / \lambda_y)}{\sum_i \exp(s_{LC}(x_i) / \lambda_y)},
$

\paragraph{$\epsilon$-greedy sampling} The length scale parameters for Boltzmann sampling can be tricky to tune when applied to the active learning scores because the scores vary over the duration of training. The means that we cannot use the optimization heuristic that we applied to the guided search distribution. A simple alternative to the Boltzmann distribution sampling procedure is to sample some $u \sim$ Uniform(0,1) and select a random example whenever $u \leq \epsilon$. $\epsilon$-greedy sampling is far simpler to tune and analyze theoretically, but has the disadvantage that one can only use the random steps to update estimates of the class frequencies. We evaluate both methods in the experiments.

\paragraph{Stopping conditions}
For every sample we draw, we estimate the empirical frequencies of the senses. We continue to search for examples of each sense as long as an upper bound on the sense frequency exceeds our threshold $\threshold$. For each sense, the algorithm remains in ``importance weighted search mode'' as long as $\hat{p}_y + \sigma_y > \threshold$ and we have not yet found an example of sense $y$ in the unlabeled pool. 
Once we have found an example of $y$, we stop searching for more examples and instead let further exploration be driven by classifier uncertainty. %

Because any wasted exploration is costly in the active learning setting, a key consideration for the stopping rule is choosing the confidence bound to be as tight as possible while still maintaining the required probabilistic guarantees. We use two different confidence bounds for each of the sampling strategies. When sampling using the $\epsilon$-greedy strategy, we know that the random variable, $y$, obtains values in $\{0, 1\}$ so we can get tight bounds on $p_y$ using Chernoff's bound on Bernoulli random variables. We use the following implication of the Chernoff bound \citep[see][chapter 10]{lattimore2019},

\begin{lemma}[Chernoff bound]
Let $y_i$ be a sequence of Bernoulli random variables with parameter $p_y$, $\hat{p}_y=\frac{1}{n}\sum_{i=1}^n y_i$ and $KL(p,q) = p \log(p/q) + (1 - p) \log((1 - p)/(1 - q))$. For any $\delta\in (0, 1)$ we can define the upper and lower bounds as follows,
\begin{align*}
    &U(\delta) = \max\{x \in [0, 1]: \text{KL}(\hat{p}_y, x)\leq \frac{\log(1/\delta)}{n}\},\, L(\delta) = \min\{x \in [0, 1]: \text{KL}(\hat{p}_y, x)\leq \frac{\log(1/\delta)}{n}\}
\end{align*}
and we have that $\Prob{p_y \geq U(\delta)} \leq \delta$ and $\Prob{p_y \leq L(\delta)} \leq \delta$.
\end{lemma}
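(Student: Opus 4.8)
The plan is to deduce the lemma from the classical one-sided Chernoff--Hoeffding tail bounds for averages of Bernoulli random variables, written in their Kullback--Leibler form, which we take as given (see \citealp{lattimore2019}, Chapter~10): for any $a \le p_y$ we have $\Prob{\hat p_y \le a} \le \exp(-n\,\mathrm{KL}(a,p_y))$, and symmetrically $\Prob{\hat p_y \ge a} \le \exp(-n\,\mathrm{KL}(a,p_y))$ for any $a \ge p_y$. By the left--right symmetry of the construction---replacing each $y_i$ by $1-y_i$ sends $\hat p_y \mapsto 1-\hat p_y$, $p_y \mapsto 1-p_y$, leaves $\mathrm{KL}$ invariant since $\mathrm{KL}(a,b)=\mathrm{KL}(1-a,1-b)$, and swaps the roles of $U(\delta)$ and $1-L(\delta)$---it suffices to establish the upper statement $\Prob{p_y \ge U(\delta)} \le \delta$, and the bound on $L(\delta)$ then follows at once.

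The core step is a purely geometric reformulation that turns the random event $\{p_y \ge U(\delta)\}$ into a fixed one-sided tail event for $\hat p_y$. Writing $c = \log(1/\delta)/n$, the map $x \mapsto \mathrm{KL}(\hat p_y, x)$ is strictly convex on $(0,1)$, vanishes at $x=\hat p_y$, and is strictly increasing to the right of $\hat p_y$ and strictly decreasing to the left; hence $\{x \in [0,1] : \mathrm{KL}(\hat p_y, x) \le c\}$ is a closed interval $[L(\delta),U(\delta)]$ containing $\hat p_y$, so in particular $U(\delta) \ge \hat p_y$ and the maximum defining $U(\delta)$ is attained. If $\hat p_y > p_y$ then $U(\delta) > \hat p_y > p_y$ and the event cannot occur, so we may assume $\hat p_y \le p_y$; on that region monotonicity of $x \mapsto \mathrm{KL}(\hat p_y,x)$ to the right of $\hat p_y$ gives the equivalence $p_y \ge U(\delta) \iff \mathrm{KL}(\hat p_y, p_y) \ge c$. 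Now viewing $\mathrm{KL}(\cdot, p_y)$ as a function of its first argument with $p_y$ fixed, it is strictly decreasing on $[0,p_y]$, so letting $a_0 \le p_y$ be the unique solution of $\mathrm{KL}(a_0, p_y) = c$ we obtain $\{p_y \ge U(\delta)\} = \{\hat p_y \le a_0\}$.

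Applying the lower-tail Chernoff bound at the threshold $a_0 \le p_y$ then closes the argument:
\[
\Prob{p_y \ge U(\delta)} = \Prob{\hat p_y \le a_0} \le \exp(-n\,\mathrm{KL}(a_0,p_y)) = e^{-nc} = \delta .
\]
I expect the main obstacle to be this event-translation step rather than any analytic estimate. One must keep straight the orientation of the two $\mathrm{KL}$ arguments (the Chernoff exponent uses the threshold in the first slot and the true parameter in the second, whereas $U(\delta)$ is defined through the empirical mean in the first slot), confirm that the sublevel set is genuinely an interval so that $U(\delta)$ is well defined as a maximum, and dispose of the degenerate boundary cases $\hat p_y \in \{0,1\}$ and $p_y \in \{0,1\}$---where some $\mathrm{KL}$ terms are read as limits, where $a_0$ may sit at an endpoint, and where $c > -\log(1-p_y)$ makes the event empty and the bound trivial. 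Once these are handled, both tail bounds follow from a single invocation of the cited Chernoff inequality.
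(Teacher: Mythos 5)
The paper never actually proves this lemma: it is stated as a known implication of the Chernoff bound and delegated wholesale to the citation (Lattimore and Szepesv\'ari, Chapter 10), so there is no in-paper argument to compare against line by line. Your derivation supplies precisely the argument that citation points to, and it is correct in substance. The two key steps are sound: (i) the flip $y_i \mapsto 1-y_i$ reduces the $L(\delta)$ claim to the $U(\delta)$ claim, since $\mathrm{KL}(a,b)=\mathrm{KL}(1-a,1-b)$ maps the sublevel set onto its reflection and hence exchanges $U(\delta)$ with $1-L(\delta)$; and (ii) convexity and monotonicity of $x\mapsto \mathrm{KL}(\hat p_y,x)$ make the sublevel set a closed interval containing $\hat p_y$, so that on $\{\hat p_y \le p_y\}$ the random event $\{p_y \ge U(\delta)\}$ coincides with the deterministic-threshold event $\{\hat p_y \le a_0\}$, at which point the one-sided KL-form Chernoff--Hoeffding bound gives $e^{-nc}=\delta$. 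This event-translation is exactly the standard KL-UCB coverage argument, so you have not taken a different route so much as filled in the route the paper outsources.

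One correction to your closing remarks: the degenerate case $p_y=1$ cannot be ``disposed of,'' because there the lemma as stated is actually false. If $p_y=1$ then $\hat p_y=1$ almost surely, $\mathrm{KL}(1,x)=\log(1/x)$ gives $U(\delta)=1$, and the event $\{p_y \ge U(\delta)\}$ occurs with probability one, not at most $\delta$ (symmetrically for $p_y=0$ and $L(\delta)$). This is a defect of the statement's non-strict inequality rather than of your argument: the clean fix is to assume $p_y\in(0,1)$, or to state the guarantee as $\Prob{p_y > U(\delta)}\le\delta$ and $\Prob{p_y < L(\delta)}\le\delta$, under either of which your translation goes through verbatim. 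The other boundary cases you list ($\hat p_y\in\{0,1\}$ with $p_y\in(0,1)$, and $c>-\log(1-p_y)$, which empties the event) are indeed trivial as claimed; note also that $U(\delta)=1$ forces $\hat p_y=1$, since $\mathrm{KL}(\hat p_y,1)=\infty$ otherwise, so no hidden case arises there.
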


There do not exist closed-form expressions for these upper and lower bounds, but they are simple bounded 1D convex optimization procedures that can be solved efficiently using a variety of optimization methods. In practice we use Scipy's \citep{2020SciPy-NMeth} implementation of Brent's method \citep{brent1973}.

When using the importance weighted approach, we have to contend with the fact that the random variables implied by the importance-weighted samples are not bounded above. This leads to wide confidence bounds because the bounds have to account for the possibility of large changes to the mean that stem from unlikely draws of extreme values. 
When using importance sampling, we sample points according to some distribution $q$ and we can maintain an unbiased estimate of $p_y$ by computing a weighted average of the indicator function, $\mathbf{1}(y_i = y)$, where each observation is weighted by its inverse propensity, which implies importance weights $w_i = \frac{1/n}{q_i}$. The resulting random variable $z_i = w_i \mathbf{1}(y_i = y)$ has expected value equal to $p_y$, but can potentially take on values in the range $[0, \max_i \mathbf{1}(y_i = y)\frac{1/n}{q_i}]$. Because the distribution $q$ has probability that is inversely proportional to distance, this range has a natural interpretation in terms of the quality of the embedding space: the largest $z_i$ is the example from our target class that is furthest from our example embedding. If the embedding space does a poor job of clustering senses around the example embedding, then it is possible that the furthest point in embedding space---which will have tiny propensity because propensity decreases exponentially with distance---shares the same label as our target class, so our bounds have to account for this possibility. 

There are two ways one could tighten these bounds: either make assumptions on the distribution of senses in embedding space that imply clustering around the example embedding, or modify the sampling strategy. We choose the latter approach:
we can control the range of the importance weighted samples by enforcing a minimum value, $\alpha$, on our sampling distribution $q$ such that the resulting upper bound $\max_i \frac{1/n}{q_i} = \frac{\alpha^{-1}}{n}$. %
In practice this can be achieved by simply adding a small constant to each $q_i$ and renormalizing the distribution. Furthermore, we note that when the embedding space is relatively well clustered, the true distribution of $z$ will have far lower variance than the worst case implied by the bounds. We take advantage of this by computing our confidence intervals using \citet{maurer2009empirical}'s empirical Bernstein inequality which offers tighter bounds when the empirical variance is small. 

\begin{lemma}[Empirical Bernstein]
Let $z_i$ be a sequence of i.i.d. bounded random variables on the range $[0, m]$ with expected value $p_y$, empirical mean $\bar{z}=\frac{1}{n}\sum_i z_i$, empirical variance $V_n(Z)$. For any $\delta\in (0, 1)$ we have,
\begin{align*}
    \Prob{p_y \geq \bar{z} + \sqrt{\frac{m^2 2V_n(Z) \log(2/\delta)}{n}} + \frac{7m\log(2/\delta)}{3(n-1)}} \leq \delta
\end{align*}
\end{lemma}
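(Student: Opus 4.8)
The plan is to recognize this as the empirical Bernstein bound of \citet{maurer2009empirical}, stated here for a general range $[0,m]$ rather than the unit interval, and to derive it in two moves: a scaling reduction to $[0,1]$, followed by the standard two-ingredient argument. First I would rescale by setting $w_i = z_i/m \in [0,1]$, so that $\bar{w} = \bar{z}/m$, the empirical variance obeys $V_n(W) = V_n(Z)/m^2$, and $\E{w_i} = p_y/m$. The claimed inequality for $z$ is then exactly $m$ times the unit-range inequality for $w$: the $\sqrt{m^2\, 2 V_n(Z)\,\log(2/\delta)/n}$ term becomes $m\sqrt{2 V_n(W)\,\log(2/\delta)/n}$, and $7m\log(2/\delta)/(3(n-1))$ is $m$ times the unit-range residual. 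So it suffices to prove the bound for variables supported on $[0,1]$.

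For the unit-range case the backbone is the classical Bernstein inequality, which controls the deviation of $\bar{z}$ from $p_y$ in terms of the \emph{true} variance $\sigma^2 = \mathrm{Var}(z_1)$: with probability at least $1 - \delta/2$, $p_y - \bar{z} \le \sqrt{2\sigma^2 \log(2/\delta)/n} + \log(2/\delta)/(3n)$. The obstacle is that $\sigma^2$ is unobservable, whereas the statement must be expressible through the computable empirical variance $V_n$. The heart of the proof is therefore a one-sided concentration of the empirical standard deviation around the true one.

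To obtain that, I would use the fact that $(z_1,\dots,z_n)\mapsto \sqrt{V_n}$ is a self-bounding, bounded-differences function on $[0,1]^n$: perturbing a single coordinate changes $\sqrt{V_n}$ by $O(1/\sqrt{n-1})$. The entropy-method argument behind \citet{maurer2009empirical} then yields, with probability at least $1-\delta/2$, $\sigma \le \sqrt{V_n} + \sqrt{2\log(2/\delta)/(n-1)}$. Substituting this into the Bernstein bound, applying $\sqrt{a+b}\le\sqrt{a}+\sqrt{b}$ to split the variance term, and collecting the two remaining $O(1/n)$ contributions---the $1/(3n)$ from Bernstein's residual and the piece coming from the variance bound---produces a single lower-order term whose constant can be bounded by $7/(3(n-1))$. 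A union bound over the two failure events, each of probability $\delta/2$, delivers the overall $\delta$.

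The step I expect to be the main obstacle is the variance concentration together with the constant bookkeeping. A naive McDiarmid application to $\sqrt{V_n}$ gives a suboptimal rate, so the self-bounding structure is essential; and merging Bernstein's $1/(3n)$ residual with the variance-bound remainder into the clean $7/(3(n-1))$ constant requires careful use of $\sqrt{a+b}\le \sqrt{a}+\sqrt{b}$ together with $n/(n-1)$ comparisons. Everything else---the scaling reduction and the union bound---is routine. Since the unit-range statement is precisely the theorem of \citet{maurer2009empirical}, in the writeup I would cite it directly for that case and present only the scaling reduction in full detail.
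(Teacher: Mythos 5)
Your proposal takes essentially the same route as the paper: the paper's proof is precisely the scaling reduction you describe---set $z_i' = z_i/m$ so the variables lie in $[0,1]$, apply Theorem 4 of \citet{maurer2009empirical}, and rearrange back to the original scale. Your additional sketch of how the unit-range bound itself is established (classical Bernstein with the true variance, plus self-bounding concentration of the empirical standard deviation) goes beyond what the paper writes down, but since your writeup plan is to cite Maurer and Pontil for that case and present only the rescaling in detail, the two proofs coincide.
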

\begin{proof}
  Let $z_i' = \frac{z_i}{m}$ such that it is bounded on $[0, 1]$. Apply Theorem 4 of \citet{maurer2009empirical} to $z_i'$. The result follows by rearranging to express the theorem in terms of $z_i$.
\end{proof}

The bound is symmetric so the lower bound can be obtained by subtracting the interval. Note that the width of the bound depends linearly on the width of the range. This implies a practical trade-off: making the $q$ distribution more uniform by increasing the probability of rare events leads to tighter bounds on the class frequency which rules out rare classes more quickly; but also costs more by selecting sub-optimal points more frequently.

\paragraph{Unknown classes} One may also want to allow for the possibility of labelling unknown classes that do not appear in the dictionary or lexical database. We use the following approach. If during sampling we discover a new class, it is treated in the same manner as the known classes, so we maintain estimates of its empirical frequency and associated bounds. This lets us optimize for classifier uncertainty over the new class and remove it from consideration if at any point the upper bound on its frequency falls below our threshold (which may occur if the unknown class simply stems from an annotator misunderstanding).

For the stronger guarantee that with probability $1-\delta$ we have found all classes above the threshold, $\threshold$, we need to collect at least $n \geq \frac{\log{1/\delta}}{\threshold^2}$ uniform at random samples. 

\begin{lemma}
For any $\delta \in (0,1)$, consider an algorithm which terminates if $\sum_i \mathbf{1}(y_i = k) = 0$ after $n\geq \frac{\log{1/\delta}}{\threshold^2}$ draws from an i.i.d. categorical random variable, $y$, with support $\{1, \dots, K\}$ and parameters $p_1, \dots, p_K$. Let the ``bad'' event, $\{B_j = 1\}$, be the event that the algorithm terminates in experiment $j$ with parameter $p_k>\threshold$. Then $\Prob{B_j = 1} \leq \delta$, where the probability is taken across all runs of the algorithm.
\label{lemma:stopping}
\end{lemma}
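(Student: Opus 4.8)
The plan is to reduce the statement to a one-line tail computation for a single class, since the termination test depends only on whether class $k$ is ever observed among the $n$ i.i.d.\ draws. First I would note that the bad event $\{B_j=1\}$ is defined to require $p_k>\threshold$; if instead $p_k\le\threshold$ the event is impossible and $\Prob{B_j=1}=0\le\delta$ trivially, so I may assume $p_k>\threshold$. Under this assumption $B_j=1$ occurs exactly when $\sum_{i=1}^n \mathbf{1}(y_i=k)=0$, i.e.\ when none of the $n$ draws lands in class $k$. Because the draws are i.i.d.\ and each independently avoids class $k$ with probability $1-p_k$, I get the exact identity
\[
  \Prob{B_j=1} = \Prob{\sum_{i=1}^n \mathbf{1}(y_i=k)=0} = (1-p_k)^n .
\]

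Next I would bound this using the hypothesis $p_k>\threshold$ together with the elementary inequality $1-x\le e^{-x}$, which gives $(1-p_k)^n\le e^{-p_k n}< e^{-\threshold n}$. It then remains to check that the sample-size assumption makes the exponent at least $\log(1/\delta)$. Since $\threshold\in(0,1)$ we have $1/\threshold>1$, so
\[
  n \;\ge\; \frac{\log(1/\delta)}{\threshold^2} \;\ge\; \frac{\log(1/\delta)}{\threshold},
\]
and multiplying through by $\threshold$ yields $\threshold n\ge\log(1/\delta)$. Substituting back gives $\Prob{B_j=1}<e^{-\threshold n}\le e^{-\log(1/\delta)}=\delta$, which is the claim.

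The argument is short and I do not expect a genuine obstacle; the only points needing care are bookkeeping. The first is the ``across all runs'' phrasing: the probability in question is taken purely over the $n$ i.i.d.\ draws inside a single experiment $j$, so the per-run bound above is exactly what is asserted and no union bound over experiments is required. The second is the slack between the stated hypothesis $n\ge\log(1/\delta)/\threshold^2$ and the weaker condition $n\ge\log(1/\delta)/\threshold$ that the computation actually consumes; I would state explicitly that the assumption is stronger (because $\threshold<1$) and therefore more than sufficient, so the lemma holds as written with room to spare. The extra factor of $1/\threshold$ is harmless here and plausibly anticipates a union bound over the at most $1/\threshold$ classes that can simultaneously exceed the threshold, which would be the place where the $\threshold^2$ becomes the natural rate.
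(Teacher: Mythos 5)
Your proof is correct, but it takes a different route than the paper. The paper's (sketched) argument applies Hoeffding's inequality to the Bernoulli indicators $\mathbf{1}(y_i = k)$: on the bad event the empirical mean is $0$ while $p_k > \threshold$, so the empirical mean deviates below its expectation by more than $\threshold$, giving $\Prob{B_j = 1} \leq \exp(-2n\threshold^2) \leq \delta^2 \leq \delta$ once $n \geq \log(1/\delta)/\threshold^2$; note the exponent consumes the stated $\threshold^2$ hypothesis directly. You instead compute the termination probability exactly as $(1-p_k)^n$ and bound it with the elementary inequality $1-x \leq e^{-x}$, which needs only $n \geq \log(1/\delta)/\threshold$. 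Your route is more elementary (no concentration inequality at all) and makes explicit that the lemma's hypothesis is quadratically stronger than the single-class claim requires --- an observation the Hoeffding route also yields, but less sharply, via the $\delta^2$ slack. What the paper's approach buys in exchange is uniformity with the rest of the analysis: the same Hoeffding machinery is reused in Lemma \ref{lemma:empty} and Theorem 4, where the empirical frequency is nonzero but small and an exact closed form like $(1-p_k)^n$ is no longer available, so the concentration-based argument generalizes where your exact computation does not. Both interpretations of the ``across all runs'' phrasing agree: the bound is per-experiment over the $n$ i.i.d.\ draws, and no union bound over experiments is needed.
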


\begin{proof}[Proof sketch] The result is a direct consequence of Hoeffding's inequality.
\end{proof}

A lower bound of this form is unavoidable without additional knowledge, so in practice we suggest using a larger threshold parameter $\threshold'$ for computing $n$ if one is only worried about `frequent' unknown classes. When needed, we support these unknown class guarantees by continuing to use an $\epsilon-$greedy sampling strategy until we have collected at least $n(\delta, \threshold)$ uniform at random samples without encountering an unknown class.

\paragraph{Regret analysis}

Regret measures loss with respect to some baseline strategy, typically one that is endowed with oracle access to random variables that must be estimated in practice. Here we define our baseline strategy to be that of an active learning algorithm that knows in advance which of the classes fall below the threshold $\threshold$. This choice of adversary lets us %
focus our analysis on the affect of searching for thresholded classes.

Algorithm \ref{Frequentist} employs the same strategy as the optimal agent during both the search and active learning phases, but may differ in the set of classes that it considers active: in particular, at the start of execution, it considers all classes active whereas the optimal strategy only regards all classes for which $p_{y_i}> \threshold$ as active. Because of this it will potentially remain in the exemplar guided search phase of execution for longer than the optimal strategy and hence the sub-optimality of Algorithm \ref{Frequentist} will be a function of the number of samples it takes to rule out classes that fall below the threshold $\threshold$.

Let $\Delta = \min_i |p_i - \threshold|$ denote the smallest gap between $p_i$ and $\threshold$. Assume the classifier's performance on class $y$ can be described by some concave utility function, $U: \R^{n\times d} \times [K]\rightarrow \R$, that measures expected generalization as a function of the number of observations it receives. This amounts to assuming that standard generalization bounds hold for classifiers trained on samples derived from the active learning procedure.

\begin{theorem}
Given some finite time horizon $n$ implied by the labelling budget, if the utility derived from additional labeled examples for each class $y$ can be described by some concave utility function, $U: \R^{n\times d} \times [K]\rightarrow \R$ and $\Delta = \min_i |p_{y_i} - \threshold| > 0$ then Algorithm \ref{Frequentist} has regret at most,
$
\regret\leq 1 + k(\threshold)\left[\frac{2\log(n)}{\Delta^2} +\frac{2 + \Delta^2 }{ n\Delta^2 } \right] 
$
\end{theorem}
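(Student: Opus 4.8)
The plan is to bound the regret by controlling how much longer Algorithm~\ref{Frequentist} lingers in the exemplar-guided search phase than the oracle baseline, and then to convert this excess search into a loss penalty using concavity of $U$. Since the two strategies are identical except for the set of classes they treat as active, the regret is entirely attributable to samples that the algorithm spends searching for below-threshold classes (those with $p_{y_i} < \threshold$) before its stopping rule removes them from $\mathcal{A}$. I would therefore first decompose the regret over a \emph{good event}, on which every confidence bound stated above (the Chernoff bound and the empirical Bernstein inequality) holds simultaneously for all classes and rounds, and its complement, the \emph{bad event}, on which some class is wrongly declared rare or common.

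First I would bound, on the good event, the number of samples $T_j$ wasted on each below-threshold class $j$ before it is eliminated. A class is removed once $\hat{p}_{y_j} + \sigma_{y_j} < \threshold$; on the good event $\hat{p}_{y_j}$ concentrates around $p_{y_j} \le \threshold - \Delta$ while $\sigma_{y_j} \asymp \sqrt{\log(1/\delta)/T_j}$, so the rule fires as soon as the confidence half-width drops below the gap $\Delta = \min_i |p_{y_i} - \threshold|$. Running the bound with a time-varying confidence level $\delta_t \asymp t^{-c}$, so that $\log(1/\delta_t) \le 2\log n$ across the horizon, yields $T_j \le \tfrac{2\log n}{\Delta^2} + O(1)$; this is precisely the separation-driven source of the leading $\tfrac{2\log n}{\Delta^2}$ factor.

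Next I would translate wasted samples into a loss penalty. Because $U$ is concave in the number of observations a class receives, the marginal value of each sample is nonincreasing, so diverting a sample from the common classes into fruitless search costs the baseline at most its (bounded) marginal utility; summing this over the $k(\threshold)$ common classes that carry loss weight $\tfrac{1}{k(\threshold)}$ produces the $k(\threshold)[\cdots]$ prefactor. The residual $O(1)$ term and the $\tfrac{2+\Delta^2}{n\Delta^2}$ correction come from the bad event: a union bound over classes and rounds with $\delta_t \asymp t^{-c}$ gives $\sum_t \Prob{\text{bad}} \le O(1)$ via $\sum_t t^{-2} \le 2$ (the source of the constant $2$), and since $\mathcal{L} \in [0,1]$ each failure contributes at most a constant, so dividing by the horizon $n$ yields the stated lower-order term.

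The main obstacle will be the accounting in the concavity step. Making the passage from ``excess samples in the search phase'' to ``lost utility on the common classes'' rigorous requires coupling the algorithm's and baseline's sample allocations round by round, so that every matched search and active-learning step cancels and only the genuinely wasted steps survive, before invoking concavity of $U$ with the correct marginal to recover the exact constants $\tfrac{2\log n}{\Delta^2}$ and $\tfrac{2+\Delta^2}{n\Delta^2}$ rather than merely their orders. Pinning the prefactor down to exactly $k(\threshold)$, as opposed to the number of below-threshold classes, hinges on this coupling together with the $\tfrac{1}{k(\threshold)}$ normalization of $\mathcal{L}$, and is the step I expect to demand the most care.
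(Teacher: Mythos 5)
Your overall architecture---split on the event that the confidence bounds hold, use the gap $\Delta$ to bound wasted exploration at roughly $\tfrac{2\log n}{\Delta^2}$, and convert sample deficits into utility loss via concavity---matches the paper's proof in spirit. But the step you yourself flag as the hard one (the ``coupling'' that produces the $k(\threshold)$ prefactor) is exactly where the proposal has a genuine gap, and the paper's resolution is the idea your sketch is missing. You bound a per-class quantity $T_j$, the number of samples ``wasted on'' each below-threshold class $j$, and then aggregate. In Algorithm \ref{Frequentist} (analyzed in its $\epsilon$-greedy form) there is no such per-class expenditure: every uniform draw is a single categorical observation that updates the empirical frequencies of \emph{all} classes simultaneously, so below-threshold classes are eliminated in parallel, not sequentially. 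The relevant random variable is $\tlast$, the round at which the \emph{last} below-threshold class leaves the active set---a maximum over classes, not a sum---and its scale is set by the single worst gap $\Delta$. The paper proves $\Prob{\mathcal{A}_t \neq \emptyset} \leq \exp(-2tc^2\Delta^2)$ for $t$ beyond $T = \lceil \log(1/\delta)/(2(1-c)^2\Delta^2)\rceil$ (Lemma \ref{lemma:empty}), sums the geometric tail to get $\E{\tlast} \leq T + \exp(-2Tc^2\Delta^2)\bigl[\tfrac{1}{2c^2\Delta^2}+1\bigr]$, and then charges each of the $k(\threshold)$ \emph{common} classes an under-sampling of at most $\tlast$ relative to the oracle. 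That is where $k(\threshold)$ comes from: the loss weights only common classes, and each is short by at most the full search-phase length. Your per-rare-class accounting, carried through literally, would instead introduce a factor of the number of below-threshold classes, which does not appear in the theorem.

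A second, smaller problem is that your attribution of the lower-order terms gives no path to the stated constants. In the paper the additive $1$ is the false-elimination term: a common class is wrongly declared rare with probability at most $\delta$ and then forfeits at most its oracle allocation $n/k(\threshold)$, contributing $k(\threshold)\cdot\delta\cdot\tfrac{n}{k(\threshold)} = n\delta = 1$ at $\delta = \tfrac{1}{n}$. The term $\tfrac{2+\Delta^2}{n\Delta^2}$ is not a bad-event residue at all; it is the geometric tail $\exp(-2Tc^2\Delta^2)\bigl[\tfrac{1}{2c^2\Delta^2}+1\bigr]$ evaluated at $c=\tfrac12$, $\delta=\tfrac1n$, i.e.\ the expected overshoot of the search phase past $T$. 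Your proposed mechanism (time-varying $\delta_t \asymp t^{-c}$ with $\sum_t t^{-2}\leq 2$ as ``the source of the constant $2$'') is not what generates these terms. The general strategy is in the right family, but as written the proposal lacks the two ingredients that make the bound come out: bounding $\E{\tlast}$ through the active-set-emptiness probability, and decomposing regret over common classes only, so that each contributes at most $\E{\tlast}$ plus the false-elimination term.
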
{}

The proof leverages that fact that the difference in performance is bounded by differences in the number of times Algorithm \ref{Frequentist} selects the unthresholded classes. Full details are given in the appendix. 

\section{Experiments}

Our experiments are designed to test whether automated search with embeddings could find examples of very rare classes and to assess the effect of different skew ratios on performance. %

\paragraph{Reddit word sense disambiguation dataset}

\begin{figure*}[t]
  \begin{center}
    \includegraphics{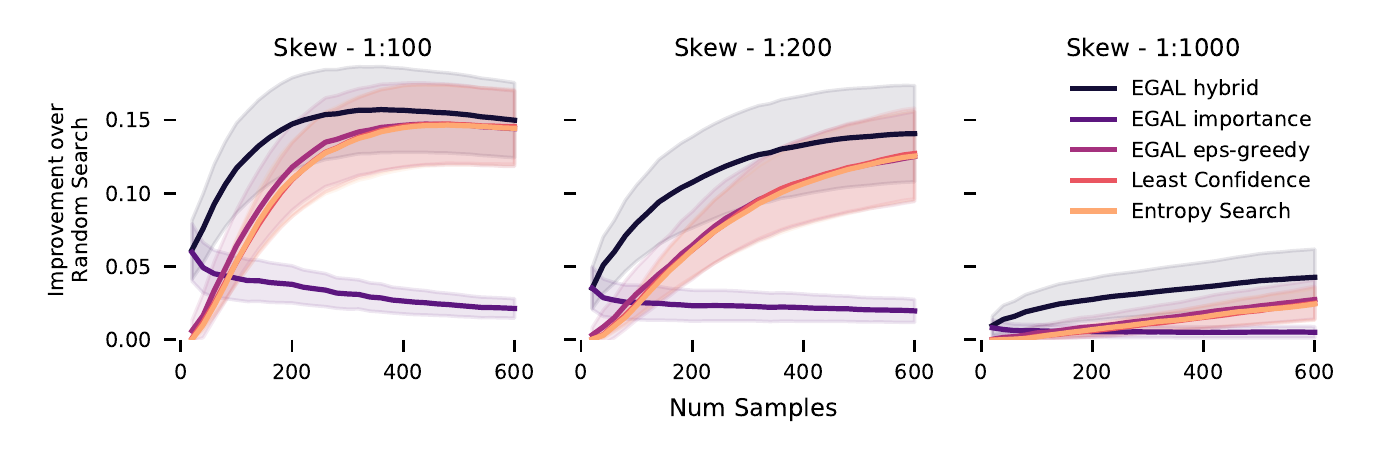}
\vspace{-1.5em}
\caption{Average accuracy improvement over random search for all 21 words at different levels of skew. With lower levels of skew (\emph{left}), EGAL tends to give big improvments over random search quickly as the examplars make it relatively easy to find examples of the rare classes. With larger amounts of skew (\emph{left}), it takes longer on average before the uncertainty driven methods find examples of the rare class, so the performance difference with EGAL remains large for longer. Once skew becomes sufficiently large (\emph{right}), EGAL still offers some benefit, but the absolute gains are smaller as the rare classes are suffiently rare that they are hard to find even with an exemplar.}
\label{fig:averages}
  \end{center}
\end{figure*}

The standard practice for evaluating active learning methods is to take existing supervised learning datasets and counterfactually evaluate the performance that would have been achieved if the data had been labeled under the proposed active learning policy. While there do exist datasets  for word sense disambiguation \citep[e.g][]{yuan2016semi,EdmondsC01, mihalcea2004sel}, they typically either have a large number of words with few examples per word or have too few examples to test the more extreme label skew that shows the benefits of guided learning approaches. To test the effect of a skew ratio of 1:200 with 50 examples of the rare class, one would need 10 000 examples of the common class; more extreme skew would require correspondingly larger datasets. The relatively small existing datasets thus limit the amount of label skew that is possible to observe, but as an artifact rather than a property of real text.

To address this, we collected a new dataset for evaluating active learning methods for word sense disambiguation. We took a large publicly-available corpus of Reddit%
comments \citep{reddit2015} and leveraged the fact that some words will exhibit a "[o]ne sense per discourse" \citep{gale1992one} effect: discussions in different subreddits will typically use different senses of a word. Taking this assumption to the extreme, we label all applicable sentences in each subreddit with the same word sense. For example, we consider occurrences of the word ``bass'' to refer to the fish in the \texttt{r/fishing} subreddit, but to refer to low frequencies in the \texttt{r/guitar} subreddit. 
Obviously, this approach produces an imperfect labelling; e.g., it does not distinguish between nouns like ``The bass in this song is amazing'' and the same word's use as an adjective as in ``I prefer playing bass guitar'', and it assumes that people never discuss music in a fishing forum. Nevertheless, this approach allows us to evaluate more extreme skew in the label distribution than  would otherwise have been possible. Overall, our goal was to obtain realistic statistical structure across word senses in a way that can leverage existing embeddings, not to maximize accuracy in labeling word senses. 

We chose the words by listing the top 1000 most frequently used words in the top 1000 most commented subreddits, and manually looking for words whose sense clearly correlated with the particular subreddit. Table \ref{tab:my_label} lists the 21 words that we identified in this way and the associated subreddits that we used to determine labels. %
For each of the words, we used an example sentence from each target sense from \cite{lexico} as exemplar sentences.

\paragraph{Setup} All experiments used Scikit Learn \citep{scikit-learn}'s multi-class logistic regression classifier with default regularization parameters on top of BERT \citep{devlin2018bert} embeddings of the target word. We used Huggingface's Transformer library \citep{Wolf2019HuggingFacesTS} to collect the \texttt{bert-large-cased} embeddings of the target word in the context of the sentence in which it was used. This embedding gives a 1024 dimensional feature vector for each example. We repeated every experiment with 100 different random seeds report the mean and $95\%$ confidence intervals\footnote{Computed using a Normal approximation to the mean performance.} on test set accuracy. The test set had an equal number of samples from each class that exceeded the threshold. 

\begin{figure*}[t]
  \begin{center}
    \includegraphics[width=0.32\textwidth]{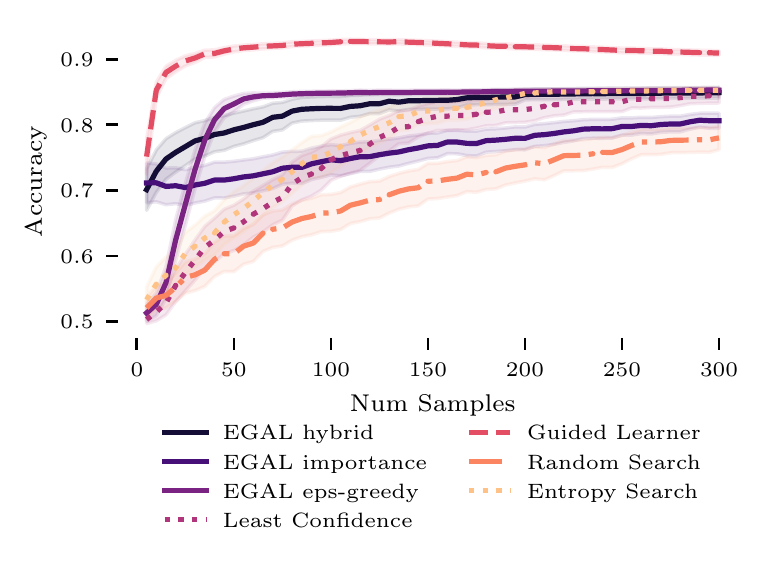}
  \includegraphics[width=0.32\textwidth]{./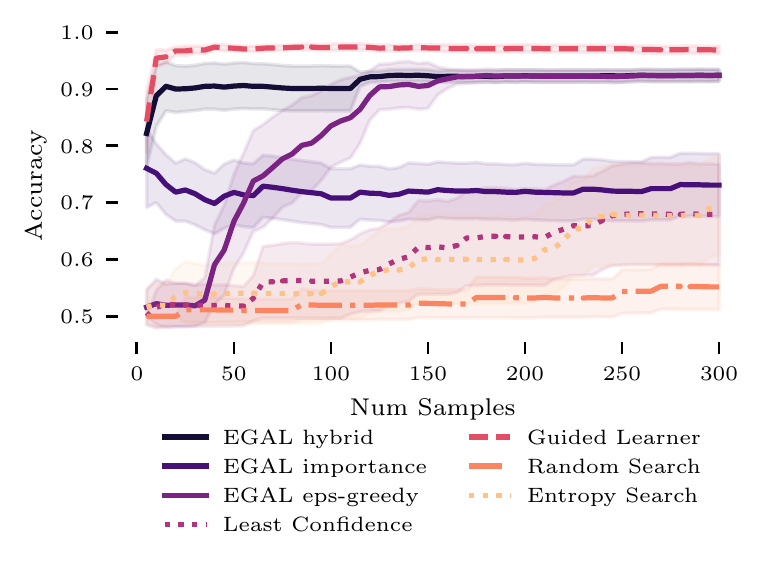}
   \includegraphics[width=0.32\textwidth]{./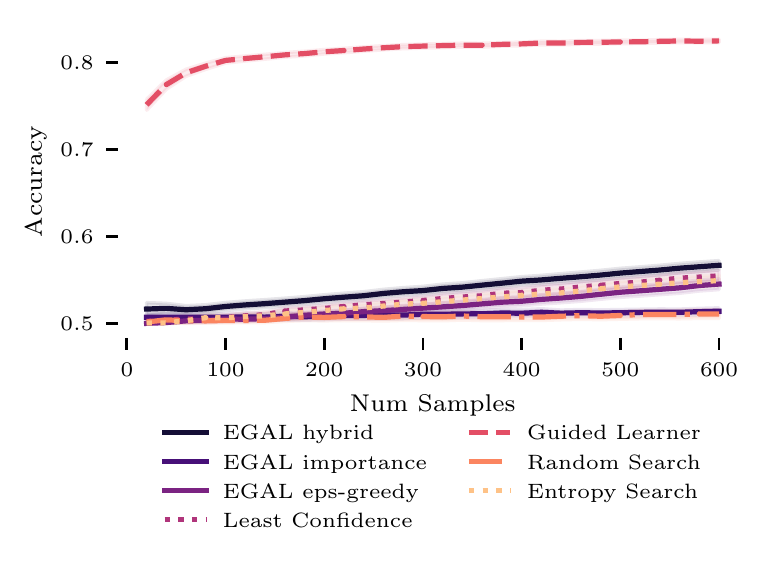}
\caption{Accuracy vs number of samples for \emph{bass} (left), \emph{bank} (middle) and \emph{fit} (right), having label skew of 1:60, 1:450 and 1:100 respectively. The word \emph{bass} is a case where EGAL achieves significant gains with few samples; these gains are eventually evened out once the standard active learning methods gain sufficient samples. \emph{Bank} has both a high quality exemplar and extreme skew, leading to large gains by using EGAL. \emph{Fit} shows a failure case where EGAL's performance does not differ significantly from standard approaches.}
\label{fig:performance}
  \end{center}
\end{figure*}

\paragraph{Results}
We compare three versions of the Exemplar-Guided Active Learning (EGAL) algorithm relative to uncertainty driven active learning methods (Least Confidence and Entropy Search) and random search.
\autoref{fig:averages} gives aggregate performance of the various approaches, aggregated across 100 random seeds and the 21 words. We found the hybrid of importance sampling for guided search and $\epsilon$-greedy active learning worked best across a variety of datasets. This EGAL hybrid approach outperformed all baselines for all levels of skew, with the largest relative gains at 1:200: with 100 examples labeled, EGAL had an increase in accuracy of 11\% over random search and 5\% over the active learning approaches.

In Figure \ref{fig:performance} we examine performance on three individual words and include guided learning as an oracle upper bound on the performance improve that could be achieved by a perfect exemplar search routine. %
On average guided learning achieved over $80-90\%$ accuracy on a balanced test for both tasks using less than ten samples. By contrast, random search achieved $55\%$ and $80\%$ accuracy on \textit{bass} and \textit{bank} respectively, and did not perform better than random guessing on \textit{fit}. This suggests that the key challenge for all of these datasets is collecting balanced examples.
For the first two of these three datasets, having access to an exemplar sentence gave the EGAL algorithms a significant advantage over the standard approaches; this difference was most stark on the \textit{bank} dataset, which exhibits far more extreme skew in the label distribution. On the \emph{fit} dataset, EGAL did not significantly improve performance, but also did not hurt performance. These trends were typical (see the appendix for all words): on two thirds of the words we tried, EGAL achieved significant improvements in accuracy, while on the remaining third EGAL offered no significant improvements but also no cost as compared to standard approaches.
As with guided learning, direct comparisons between the methods are not on equal footing: the exemplar classes give EGAL more information than the standard methods have access to. However, we regard this as the key experimental point of our paper. EGAL provides a simple approach to getting potentially large improvements in performance when the label distribution is skewed, without sacrificing performance in settings where it fails to provide a benefit.

\section{Conclusions}

We present the Exemplar Guided Active Learning algorithm that leverages the embedding spaces of large scale language models to drastically improve active learning algorithms on skewed data. We support the empirical results with theory that shows that the method is robust to mis-specified target classes and give practical guidance on its usage. Beyond word-sense disambiguation, we are now using EGAL to collect multi-word expression data, which shares the extreme skew property.

\section*{Broader Impact}

This paper presents a method for better directing an annotation budget towards rare classes, with particular application to problems in NLP. The result could be more money spent on annotation because such efforts are more worthwhile (increasing employment) or less money spent on annotation if ``brute force'' approaches become less necessary (reducing employment). We think the former is more likely overall, but both are possible. Better annotation could lead to better language models, with uncertain social impact: machine reading and writing technologies can help language learners and knowledge workers, increasing productivity, but can also fuel various negative trends including misinformation, bots impersonating humans on social networks, and plagiarism. 

\bibliographystyle{icml2020}
\bibliography{refs}
\clearpage

\newpage
\appendix

\section{Appendix}

\subsection{Analysis}
The analysis of our regret bound uses the confidence bounds implied by Hoeffding's inequality which we state below for completeness. We focus on the $\epsilon-$greedy instantiation of the algorithm which gives estimates of $\hat{p}_y$ which are bounded between $0$ and $1$. The tighter confidence intervals used in Algorithm \ref{Frequentist} improve constants but at the expense of clarity. 

\begin{lemma}[Hoeffding's's inequality]
Let $X_1, ..., X_n$ be independent random variables bounded by the interval $[0, 1]: 0 \leq X_i \leq 1$ with mean $\mu$ and empirical mean $\bar{X}$. Then for $\epsilon>0$,
$
\Prob{\bar{X} - \mu \geq \epsilon}\leq \exp(-2n\epsilon^2).
$
\label{hoeff}
\end{lemma}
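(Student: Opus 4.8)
The plan is to prove the bound by the standard Chernoff exponential-moment method, reducing the tail probability to a product of moment generating functions (MGFs) and then controlling each factor with Hoeffding's lemma. First I would write $\mu = \frac{1}{n}\sum_{i=1}^n \E{X_i}$ and, for an arbitrary free parameter $s > 0$, apply Markov's inequality to the nonnegative random variable $\exp(s(\bar X - \mu))$:
\begin{equation*}
\Prob{\bar X - \mu \geq \epsilon} = \Prob{e^{s(\bar X - \mu)} \geq e^{s\epsilon}} \leq e^{-s\epsilon}\,\E{e^{s(\bar X - \mu)}}.
\end{equation*}
Because the $X_i$ are independent, the centered variables $X_i - \E{X_i}$ are independent too, so the MGF factorizes as $\E{e^{s(\bar X - \mu)}} = \prod_{i=1}^n \E{e^{(s/n)(X_i - \E{X_i})}}$, and the problem reduces to bounding a single factor.

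The crux is exactly Hoeffding's lemma: for a mean-zero random variable $Y$ supported in an interval of length $\ell$, one has $\E{e^{tY}} \leq e^{t^2 \ell^2 / 8}$. Each centered variable $X_i - \E{X_i}$ lies in $[-\E{X_i},\, 1 - \E{X_i}]$, an interval of length exactly $1$ since $X_i \in [0,1]$, so with $t = s/n$ each factor is at most $e^{s^2/(8n^2)}$ and the full product is at most $e^{s^2/(8n)}$. I expect proving Hoeffding's lemma itself to be the main obstacle, and I would establish it by the log-MGF (exponential tilting) argument. Setting $\psi(t) = \log \E{e^{tY}}$, one checks $\psi(0) = 0$ and $\psi'(0) = \E{Y} = 0$, and then identifies $\psi''(t)$ as the variance of $Y$ under the tilted measure $d\tilde{\mathbb{P}} \propto e^{tY}\,d\mathbb{P}$. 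Since this tilted law is still supported in an interval of length $\ell$, Popoviciu's inequality bounds its variance by $\ell^2/4$, whence $\psi''(t) \leq \ell^2/4$ for all $t$; Taylor's theorem with remainder then gives $\psi(t) \leq t^2 \ell^2 / 8$, which is the claimed MGF bound.

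Combining these pieces yields, for every $s > 0$,
\begin{equation*}
\Prob{\bar X - \mu \geq \epsilon} \leq \exp\!\left(-s\epsilon + \frac{s^2}{8n}\right).
\end{equation*}
The final step is to optimize the free parameter: the exponent is a convex quadratic in $s$ minimized at $s = 4n\epsilon$, and substituting this value collapses the exponent to $-4n\epsilon^2 + 2n\epsilon^2 = -2n\epsilon^2$, producing the stated bound $\Prob{\bar X - \mu \geq \epsilon} \leq \exp(-2n\epsilon^2)$. Apart from the variance bound inside Hoeffding's lemma, which carries the real content, the remaining work is the routine Chernoff recipe (Markov plus independence) and a one-variable minimization.
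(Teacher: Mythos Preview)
Your proof is correct and follows the standard textbook route (Chernoff's method combined with Hoeffding's lemma). The paper, however, does not prove this statement at all: it merely states Hoeffding's inequality ``for completeness'' as a classical concentration result and then invokes it in the proofs of Lemma~\ref{lemma:empty} and Theorem~4. So there is no proof in the paper to compare against; your argument simply supplies what the authors take as given.
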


Before proving Theorem 1, we give a lemma that bounds the probability that the active set of Algorithm $\ref{Frequentist}$ is non-empty after $T$ steps.

\begin{lemma}
Let $T_y(t)$ denote the number of times we have observed an example from class $y$ after $t$ steps, and $\mathcal{A}_t = \{y: T_y(t)=0, \hat{p}_y + \sigma_y > \threshold\}$ denote the set of ``active'' classes with upper confidence bound $\hat{p}_y + \sigma_y$ above the threshold. Let  $\Delta = \min_i |p_i - \threshold|$ and $T=\left\lceil\frac{\log(1/\delta)}{2(1-c)^2\Delta^2}\right\rceil$ for some $c\in (0,1)$ and $\delta\in (0,1)$.
Then for all $t > T$, $\Prob{\mathcal{A}_t \neq \emptyset }\leq \exp(-2t c^2 \Delta^2)$
\label{lemma:empty}
\end{lemma}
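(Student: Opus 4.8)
The plan is to bound $\Prob{\mathcal{A}_t \neq \emptyset}$ by a union bound over classes and to show that, for each class $y$, membership in $\mathcal{A}_t$ forces the empirical mean $\hat p_y$ to deviate from the true frequency $p_y$ by at least $c\Delta$ --- an event that Hoeffding's inequality (Lemma~\ref{hoeff}) controls by $\exp(-2tc^2\Delta^2)$. The argument splits on the sign of $p_y - \threshold$: since $\Delta = \min_i|p_i - \threshold|$, every class satisfies either $p_y \geq \threshold + \Delta$ (``common'') or $p_y \leq \threshold - \Delta$ (``rare''), and the two types become active for qualitatively different reasons, so I would analyze them separately and then recombine.

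First I would treat the rare classes, for which the binding constraint is the upper-confidence condition $\hat p_y + \sigma_y > \threshold$. The role of the threshold time $T$ is precisely to force the Hoeffding width $\sigma_y = \sqrt{\log(1/\delta)/(2t)}$ below $(1-c)\Delta$: at $t = T$ one has $\sigma_y = (1-c)\Delta$, and since $\sigma_y$ is decreasing in $t$, every $t > T$ gives $\sigma_y < (1-c)\Delta$. For such a class to be active we then need $\hat p_y > \threshold - \sigma_y > \threshold - (1-c)\Delta$, and combining this with $p_y \leq \threshold - \Delta$ yields $\hat p_y - p_y > c\Delta$. This is an upper-tail deviation, bounded by $\exp(-2tc^2\Delta^2)$ via Lemma~\ref{hoeff}.

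Next I would handle the common classes, for which the binding part of the activity condition is $T_y(t) = 0$. Since $\hat p_y = T_y(t)/t$, the event $T_y(t) = 0$ is exactly $\hat p_y = 0$, i.e.\ a lower-tail deviation of size $p_y - \hat p_y = p_y \geq \threshold + \Delta > c\Delta$ (using $c < 1$ and $\threshold > 0$). Applying Hoeffding's inequality to $1 - X_i$ (also supported on $[0,1]$) gives $\Prob{T_y(t) = 0} \leq \exp(-2tc^2\Delta^2)$ as well. Combining the two cases and taking a union bound over the at most $K$ classes gives the stated bound; the multiplicative class count does not affect the exponential rate and can be folded into the constant (or the statement read per worst-case class).

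The main obstacle I anticipate is the bookkeeping that converts the two-part definition of $\mathcal{A}_t$ into a single clean $c\Delta$-deviation: one must verify that the upper-confidence condition is the operative constraint for rare classes while the zero-count condition is operative for common ones, and confirm that the hypothesis $t > T$ --- equivalently $\sigma_y < (1-c)\Delta$ --- is exactly what is needed so that ``active'' implies a deviation of magnitude $c\Delta$ rather than something strictly smaller (which would fail to match the target exponent). Once that reduction is in place, the Hoeffding step itself is immediate.
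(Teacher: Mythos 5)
Your proposal is correct in its mechanics and rests on the same two ingredients as the paper's proof: the choice of $T$ forces the confidence width $\ci{t}$ below $(1-c)\Delta$ for all $t>T$, and a Hoeffding deviation of size $c\Delta$ then delivers the exponential rate. But your decomposition of the event $\{\mathcal{A}_t\neq\emptyset\}$ is genuinely different, and it is the more careful of the two. The paper drops the $\{T_y(t)=0\}$ condition, reduces to the event that the largest upper confidence bound among below-threshold classes exceeds $\threshold$, and then applies Hoeffding directly to the maximizing class $y_{\text{max}}$; classes with $p_y>\threshold$ never enter its argument. You instead union-bound over classes and split on the sign of $p_y-\threshold$: rare classes give the same upper-tail deviation that is the paper's central step, while common classes are controlled through the lower-tail event $\hat{p}_y=0$. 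That second case fills a real hole in the paper's proof: an above-threshold class that happens to be unobserved genuinely lies in $\mathcal{A}_t$ (its upper confidence bound is $\ci{t}$, which exceeds $\threshold$ in any regime where $\mathcal{A}_t$ can be non-empty at all), and restricting the maximum to below-threshold classes silently discards it. Your explicit union bound is likewise not an avoidable inelegance: the paper's application of Hoeffding to the data-dependent index $y_{\text{max}}$ treats a random class as a fixed one, and a union bound over classes --- with its factor of $K$ --- is precisely what is needed to license that step. So the weaker conclusion you reach, $K\exp(-2tc^2\Delta^2)$ in place of $\exp(-2tc^2\Delta^2)$, is the honest price of rigor rather than a defect: the class-count factor cannot in general be dropped (take many equiprobable above-threshold classes, $\threshold$ tiny, and $\delta$ near $1$; then $T$ is a small constant, and at $t=T+1$ some class is still unobserved --- and hence active, since $\ci{t}>\threshold$ --- with probability essentially $1$, exceeding the constant-free bound), and it only perturbs constants, not the logarithmic rate, when the lemma is fed into the regret theorem. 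One simplification you missed: had you kept the $\{T_y(t)=0\}$ condition for rare classes rather than dropping it, you would have seen that their contribution is deterministically zero for $t>T$, since $T_y(t)=0$ forces $\hat{p}_y=0$, activity then requires $\ci{t}>\threshold$, and $t>T$ gives $\ci{t}<(1-c)\Delta\leq(1-c)\threshold<\threshold$ whenever at least one rare class exists.
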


\begin{proof}
  By the choice of $T$, for all $c$ and $t \geq T$, 
  \begin{align}
      \Delta - \ci{t} \geq c\Delta.
      \label{eqn:deltac}
  \end{align} 
  
  Now,
    \begin{align*}
      \Prob{\mathcal{A}_t \neq \emptyset} &=\mathbb{P}\Big[\begin{aligned}[t]&\exists y\quad s.t.\quad \{T_y(t) = 0\}\\
      &\cap \,\{ \hat{p}_y(t) + \ci{t} > \threshold \}\Big]\end{aligned}\\
      &\leq \Prob{ \max_{y: \hat{p}_y < \threshold}\hat{p}_y(t) + \ci{t} > \threshold }\\
      &= \Prob{ \hat{p}_{y_{\text{max}}}(t) -  p_{y_{\text{max}}}  > \Delta_{y_{\text{max}}} -  \ci{t}}\\
      &\leq \Prob{\hat{p}_{y_{\text{max}}}(t)- p_{y_{\text{max}}} > c\Delta}\\
      &\leq  \exp(-2tc^2\Delta^2 )
  \end{align*}{}
\end{proof}

The first inequality follows by dropping the $\{T_y(t) = 0\}$ event and noting that if $\{ \hat{p}_y(t) + \ci{t} > \threshold \}$ then the upper confidence bound of the largest $\hat{p}_y(t)$ estimate must also exceed the threshold. We denote this estimate
$\hat{p}_{y_{\text{max}}}(t)$ 
and its corresponding $\Delta_{y_{\text{max}}}$ values and true means $p_{y_{\text{max}}}$ analogously. The second inequality follows from Equation \ref{eqn:deltac} the fact that $\Delta_{y_{\text{max}}}\geq \Delta$. Finally, the third inequality applies Hoeffding's inequality (Lemma \ref{hoeff}) with $\epsilon = c\Delta$. 

\subsection{Proof of Theorem 4}
\begin{proof}
Because the loss function is 0 for all classes that fall below the threshold, the  difference in performance between the oracle and Algorithm \ref{Frequentist} is only a function of the difference between the number of times each algorithm samples the unthresholded classes. Let $n$ denote a finite horizon, $T_y(t)$ denote the number of times that an example from class $y$ is selected by Algorithm \ref{Frequentist} after $t$ iterations and define the event that the algorithm does not treat a class as thresholded after $n$ iterations as, $E_y = \{\min_{t\in [n]}\hat{p}_y + \ci{t}\geq\threshold\}$. 
 
By summing over the unthresholded classes, we can decompose regret as follows,
  \begin{align*}
    \regret &=\begin{aligned}[t]
      \sum_{y:p_y \geq \threshold} &\E{\mathbf{1}(E_y^c)\left[\U(T^*_{y}(n)) - \U(T_{y}(n))\right]} +\\ &\E{\mathbf{1}(E_y)\left[U(T^{*}_{y}(n))-U(T_{y}(n))\right] }\\
    \end{aligned}\\
  &\leq \begin{aligned}[t]\sum_{y:p_y \geq \threshold}\Big( &\Prob{E_y^c}U(T^{*}_{y}(n)) +\\ &\E{U(T^{*}_{y}(n))-U(T_{y}(n))\Big | E_y}\Bigr)\end{aligned}\\
  \end{align*}

  where the first term measures regret from falsely declaring the class as falling below the threshold, and the second accounts for the difference in the number of samples selected by the oracle and Algorithm \ref{Frequentist}. 
  The inequality follows by upper-bounding the difference between the oracle and algorithm, $\U(T^*_{y}(n)) - \U(T_{y}(n))$, with $U(T^{*}_{y}(n))$.

  To bound this regret term, we show the first term is rare and the latter results in bounded regret. 
  First, consider the events for which the class is falsely declared as being below the threshold. We bound this by noting that for all $t\in [n]$, 
  \begin{align*}
    &\Prob{\hat{p}_y + \ci{t}<\threshold \Big | p_y > \threshold} \\
    \leq  &\Prob{\hat{p}_y + \ci{t}<p_y \Big | p_y > \threshold} \leq  \delta \\
  \end{align*}
  The first inequality uses the fact that $p_y > \threshold$ and the second applies the concentration bound. 
  
  In the complement of this event, the class is correctly maintained among the active classes so all we have to consider is how far the expected number of times that class is selected deviates from the number of times it would have been selected under the optimal policy. By assuming a concave utility function, we are assuming diminishing returns to the number of samples that you collect for each sample. This implies that we can bound $U(T^{*}_{y}(n))-U(T_{y}(n))\leq T^{*}_{y}(n)-T_{y}(n)$. 
  
  Let $\tlast$ denote the iteration in which the last class is removed from the active set, such that $\mathcal{A}_{\tlast-1} \neq \emptyset$ and $\mathcal{A}_{\tlast} =\emptyset$. This is the step at which regret is maximized since it is the point at which the two approaches have the largest differences in the number of samples for the unthresholded states. For any subsequent step the Algorithm \ref{Frequentist} is unconstrained in its behavior (since it no longer has to sample) and can make up some of the difference in performance with the oracle because there are diminishing returns to performance from collecting more samples. 
  Note that for any class $y$ with $p_y > \threshold$ that is correctly retained, we know that $\hat{p}_y - \ci{\tlast}>\threshold$ and so it must have been selected at least $\threshold \tlast$ times. 
  The optimal strategy would have selected the class at most $\frac{\tlast}{k(\threshold)}$ times, where $k(\threshold)$ is the number of classes that exceed the threshold, so the expected difference between $T_y(\tlast)$ and $T^*_y(\tlast)$ is at most $(\frac{1}{k(\threshold)} - \threshold) \tlast = f(\threshold)\tlast$ for some $f(\threshold)\in [0,1]$ that depends only on the choice of $\threshold$. %
  
  Because of this, bounding under-sampling regret reduces to bounding, $\tlast$, the number of uniform samples the algorithm draws before all class are declared inactive. To satisfy the conditions of Lemma \ref{lemma:empty}, assume the algorithm draws at least $T > \frac{\log(1/\delta)}{2(1-c)^2\Delta^2}$ uniform samples, and thereafter $\Prob{\mathcal{A}_t \neq\emptyset}\leq 2\exp(-2tc^2 \Delta^2)$. 
  
  \begin{align*}
    \E{\tlast} &\leq T + \sum_{t=T+1}^n  \Prob{\mathcal{A}_{t-1} \neq \emptyset \cap \mathcal{A}_{t} = \emptyset}\\ &\leq T + \sum_{t=T+1}^n  \Prob{\mathcal{A}_{t-1} \neq \emptyset}\\
    &\leq T +  \sum_{t=T+1}^n \exp(-2 t c^2 \Delta^2)\\
    &\leq T +\exp(-2 c^2 T\Delta^2) \frac{1}{1-\exp(-2c^2 \Delta^2) } \\
    &\leq T + \exp(-2  c^2 T\Delta^2)[ \frac{1}{2c^2 \Delta^2 }+1]\\
  \end{align*}
  The second last inequality uses the fact that the sum is a geometric series (and takes $n\rightarrow\infty$), and the final inequality uses the identity $\frac{1}{1-\exp(-a)}\leq 1+\frac{1}{a}$ for all $a$.
  
  Putting this together, we get,
  \begin{align*}
    \regret &\leq \begin{aligned}[t]\sum_{y:p_y \geq \threshold}\Big( &\Prob{E_y^c}U(T^{*}_{y}(n)) +\\ &\E{U(T^{*}_{y}(n))-U(T_{y}(n))\Big | E_y}\Bigr)\end{aligned}\\
    &\leq \begin{aligned}&k(\threshold)\left[\frac{n}{k(\threshold)} \delta + T + \exp(-2 (T) c^2 \Delta^2)[ \frac{1}{2c^2 \Delta^2 }+1] \right]\end{aligned} \\
    &= \begin{aligned}&1 + k(\threshold)[\frac{2\log(n)}{\Delta^2} +\frac{2 + \Delta^2 }{ n\Delta^2 } ] \end{aligned} \\
  \end{align*}
  
  Where inequality uses the results above and the equality follows from substituting the expression for $T$, collecting like terms, setting  $\delta = \frac{1}{n}$ and $c=\frac{1}{2}$, and collects like terms.%
  
  \end{proof}{}

\section{Experimental details}

\begin{table}[]
    \centering
\resizebox{\columnwidth}{!}{%
\begin{tabular}{lp{2.5cm}lp{2.5cm}lp{2.9cm}p{2.9cm}}
\toprule
Word &                                             Sense 1 &               n &                                             Sense 2 &              n &                                                                                                                       Sense 1 Exemplar &                                                                                                                                         Sense 2 Exemplar \\
\midrule
back     &                        \smaller{\texttt{r/Fitness}} & \smaller{106764} &                    \smaller{\texttt{r/legaladvice}} & \smaller{18019} &                                         \tiny{He had huge shoulders and a broad back which tapered to an extraordinarily small waist.} &                                                                      \tiny{Marcia is not interested in getting her job back, but wishes to warn others.} \\
bank$^*$   &                          \smaller{\texttt{r/personalfinance}} &  \smaller{22408} &                       {\smaller\texttt{r/canoeing}}, {\smaller\texttt{r/fishing}},  {\smaller\texttt{r/kayaking}}, {\smaller\texttt{r/rivers}}
& \smaller{113} &    \tiny{That begs the question as to whether our money would be safer under the mattress or in a bank deposit account than invested in shares, unit trusts or pension schemes.} &                                         \tiny{These figures for the most part do not include freshwater wetlands along the shores of lakes, the bank of rivers, in estuaries and along the marine coasts.} \\
bass    &                          \smaller{\texttt{r/guitar}} &  \smaller{3237} &                       \smaller{\texttt{r/fishing}} & \smaller{1816} &    \tiny{The drums lightly tap, the second guitar plays the main melody, and the bass doubles the second guitar} &                                         \tiny{They aren't as big as the Caribbean jewfish or the potato bass of the Indo-Pacific region, though} \\
card     &                \smaller{\texttt{r/personalfinance}} & \smaller{130794} &                       \smaller{\texttt{r/buildapc}} & \smaller{62499} &                                  \tiny{However, if you use your card for a cash withdrawal you will be charged interest from day one.} &                                                          \tiny{Your computer audio card has great sound, so what really matters are your PC's speakers.} \\
case     &                       \smaller{\texttt{r/buildapc}} & \smaller{128945} &                    \smaller{\texttt{r/legaladvice}} & \smaller{22966} &                                                                          \tiny{It comes with a protective carrying case and software.} &                                                             \tiny{The cost of bringing the case to court meant the amount he owed had risen to £962.50.} \\
club     &                         \smaller{\texttt{r/soccer}} & \smaller{113743} &                           \smaller{\texttt{r/golf}} & \smaller{16831} &                                            \tiny{Although he has played some club matches, this will be his initial first-class game.} &                                                  \tiny{The key to good tempo is to keep the club speed the same during the backswing and the downswing.} \\
drive    &                       \smaller{\texttt{r/buildapc}} &  \smaller{52061} &                           \smaller{\texttt{r/golf}} & \smaller{48854} &                                     \tiny{This means you can record to the hard drive for temporary storage or to DVDs for archiving.} &      \tiny{If we're out in the car, lost in an area we've never visited before, he would rather we drive round aimlessly for hours than ask directions.} \\
fit      &                        \smaller{\texttt{r/Fitness}} &  \smaller{75158} &              \smaller{\texttt{r/malefashionadvice}} & \smaller{16685} &                           \tiny{The only way to get fit is to make exercise a regularly scheduled part of every week, even every day.} &                                                              \tiny{The trousers were a little long in the leg but other than that the clothes fit fine.} \\
goals    &                         \smaller{\texttt{r/soccer}} &  \smaller{87831} &  \smaller{\texttt{r/politics} \texttt{r/economics}} &  \smaller{2486} &                               \tiny{For the record, the Brazilian Ronaldo scored two goals in that World Cup final win two years ago.} &                       \tiny{As Africa attempts to achieve ambitious millennium development goals, many critical challenges confront healthcare systems.} \\
hard     &                       \smaller{\texttt{r/buildapc}} &  \smaller{63090} &                         \smaller{\texttt{r/gaming}} & \smaller{34499} &                                   \tiny{That brings me to my next point: never ever attempt to write to a hard drive that is corrupt.} &                                                                                  \tiny{It's always hard to predict exactly what the challenges will be.} \\
hero     &                          \smaller{\texttt{r/DotA2}} & \smaller{198515} &                      \smaller{\texttt{r/worldnews}} &  \smaller{5816} &                                                                                \tiny{Axe is a more useful hero than the Night Stalker} &                                  \tiny{As a nation, we ought to be thankful for the courage of this unsung hero who sacrificed much to protect society.} \\
house    &                \smaller{\texttt{r/personalfinance}} &  \smaller{46615} &                  \smaller{\texttt{r/gameofthrones}} & \smaller{11038} &                   \tiny{Inside, the house is on three storeys, with the ground floor including a drawing room, study and dining room.} &  \tiny{After Robert's Rebellion, House Baratheon split into three branches: Lord Robert Baratheon was crowned king and took residence at King's Landing} \\
interest &                \smaller{\texttt{r/personalfinance}} &  \smaller{64394} &                          \smaller{\texttt{r/music}} &  \smaller{1658} &                                                 \tiny{The bank will not lend money, and interest payments and receipts are forbidden.} &                                    \tiny{The group gig together about four times a week and have attracted considerable interest from record companies.} \\
jobs     &  \smaller{\texttt{r/politics} \texttt{r/economics}} &  \smaller{42540} &                          \smaller{\texttt{r/apple}} &  \smaller{3642} &                                                    \tiny{In Kabul, they usually have low-paying, menial jobs such as janitorial work.} &                                                                        \tiny{Steve Jobs demonstrating the iPhone 4 to Russian President Dmitry Medvedev} \\
magic    &                         \smaller{\texttt{r/skyrim}} &  \smaller{30314} &                            \smaller{\texttt{r/nba}} &  \smaller{6655} &  \tiny{Commonly, sorcerers might carry a magic implement to store power in, so the recitation of a whole spell wouldn't be necessary.} &                                    \tiny{Earvin Magic Johnson dominated the court as one of the world's best basketball players for more than a decade.} \\
manual   &                           \smaller{\texttt{r/cars}} &   \smaller{2752} &                       \smaller{\texttt{r/buildapc}} &   \smaller{563} &       \tiny{Power is handled by a five-speed manual gearbox that is beefed up, along with the clutch - make that a very heavy clutch.} &             \tiny{What is going through my head, is that this guy is reading the instructions directly from the manual, which I can now recite by rote.} \\
market   &                      \smaller{\texttt{r/investing}} &  \smaller{23563} &                      \smaller{\texttt{r/nutrition}} &   \smaller{150} &                                              \tiny{It's very well established that the U.S. stock market often leads foreign markets.} &        \tiny{I have seen dandelion leaves on sale in a French market and they make a tasty addition to salads - again they have to be young and tender.} \\
memory   &                       \smaller{\texttt{r/buildapc}} &  \smaller{21103} &   \smaller{\texttt{r/cogsci} \texttt{r/psychology}} &   \smaller{433} &                                   \tiny{Thanks to virtual memory technology, software can use more memory than is physically present.} &                               \tiny{Their memory for both items and the associated remember or forget cues was then tested with recall and recognition.} \\
ride     &                          \smaller{\texttt{r/drums}} &  \smaller{36240} &                      \smaller{\texttt{r/bicycling}} &  \smaller{5638} &                                                    \tiny{I learned to play on a kit with a hi-hat, a crash cymbal, and a ride cymbal.} &                                                \tiny{It is, for example, a great deal easier to demonstrate how to ride a bicycle than to verbalize it.} \\
stick    &                          \smaller{\texttt{r/drums}} &  \smaller{36486} &                         \smaller{\texttt{r/hockey}} &  \smaller{2865} &                        \tiny{Oskar, disgusted that the singing children are so undisciplined, pulls out his stick and begins to drum.} &                                           \tiny{Though the majority of players use a one-piece stick, the curve of the blade still often requires work.} \\
tank     &           \tiny{\texttt{r/electroniccigarette}} &  \smaller{64978} &                   \smaller{\texttt{r/WorldofTanks}} & \smaller{47572} &                        \tiny{There are temperature controlled fermenters and a storage tank, and good new oak barrels for maturation.} &   \tiny{First, the front warhead destroys the reactive armour and then the rear warhead has free passage to penetrate the main body armour of the tank.} \\
trade    &                      \smaller{\texttt{r/investing}} &  \smaller{18399} &                        \smaller{\texttt{r/pokemon}} &  \smaller{5673} &                                           \tiny{This values the company, whose shares trade at 247p, at 16 times prospective profits.} &                                                                                                       \tiny{Do you think she wants to trade with anyone} \\
video    &                          \smaller{\texttt{r/Music}} &  \smaller{51497} &                       \smaller{\texttt{r/buildapc}} & \smaller{23217} &    \tiny{Besides, how can you resist a band that makes a video in which they rock their guts out while naked and flat on their backs?} &                                         \tiny{You do need to make certain that your system is equipped with a good-quality video card and a sound card.} \\
\bottomrule
\end{tabular}
}
    \caption{Target words with associated subreddits and exemplar sentences. $^*$For the ``bank'' word, we manually removed all comments of the form, ``this won't break the bank",  that we would otherwise have classified as referring to river banks.}
    \label{tab:my_label}
\end{table}

Table \ref{tab:my_label} shows all the words used for the experiments, as well as the target subreddits, exemplar classes and the number of examples from each of the classes. We subsampled the rare classes in order to achieve target skew ratios for the datasets.

We collected the words by extracting all usages of the target words in the target subreddits from the months of January 2014 to the end of May 2015. This required parsing 450GB of \citeauthor{reddit2015} dataset. All the embeddings were collected using a single GPU and the active learning experiments were run on the CPU nodes of a compute cluster with $<30$ nodes.

\section{Additional experiments}

\subsection{Synthetic data for embedding quality}
Because EGAL depends on embeddings to search the neighbourhood of a given exemplar, it is likely that performance depends on the quality of the embedding. We would expect better performance from embeddings that leads to better separated the classes because as classes become better separated, it become correspondingly more likely that a single exemplar will locate the target class.
To evaluate this, we constructed a synthetic `skew MNIST' dataset as follows: the training set of the MNIST dataset is subsampled such that the classes have empirical frequency ranging from $0.6\%$ of the data (101 training examples) to $37.4\%$ (6501 training examples) for the most frequent class. 
The distribution was chosen to be as skew as possible given that the most frequent MNIST digit has 6501 training examples. The test set was kept as the original MNIST test set which is roughly balanced, so as before, the task is to transfer to a setting with balanced labels. 

We evaluated the effect of different quality embeddings on performance, by using the final hidden layer of a convolution network trained for a different number of epochs on the full MNIST dataset. With zero epochs, this just corresponds to a random projection via a convolution network---so we would expect the classes to be poorly separated---but with more epochs the classes become more separable as the weights of the convolutional network are optimized to enable classification via the final layer's linear classifier. We evaluated the performance of a multi-class logistic regression classifier on embeddings from 0, 1 and 10 epochs of training. 

Figure 3 shows the results. With a poor quality embedding that fails to separate classes, EGAL performed the same as the least confidence algorithm. As the quality of the embedding improved, we see the advantage of early samples from all the classes: the guided search procedure lead to class separation in very few samples, which can then be refined via the least confidence algorithm. 

\begin{figure}[t]
  \begin{center}
  \includegraphics{./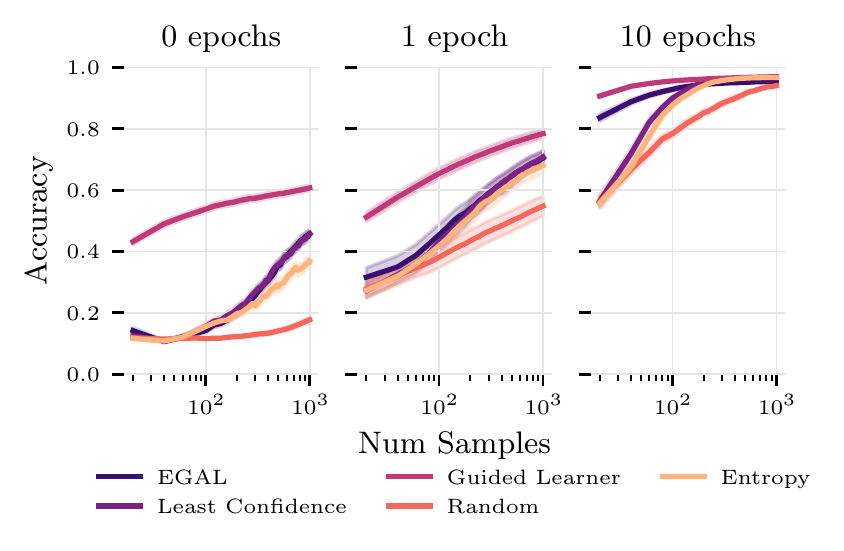}
  \caption{Accuracy for for the methods evaluated with different quality embeddings simulated by training a convolution network for 0, 1 and 10 epochs respectively. From left to right we see the difference in performance with improving quality of embedding. The left plot measures performance of logistic regression on the features from the final hidden layer of a randomly initialized convolutional network. The middle is performance after the convolutional network has been trained for a single epoch, and right is after 10 epochs. EGAL offers clear performance improvements with high quality embeddings and no degradation in performance relative to the standard approaches with poor quality embeddings.}
  \end{center}
  \label{fig:mnist-embeddings}
\end{figure}

\subsection{Performance under more moderate skew}

\begin{figure}[t]
  \begin{center}
    \includegraphics[width=0.32\textwidth]{./plots/accuracy-bass_rare-50_bs-5_samples-300}
    \includegraphics[width=0.32\textwidth]{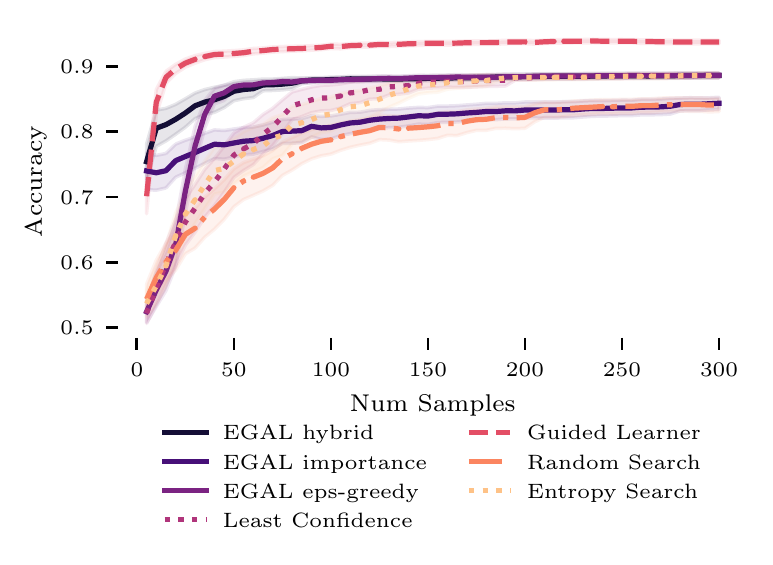}
    \includegraphics[width=0.32\textwidth]{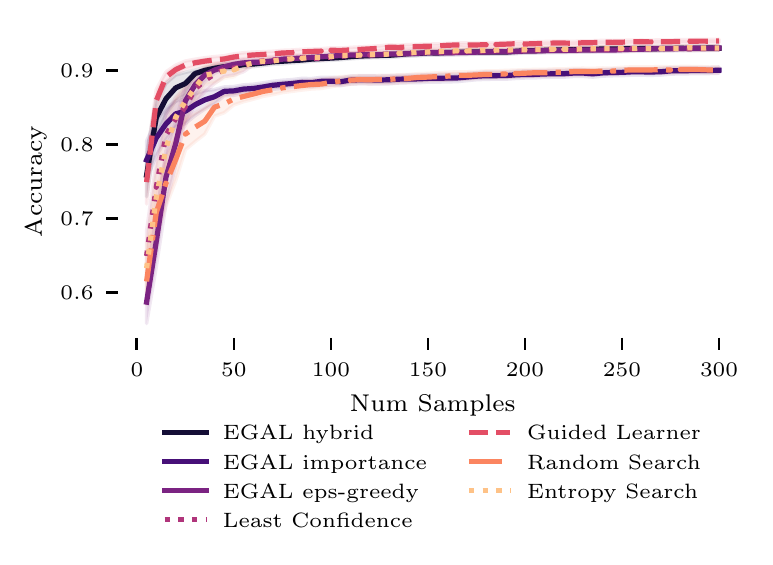}
\caption{Accuracy as we vary the number of rare examples for the \emph{bass}. The left plot has 50 rare examples, the middle plot has 100 and the right plot has 400. As the dataset becomes more balanced, the advantage of having access to an exemplar embedding is diminished.
}
\label{fig:range}
  \end{center}
  
\end{figure}

Figure \ref{fig:range} shows the performance of the various approaches as the label distribution skew becomes less dramatic. The difference between the various approaches is far less pronounced in this more benign case where there was no real advantage to having access to an exemplar embedding because random sampling will quickly find the more rare examples.

\subsection{Imbalance}
We can gain some insight into what is driving the relative performance of the approaches by examining statistics that summarize the class imbalance over the duration of algorithms's executions. We measure imbalance using the following $R^2$-style statistic,
\[
\text{Imbalance}(p) := 1 - \frac{\text{KL}(p, q_{\text{uniform}})}{\text{KL}(q_{\text{empirical}}, q_{\text{uniform})}}
\]
where $q_{\text{empirical}}$ denotes the empirical distribution of target labels, $q_{\text{uniform}}$ is the uniform distribution over target labels and $\text{KL}(p,q) = -\sum_x p(x)\log\frac{ p(x)}{q(x)}$ is the Kullback–Leibler divergence. The score is well-defined as long as the empirical distribution of senses is not uniform, and is calibrated such that any algorithm that returns a dataset that is as imbalanced as sampling uniformly at random attains a score of zero, while algorithms that perfectly balance the data attain a score of 1.

The plots in section \ref{sec:imbalance} show this imbalance score for each of the words. There are a number of observations worth noting: 

\begin{enumerate}
    \item The algorithms with the highest accuracy in figure \ref{fig:performance} and section \ref{sec:accuracy}, also produced a more balanced distribution of senses. Guided learning was naturally the most balanced of the approaches as it has oracle access to the true labels and it explicitly optimizes for balance. 
    \item The importance-weighted EGAL approach over-explores leading to more imbalanced sampling that behaves like uniform sampling. This explains its relatively poor performance.
    \item The standard approaches are more imbalanced because it took a large number of samples for them to see examples of the rare class; once such examples were found, they tended to produce more balanced samples as they selected samples in regions of high uncertainty.
\end{enumerate}

\subsection{Class coverage}

The fact that the standard approaches needed a large number of samples to observe rare classes can also be seen in the plots in section \ref{sec:coverage} which show the proportion of the unthresholded classes ($p_y \geq \threshold$) with at least one example after a given number of samples have been drawn by the respective algorithms.
Again, we see that the standard approaches have to draw a far more samples before observing the rare classes because they don't have an exemplar embedding to guide search. These plots also give some insight into the failure cases. For example, if we compare the word \texttt{manual} to the word \texttt{goal} in figure 6 and figure 16 (which show accuracy and coverage respectively), we see that the exemplar for the word \texttt{goal} resulted in rare classes being found more quickly than in \texttt{manual}. This difference is reflected in a large early improvement in accuracy for EGAL methods over the standard approaches. This suggests that---as one would expect---EGAL offers significant gains when the exemplar leads to fast discovery of rare classes.

\section{All individual words}
\subsection{Accuracy}
\label{sec:accuracy}
\begin{figure}[h]
  \begin{center}
\begin{overpic}[width=0.32\textwidth, trim=0 1.5cm 0 0,clip]{./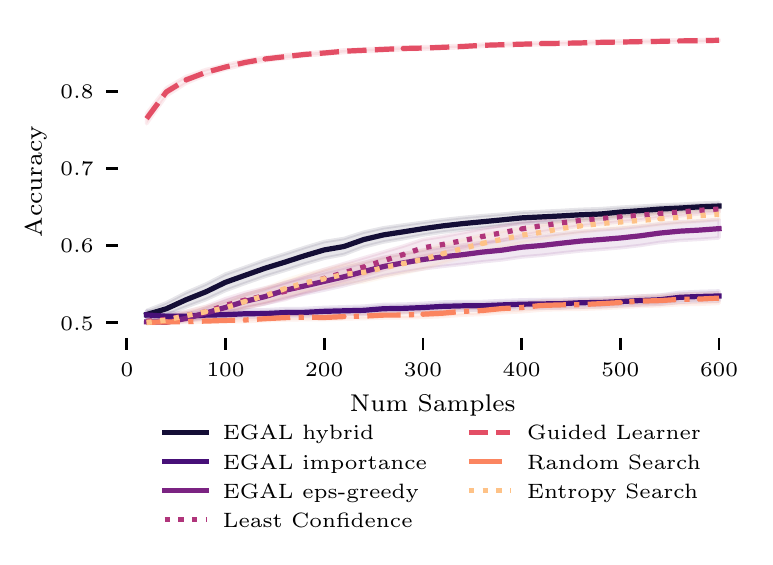}\put(20,35){\tiny{back}}\end{overpic}
\begin{overpic}[width=0.32\textwidth, trim=0 1.5cm 0 0,clip]{./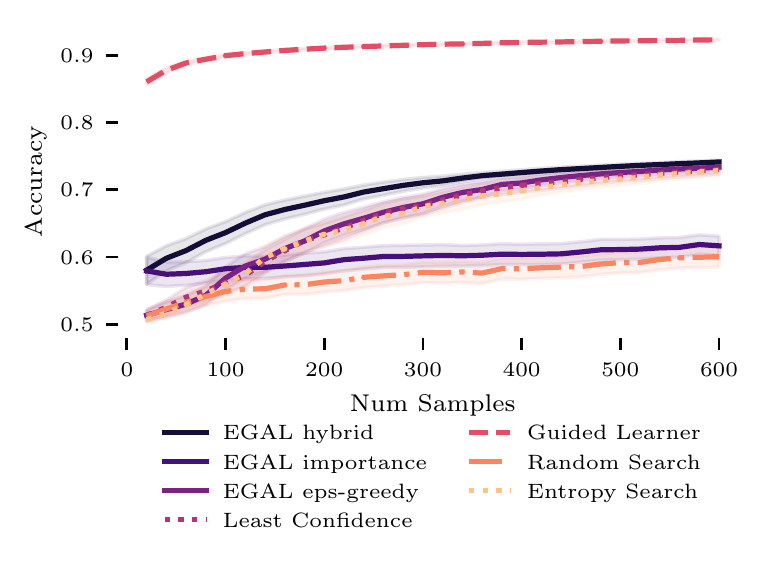}\put(20,35){\tiny{card}}\end{overpic}
\begin{overpic}[width=0.32\textwidth, trim=0 1.5cm 0 0,clip]{./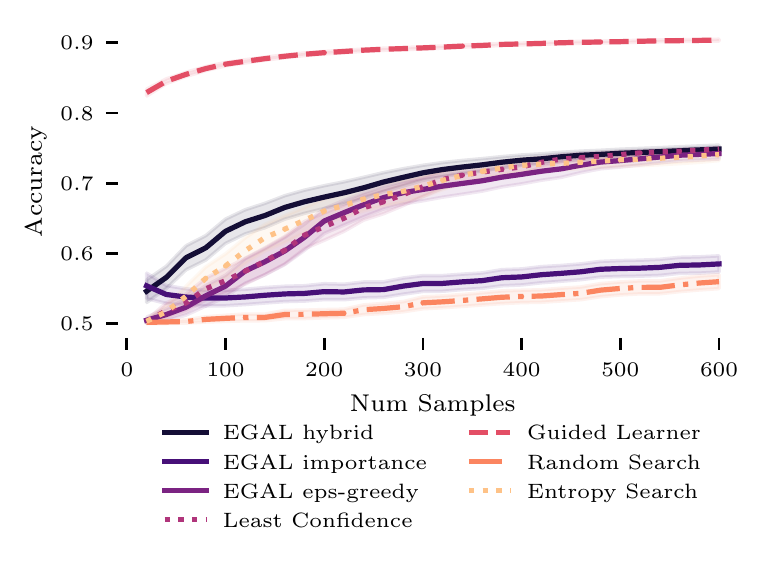}\put(20,35){\tiny{case}}\end{overpic}
\begin{overpic}[width=0.32\textwidth, trim=0 1.5cm 0 0,clip]{./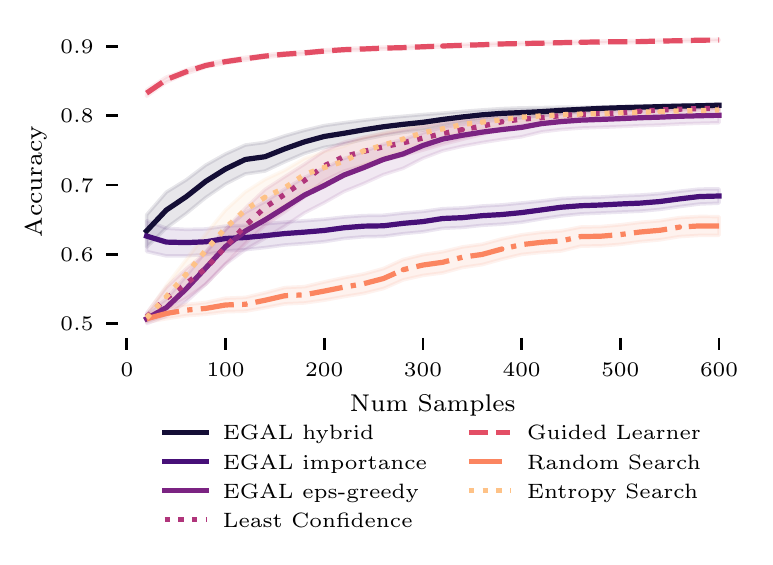}\put(20,35){\tiny{club}}\end{overpic}
\begin{overpic}[width=0.32\textwidth, trim=0 1.5cm 0 0,clip]{./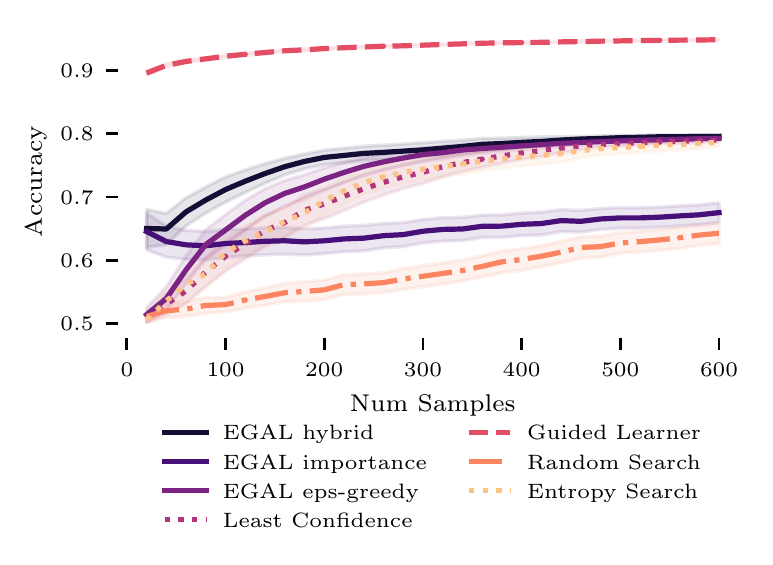}\put(20,35){\tiny{drive}}\end{overpic}
\begin{overpic}[width=0.32\textwidth, trim=0 1.5cm 0 0,clip]{./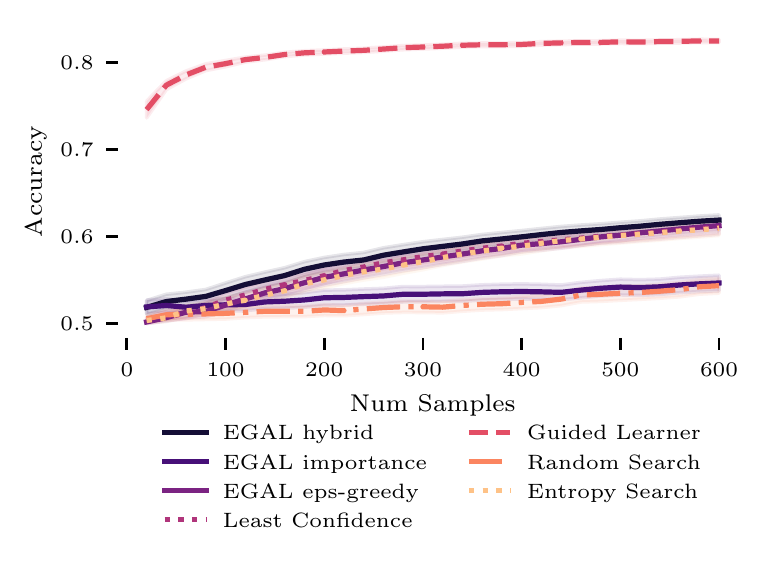}\put(20,35){\tiny{fit}}\end{overpic}
\begin{overpic}[width=0.32\textwidth, trim=0 1.5cm 0 0,clip]{./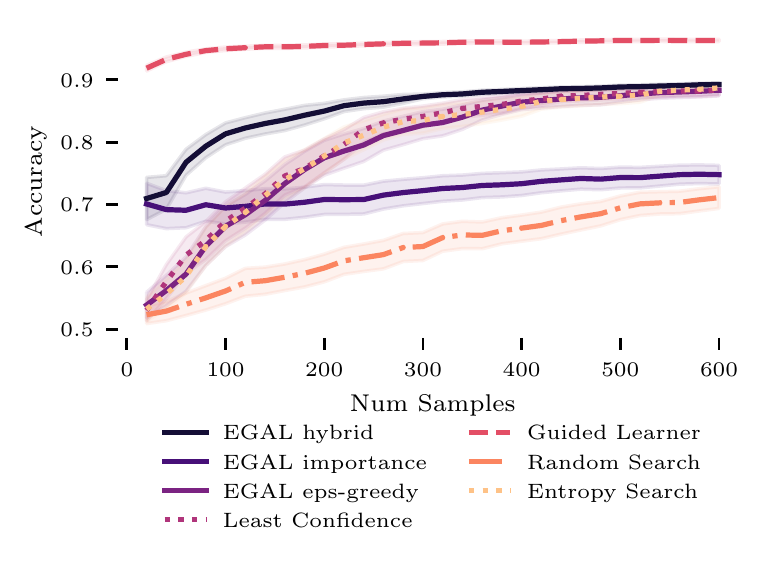}\put(20,35){\tiny{goals}}\end{overpic}
\begin{overpic}[width=0.32\textwidth, trim=0 1.5cm 0 0,clip]{./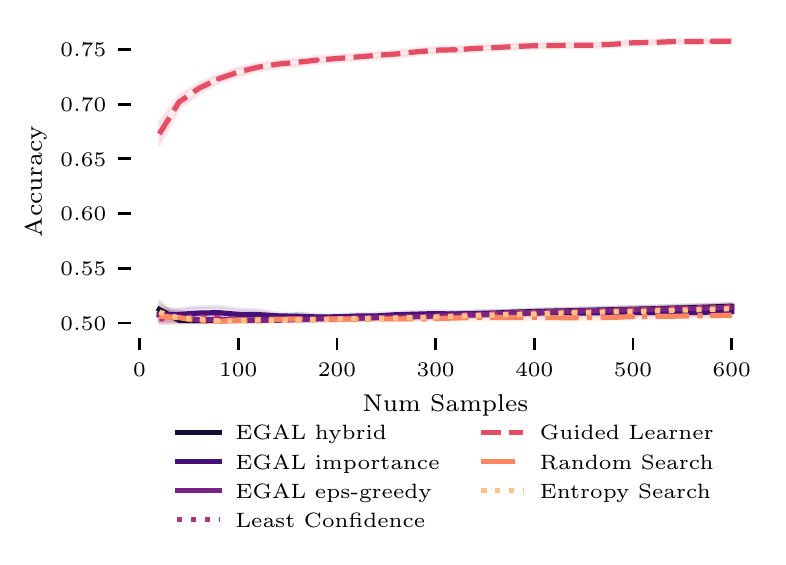}\put(20,35){\tiny{hard}}\end{overpic}
\begin{overpic}[width=0.32\textwidth, trim=0 1.5cm 0 0,clip]{./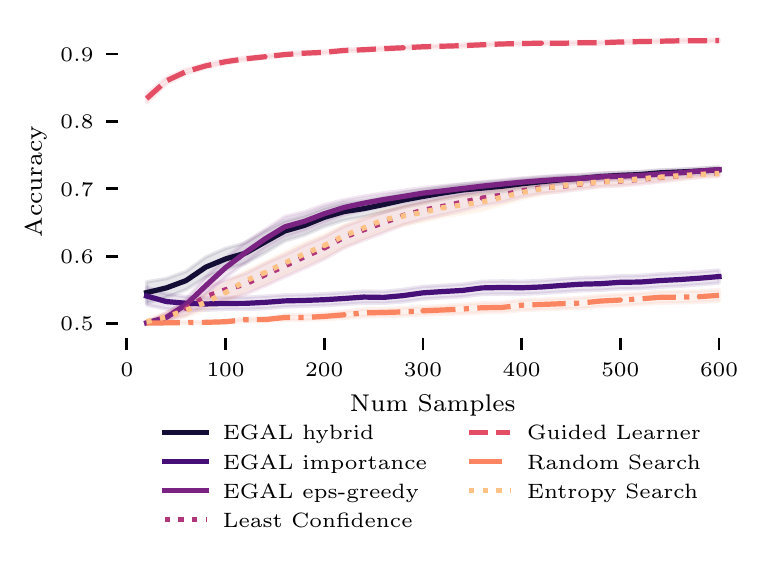}\put(20,35){\tiny{hero}}\end{overpic}
\begin{overpic}[width=0.32\textwidth, trim=0 1.5cm 0 0,clip]{./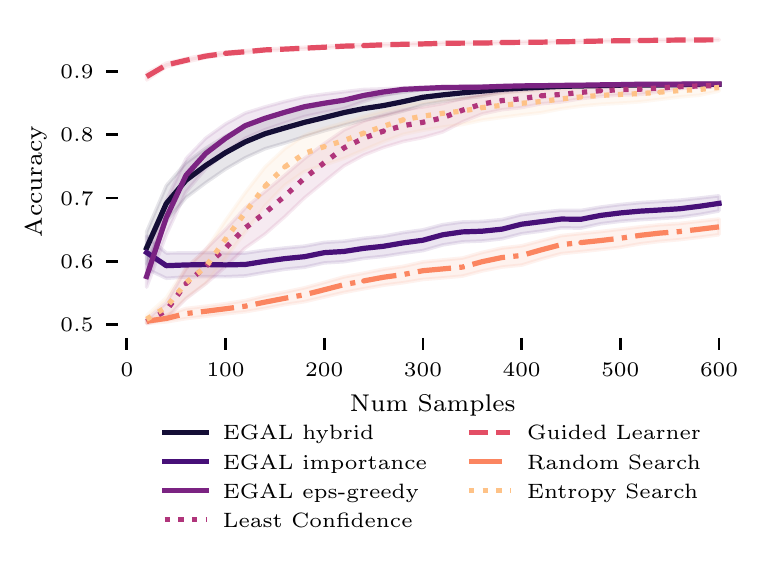}\put(20,35){\tiny{house}}\end{overpic}
\begin{overpic}[width=0.32\textwidth, trim=0 1.5cm 0 0,clip]{./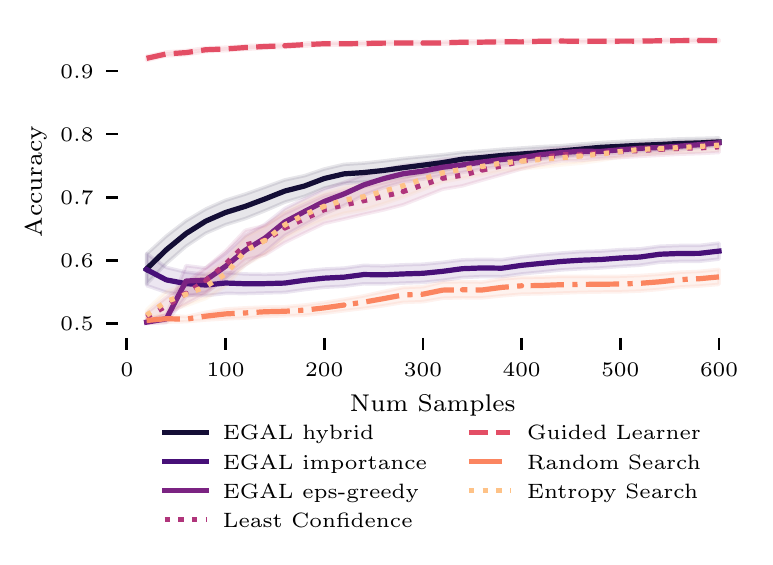}\put(20,35){\tiny{interest}}\end{overpic}
\begin{overpic}[width=0.32\textwidth, trim=0 1.5cm 0 0,clip]{./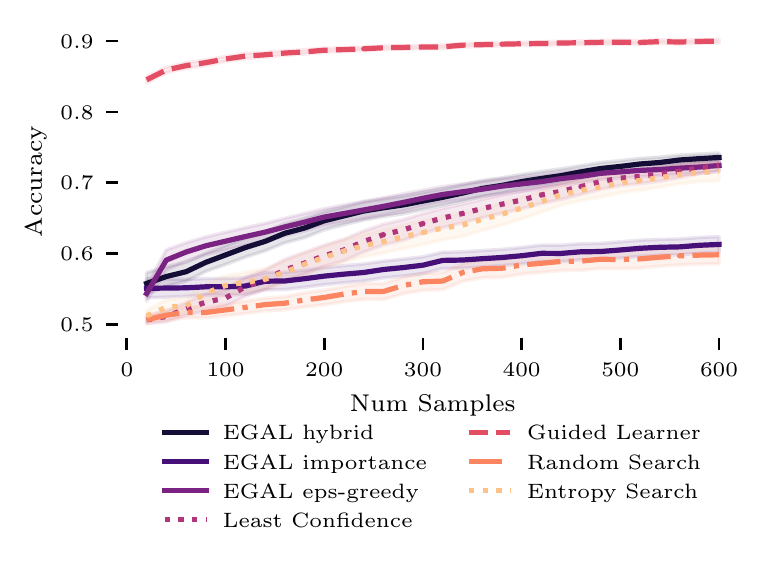}\put(20,35){\tiny{jobs}}\end{overpic}
\begin{overpic}[width=0.32\textwidth, trim=0 1.5cm 0 0,clip]{./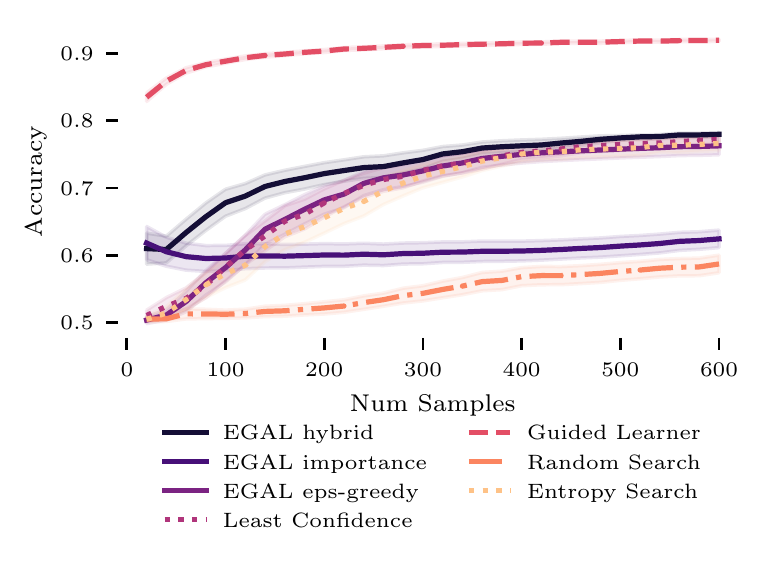}\put(20,35){\tiny{magic}}\end{overpic}
\begin{overpic}[width=0.32\textwidth, trim=0 1.5cm 0 0,clip]{./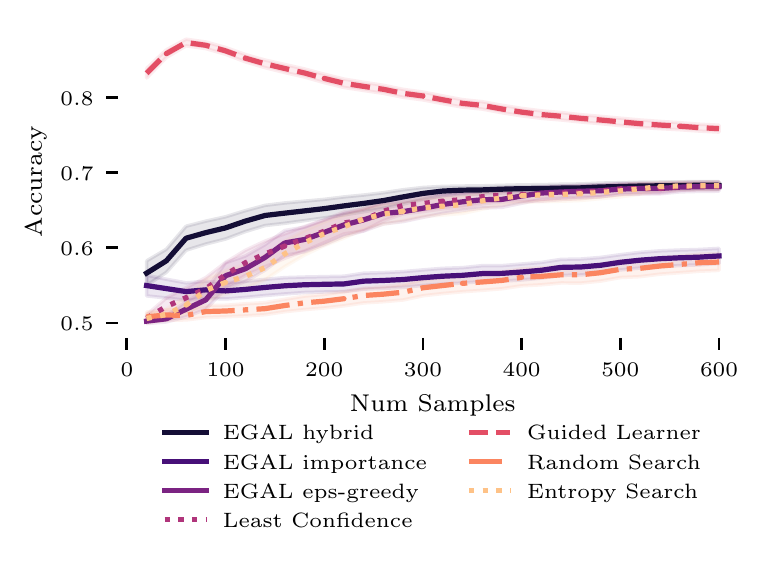}\put(20,35){\tiny{manual}}\end{overpic}
\begin{overpic}[width=0.32\textwidth, trim=0 1.5cm 0 0,clip]
{./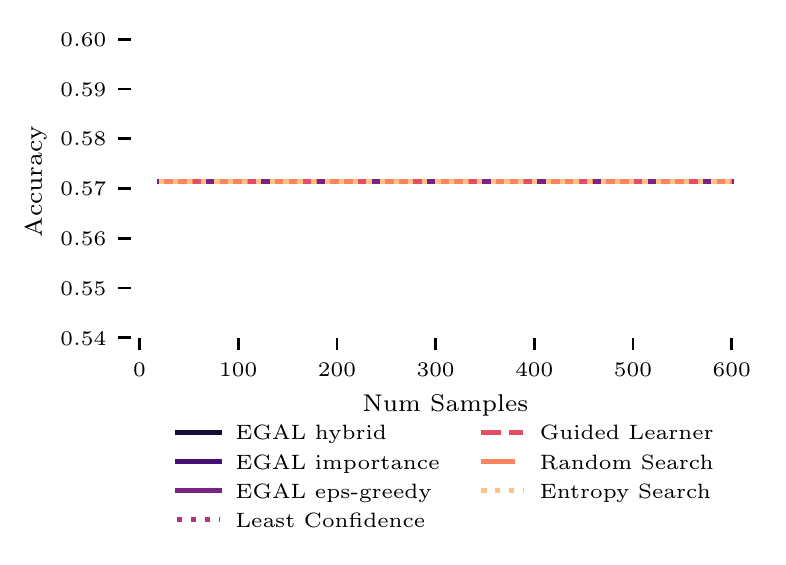}
\put(20,35){\tiny{market}}\end{overpic}
\begin{overpic}[width=0.32\textwidth, trim=0 1.5cm 0 0,clip]{./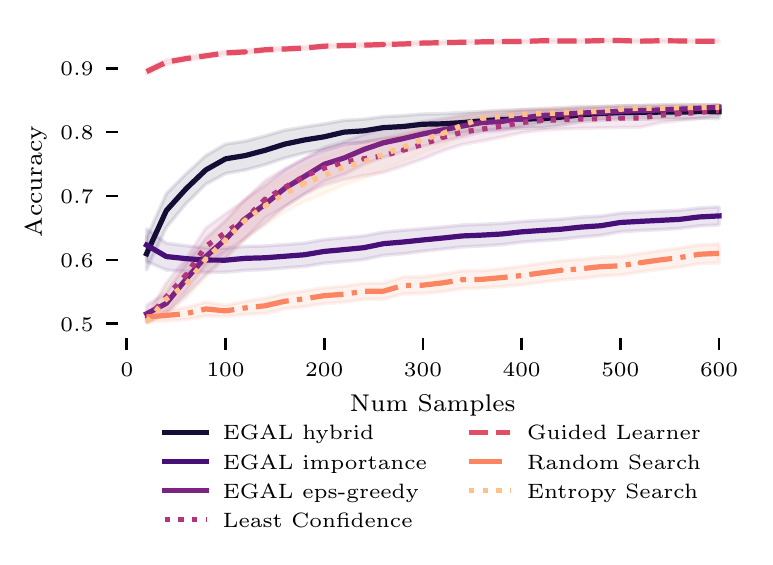}\put(20,35){\tiny{memory}}\end{overpic}
\begin{overpic}[width=0.32\textwidth, trim=0 1.5cm 0 0,clip]{./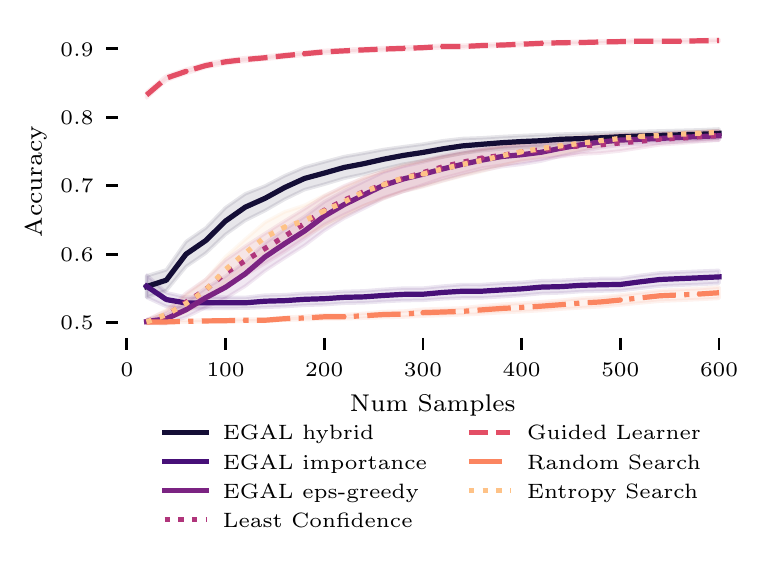}\put(20,35){\tiny{ride}}\end{overpic}
\begin{overpic}[width=0.32\textwidth, trim=0 1.5cm 0 0,clip]{./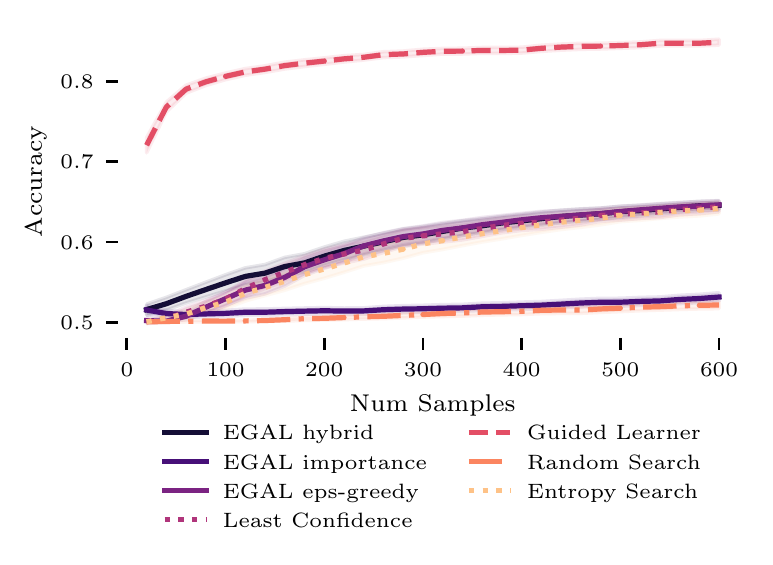}\put(20,35){\tiny{stick}}\end{overpic}
 \begin{overpic}[width=0.32\textwidth, trim=0 1.5cm 0 0,clip]{./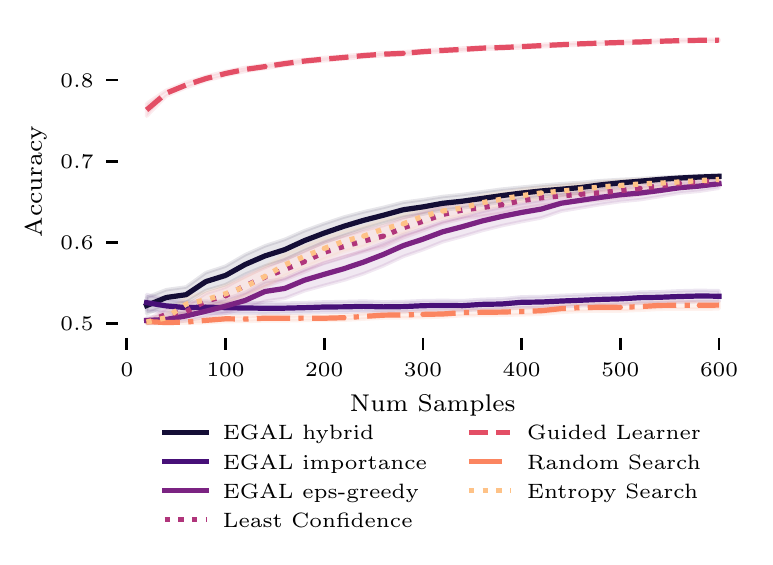}\put(20,35){\tiny{tank}}\end{overpic}
\begin{overpic}[width=0.32\textwidth, trim=0 1.5cm 0 0,clip]{./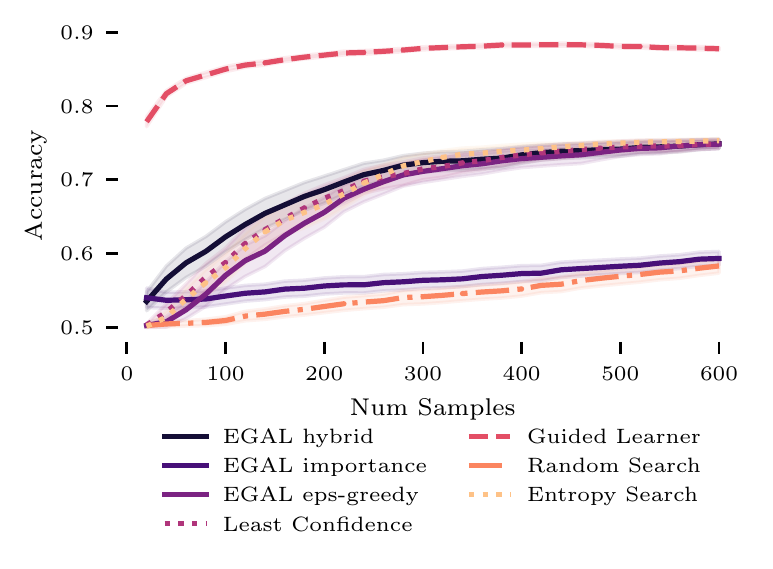}\put(20,35){\tiny{trade}}\end{overpic}
\begin{overpic}[width=0.32\textwidth, trim=0 1.5cm 0 0,clip]{./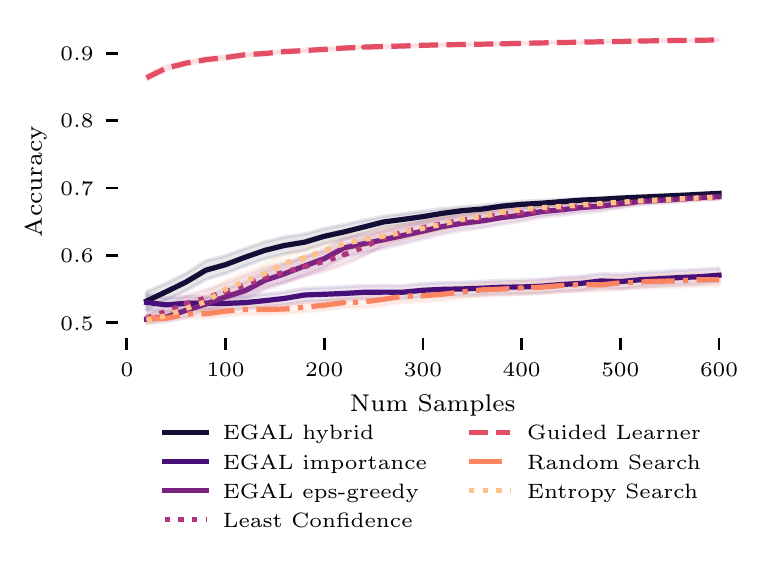}\put(20,35){\tiny{video}}\end{overpic}
\begin{overpic}[width=0.32\textwidth, trim=0 0 0 4.25cm,clip]{./plots/05perc/accuracy-supp_video_rare-50_rare-0_01_bs-20_samples-600.pdf}\end{overpic}
\caption{Average accuracy for each of the individual words with a ratio of frequent to rare class of 1:100. The rare class is randomly sub-sampled to achieved the desired skew level; each experiments is repeated 100 times with a different sub-sample drawn for each random seed.}
  \end{center}
\end{figure}

\begin{figure}[h]
  \begin{center}
\begin{overpic}[width=0.32\textwidth, trim=0 1.5cm 0 0,clip]{./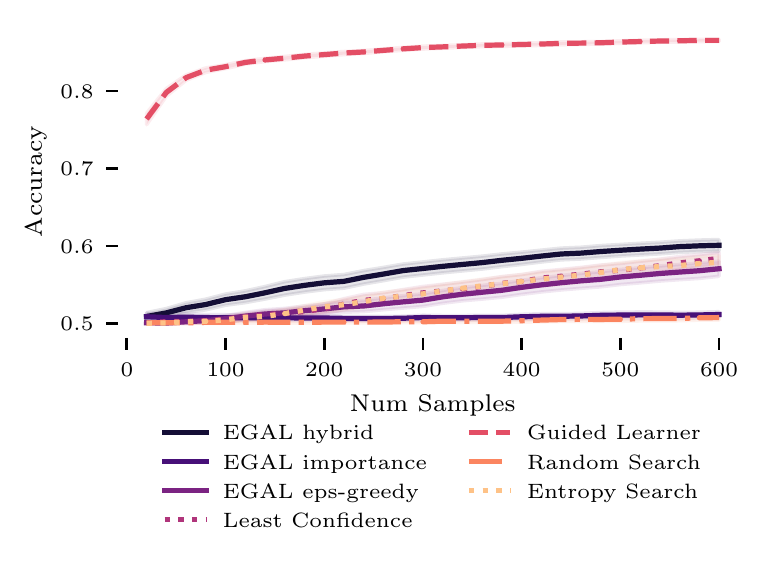}\put(20,35){\tiny{back}}\end{overpic}
\begin{overpic}[width=0.32\textwidth, trim=0 1.5cm 0 0,clip]{./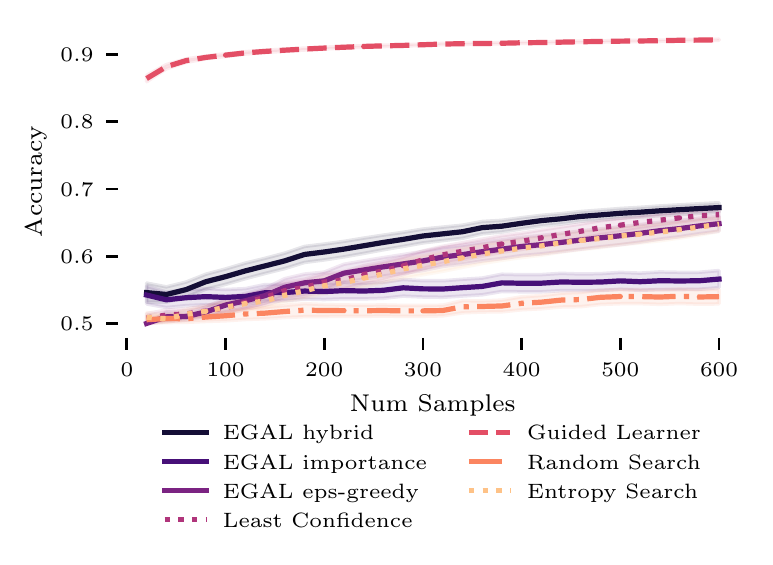}\put(20,35){\tiny{card}}\end{overpic}
\begin{overpic}[width=0.32\textwidth, trim=0 1.5cm 0 0,clip]{./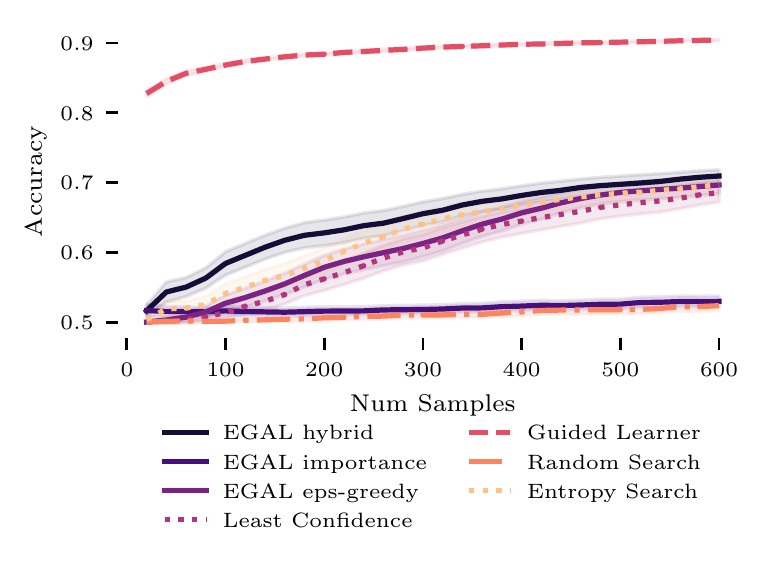}\put(20,35){\tiny{case}}\end{overpic}
\begin{overpic}[width=0.32\textwidth, trim=0 1.5cm 0 0,clip]{./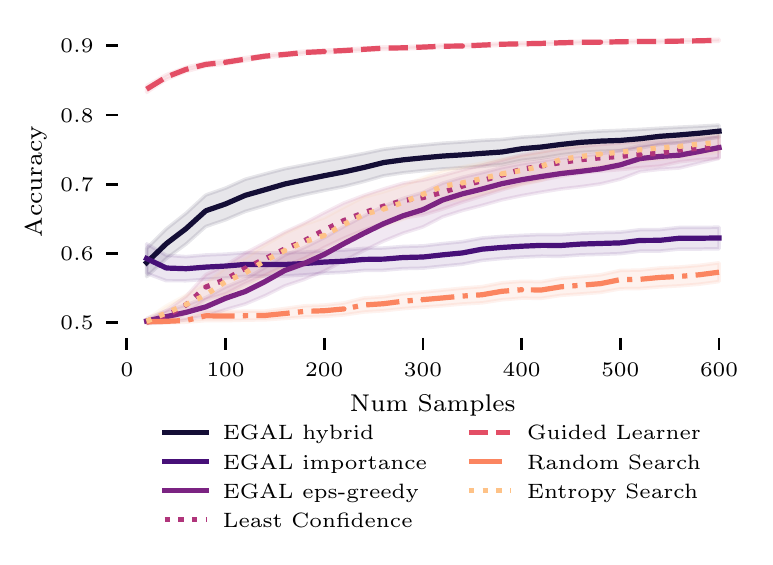}\put(20,35){\tiny{club}}\end{overpic}
\begin{overpic}[width=0.32\textwidth, trim=0 1.5cm 0 0,clip]{./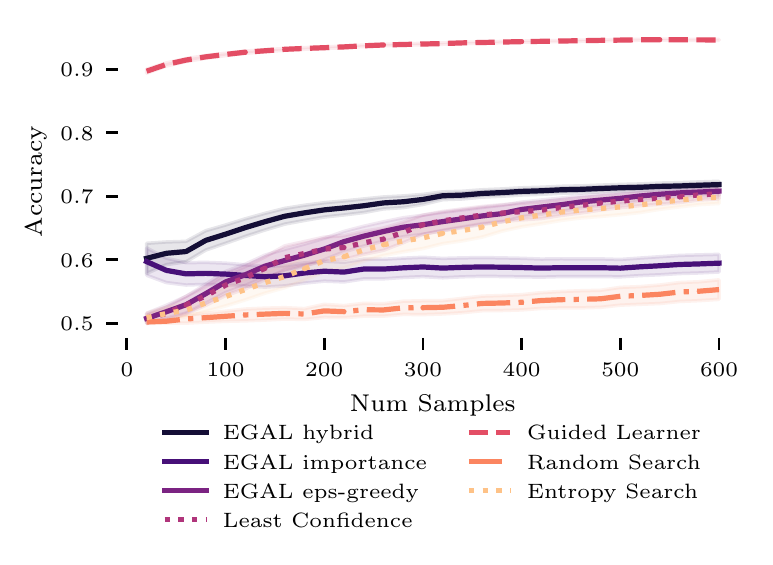}\put(20,35){\tiny{drive}}\end{overpic}
\begin{overpic}[width=0.32\textwidth, trim=0 1.5cm 0 0,clip]{./plots/05perc/accuracy-supp_fit_rare-50_rare-0_005_bs-20_samples-600.pdf}\put(20,35){\tiny{fit}}\end{overpic}
\begin{overpic}[width=0.32\textwidth, trim=0 1.5cm 0 0,clip]{./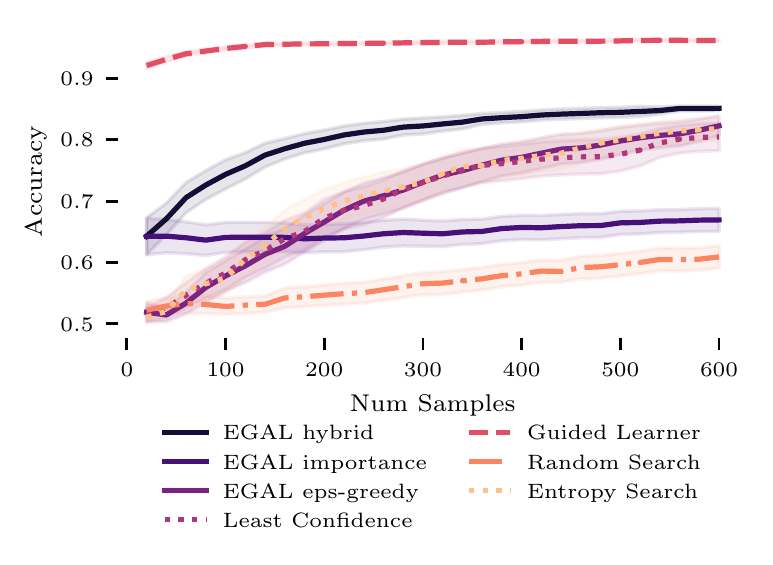}\put(20,35){\tiny{goals}}\end{overpic}
\begin{overpic}[width=0.32\textwidth, trim=0 1.5cm 0 0,clip]{./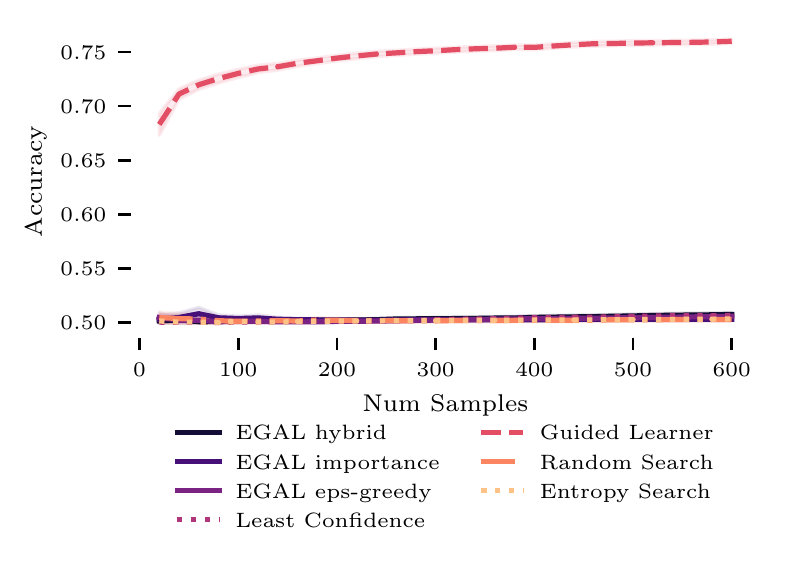}\put(20,35){\tiny{hard}}\end{overpic}
\begin{overpic}[width=0.32\textwidth, trim=0 1.5cm 0 0,clip]{./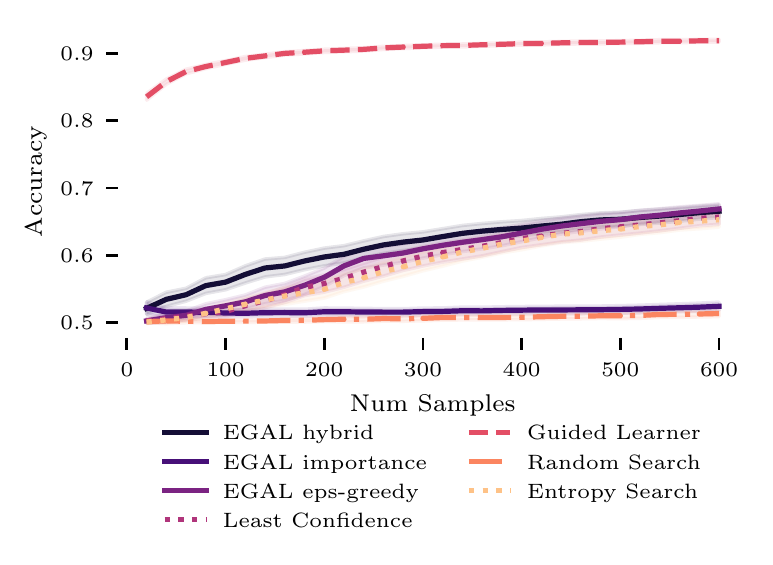}\put(20,35){\tiny{hero}}\end{overpic}
\begin{overpic}[width=0.32\textwidth, trim=0 1.5cm 0 0,clip]{./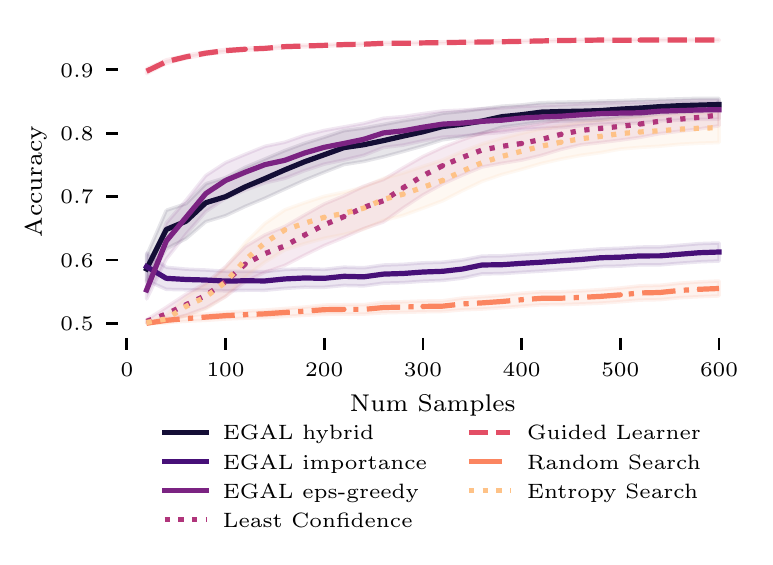}\put(20,35){\tiny{house}}\end{overpic}
\begin{overpic}[width=0.32\textwidth, trim=0 1.5cm 0 0,clip]{./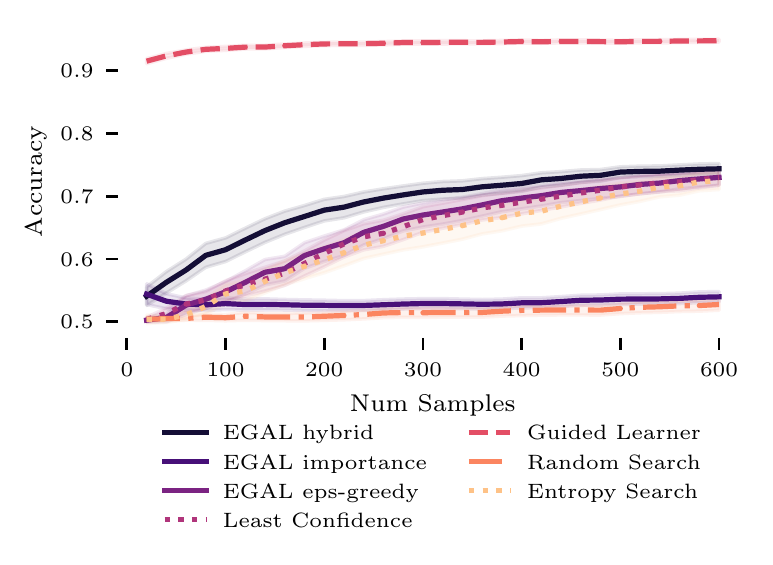}\put(20,35){\tiny{interest}}\end{overpic}
\begin{overpic}[width=0.32\textwidth, trim=0 1.5cm 0 0,clip]{./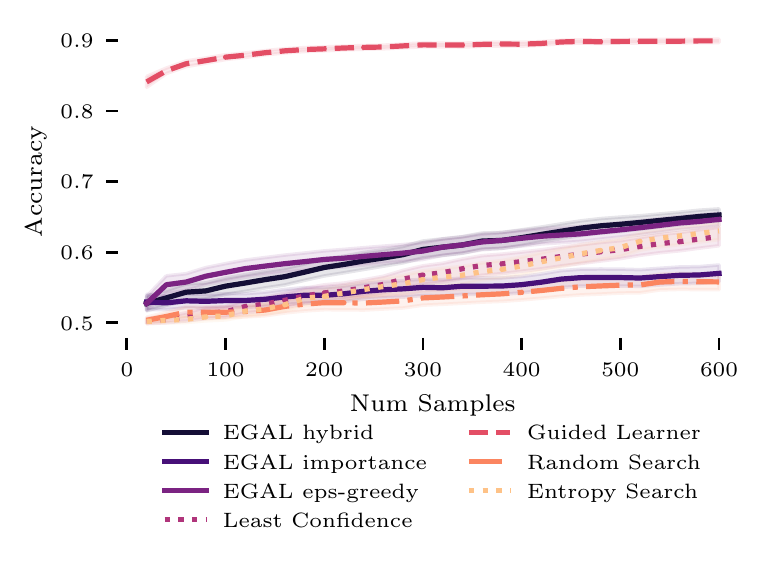}\put(20,35){\tiny{jobs}}\end{overpic}
\begin{overpic}[width=0.32\textwidth, trim=0 1.5cm 0 0,clip]{./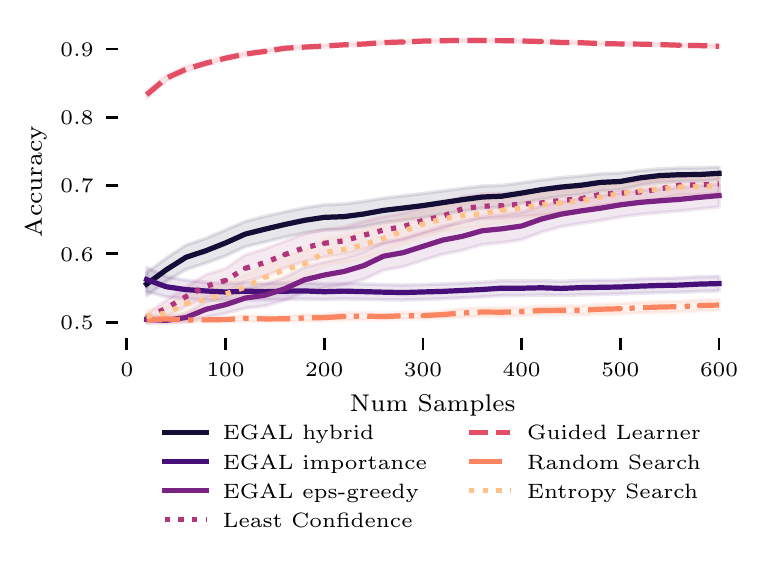}\put(20,35){\tiny{magic}}\end{overpic}
\begin{overpic}[width=0.32\textwidth, trim=0 1.5cm 0 0,clip]{./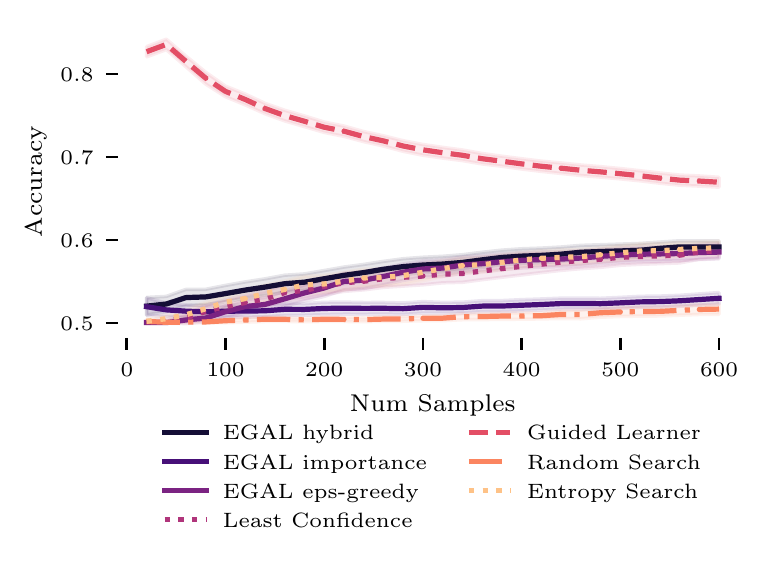}\put(20,35){\tiny{manual}}\end{overpic}
\begin{overpic}[width=0.32\textwidth, trim=0 1.5cm 0 0,clip]{./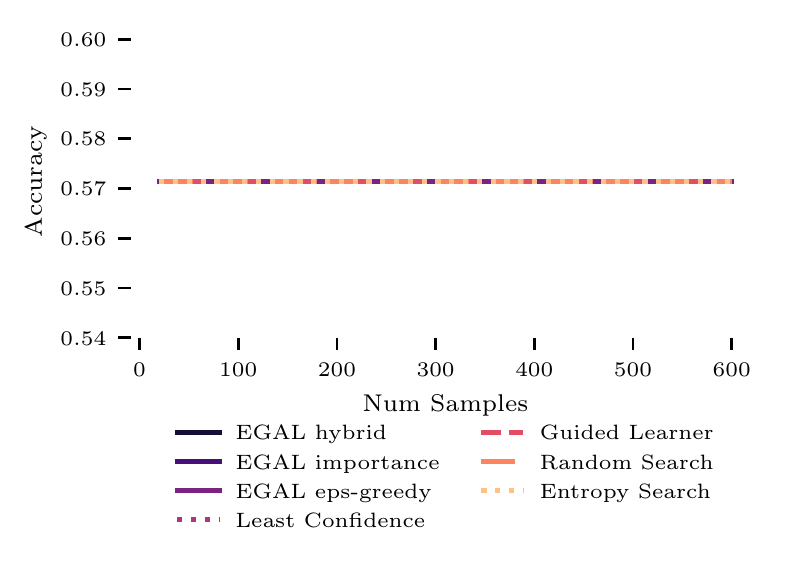}\put(20,35){\tiny{market}}\end{overpic}
\begin{overpic}[width=0.32\textwidth, trim=0 1.5cm 0 0,clip]{./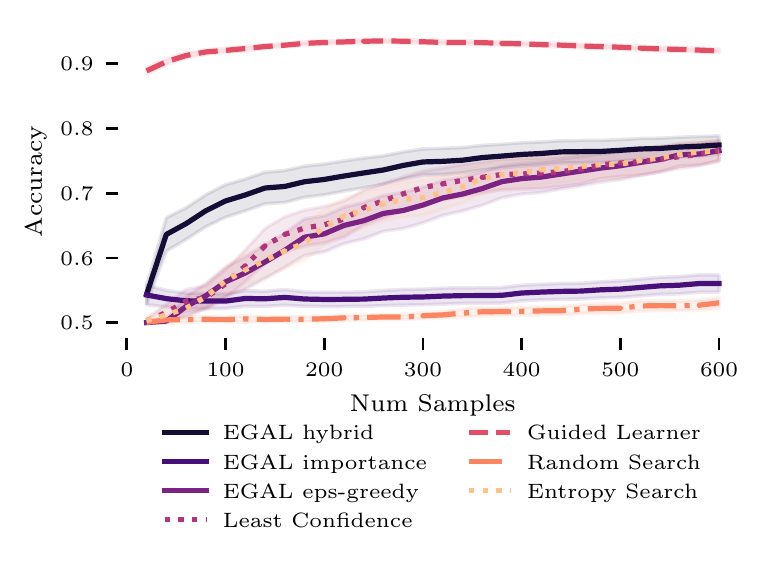}\put(20,35){\tiny{memory}}\end{overpic}
\begin{overpic}[width=0.32\textwidth, trim=0 1.5cm 0 0,clip]{./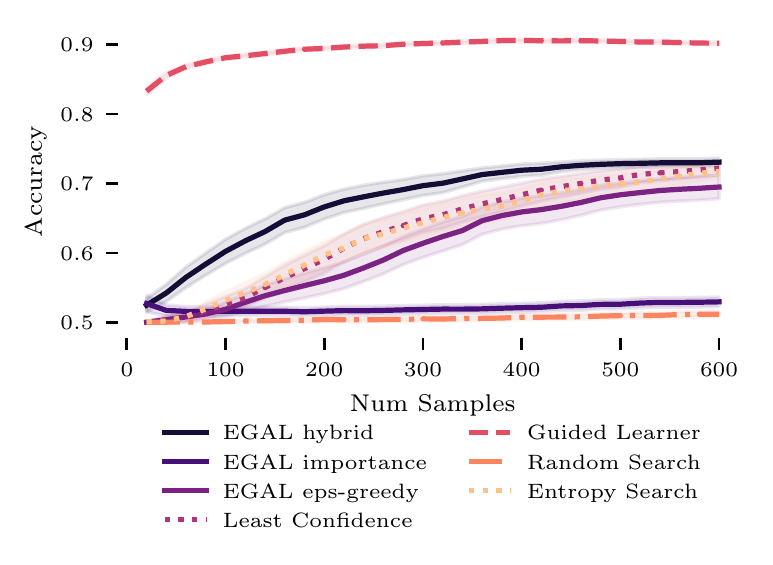}\put(20,35){\tiny{ride}}\end{overpic}
\begin{overpic}[width=0.32\textwidth, trim=0 1.5cm 0 0,clip]{./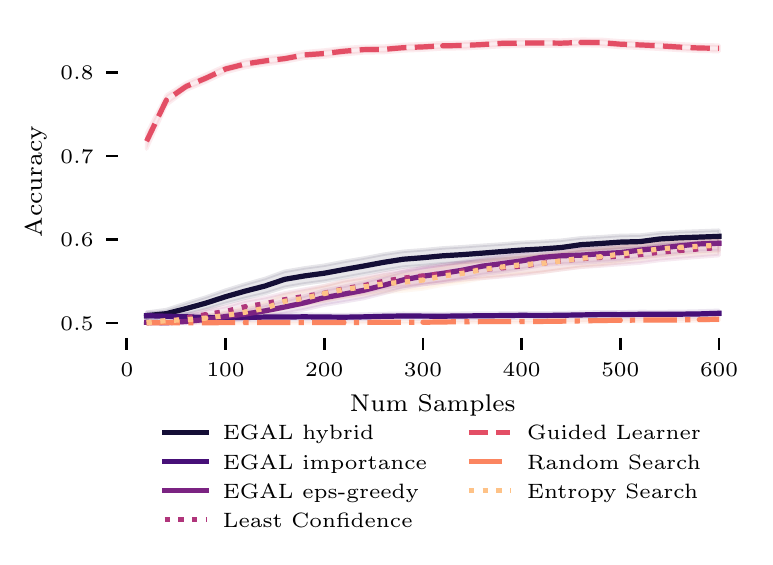}\put(20,35){\tiny{stick}}\end{overpic}
 \begin{overpic}[width=0.32\textwidth, trim=0 1.5cm 0 0,clip]{./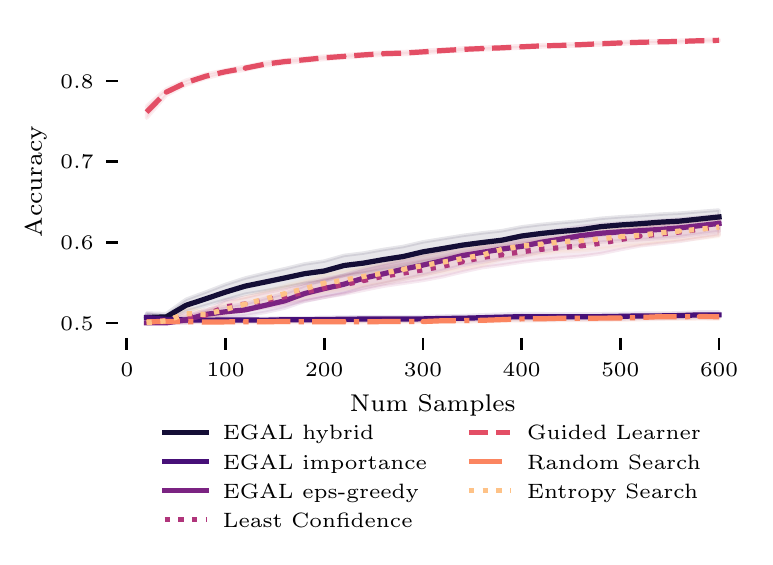}\put(20,35){\tiny{tank}}\end{overpic}
\begin{overpic}[width=0.32\textwidth, trim=0 1.5cm 0 0,clip]{./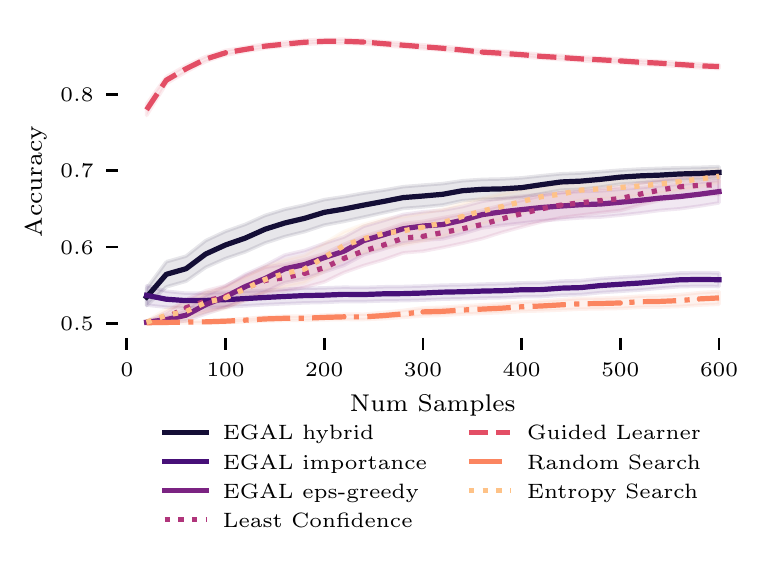}\put(20,35){\tiny{trade}}\end{overpic}
\begin{overpic}[width=0.32\textwidth, trim=0 1.5cm 0 0,clip]{./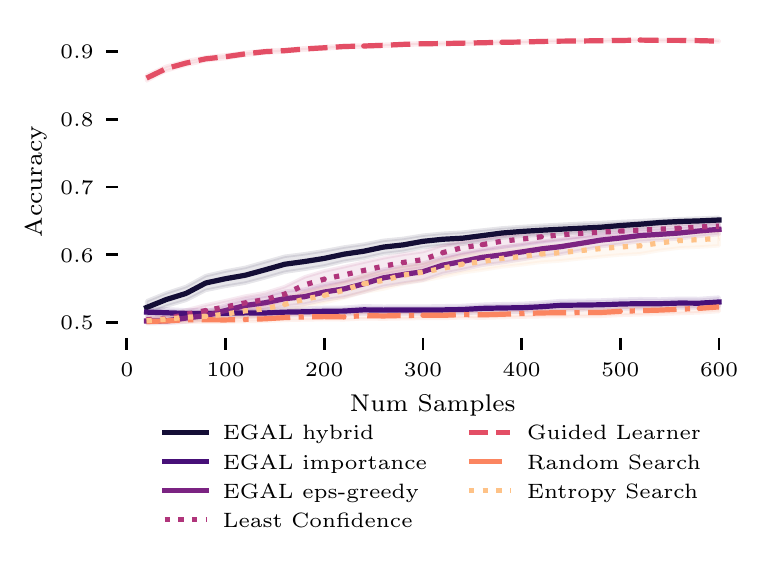}\put(20,35){\tiny{video}}\end{overpic}
\begin{overpic}[width=0.32\textwidth, trim=0 0 0 4.25cm,clip]{./plots/05perc/accuracy-supp_video_rare-50_rare-0_005_bs-20_samples-600.pdf}\end{overpic}
\caption{Average accuracy for each of the individual words with a ratio of frequent to rare class of 1:200.}
  \end{center}
\end{figure}

\begin{figure}[h]
  \begin{center}
\begin{overpic}[width=0.32\textwidth, trim=0 1.5cm 0 0,clip]{./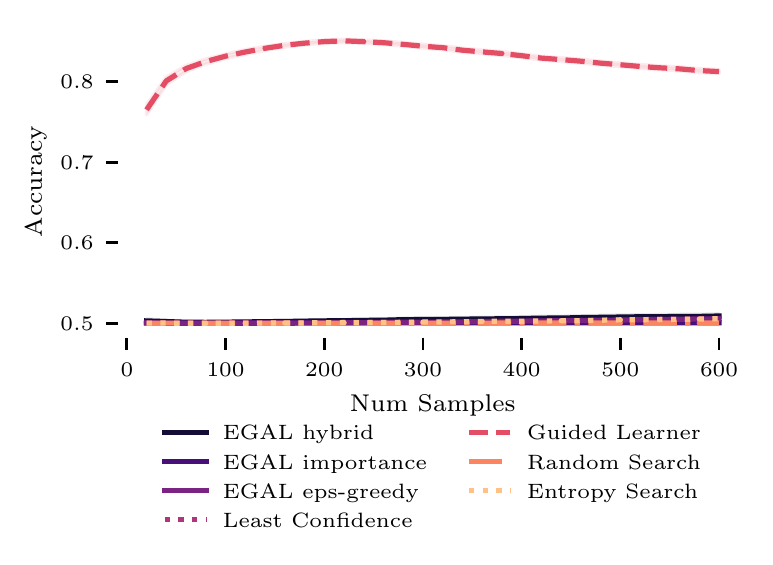}\put(20,35){\tiny{back}}\end{overpic}
\begin{overpic}[width=0.32\textwidth, trim=0 1.5cm 0 0,clip]{./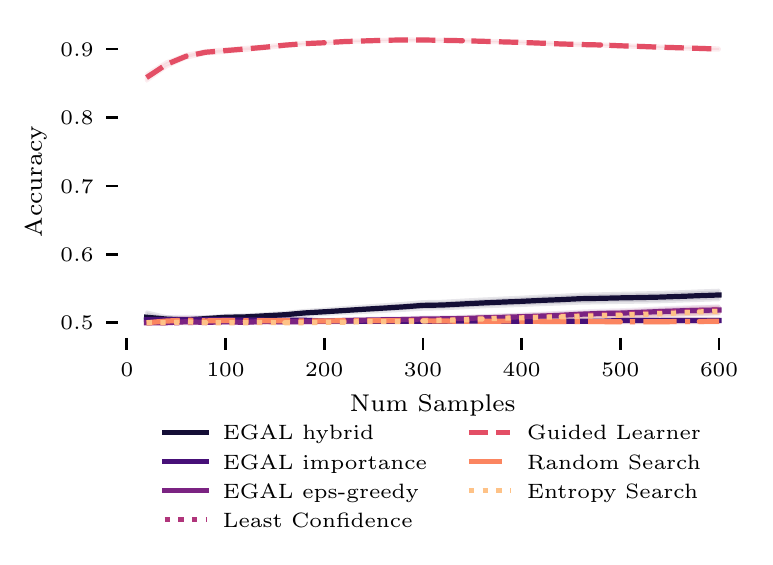}\put(20,35){\tiny{card}}\end{overpic}
\begin{overpic}[width=0.32\textwidth, trim=0 1.5cm 0 0,clip]{./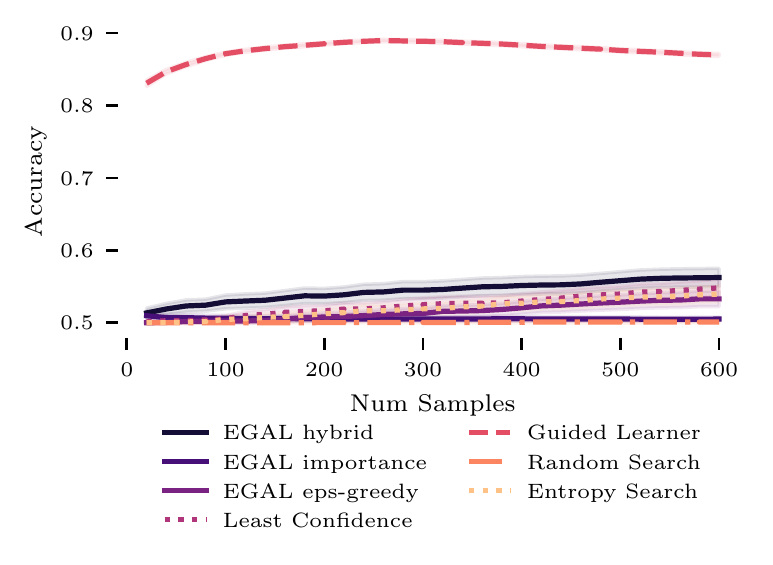}\put(20,35){\tiny{case}}\end{overpic}
\begin{overpic}[width=0.32\textwidth, trim=0 1.5cm 0 0,clip]{./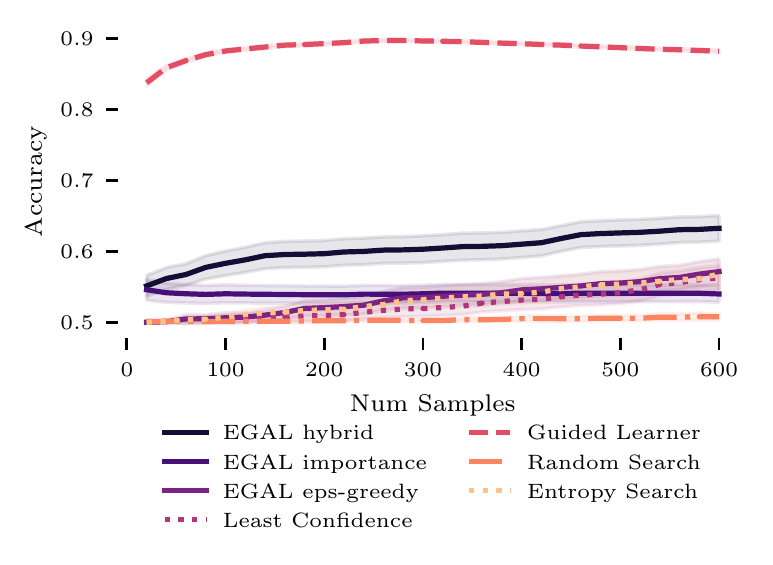}\put(20,35){\tiny{club}}\end{overpic}
\begin{overpic}[width=0.32\textwidth, trim=0 1.5cm 0 0,clip]{./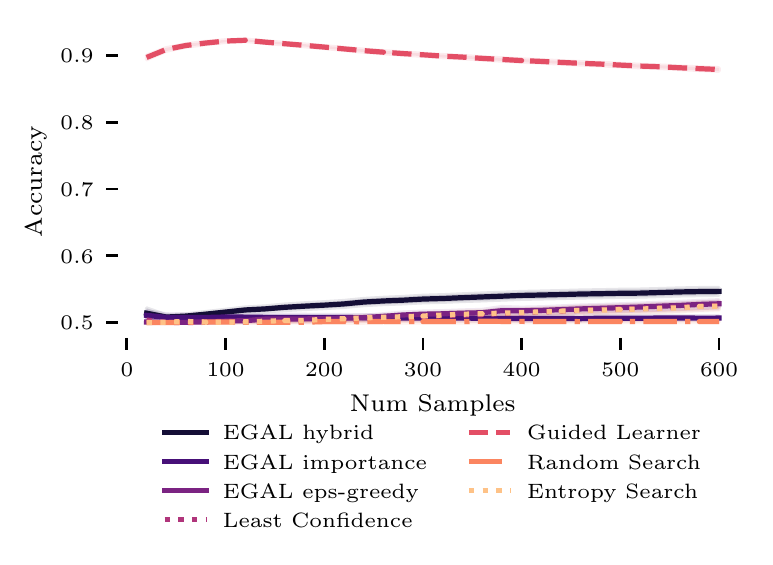}\put(20,35){\tiny{drive}}\end{overpic}
\begin{overpic}[width=0.32\textwidth, trim=0 1.5cm 0 0,clip]{./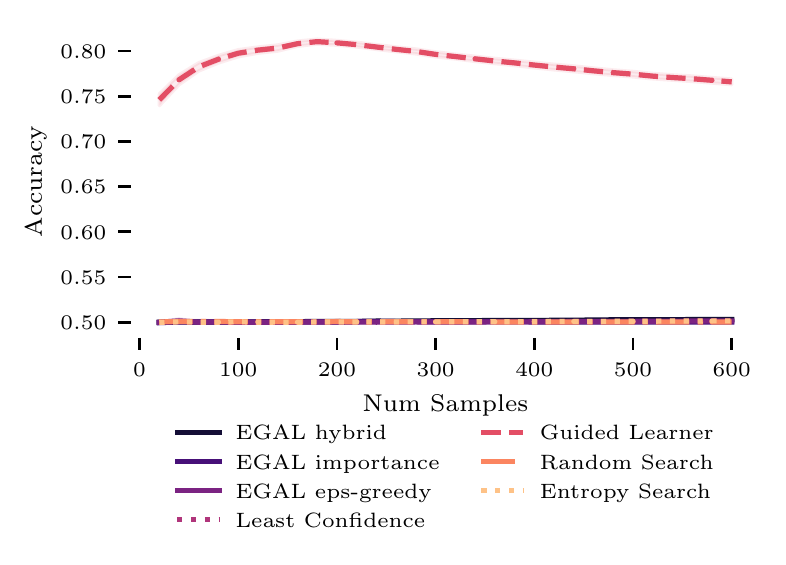}\put(20,35){\tiny{fit}}\end{overpic}
\begin{overpic}[width=0.32\textwidth, trim=0 1.5cm 0 0,clip]{./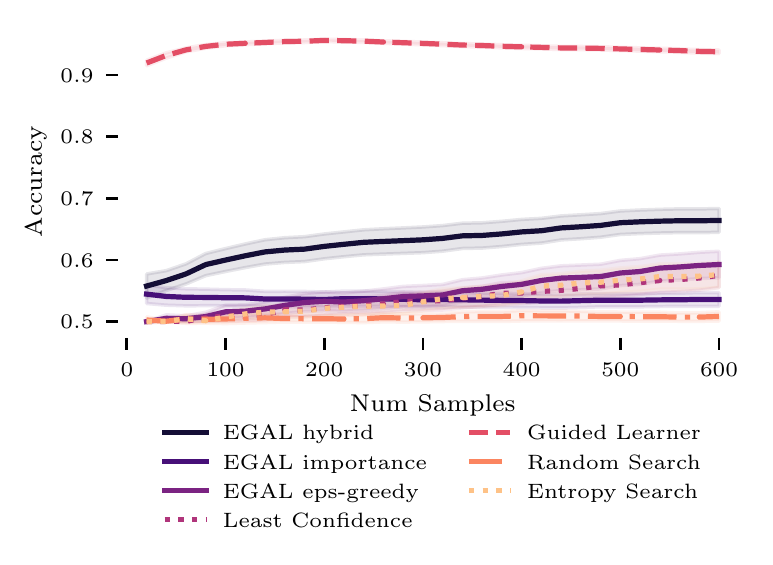}\put(20,35){\tiny{goals}}\end{overpic}
\begin{overpic}[width=0.32\textwidth, trim=0 1.5cm 0 0,clip]{./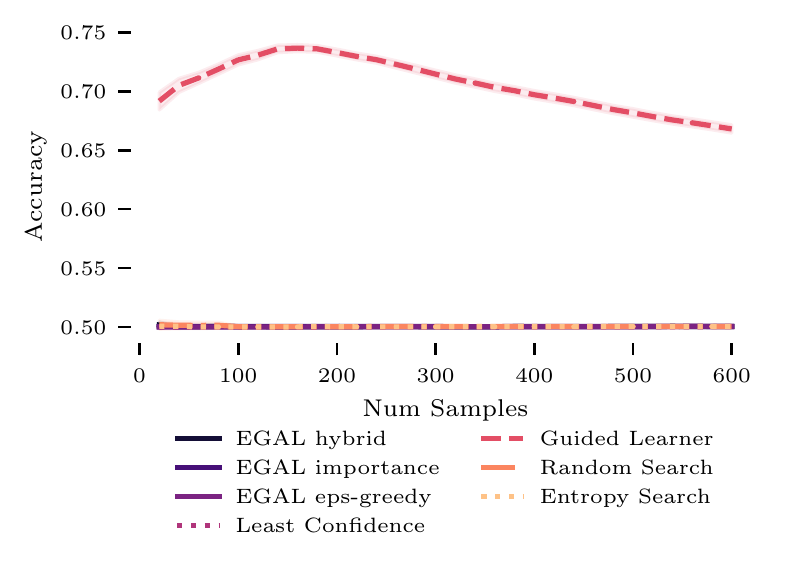}\put(20,35){\tiny{hard}}\end{overpic}
\begin{overpic}[width=0.32\textwidth, trim=0 1.5cm 0 0,clip]{./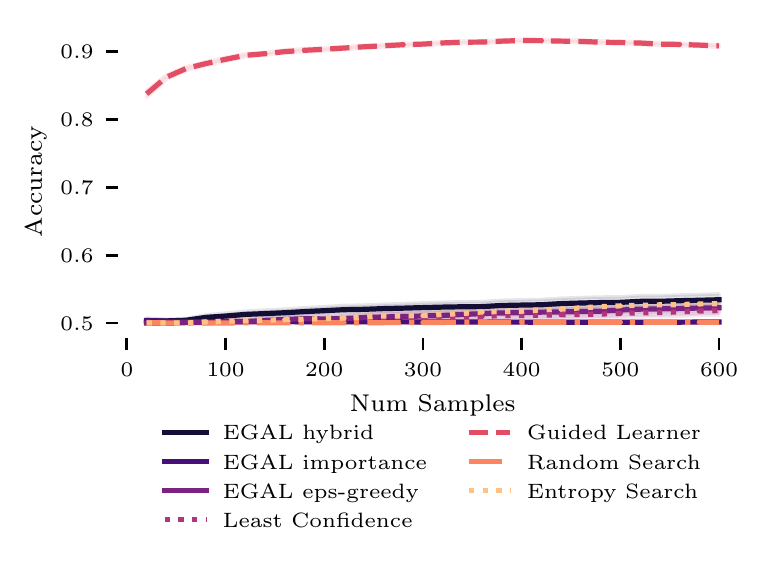}\put(20,35){\tiny{hero}}\end{overpic}
\begin{overpic}[width=0.32\textwidth, trim=0 1.5cm 0 0,clip]{./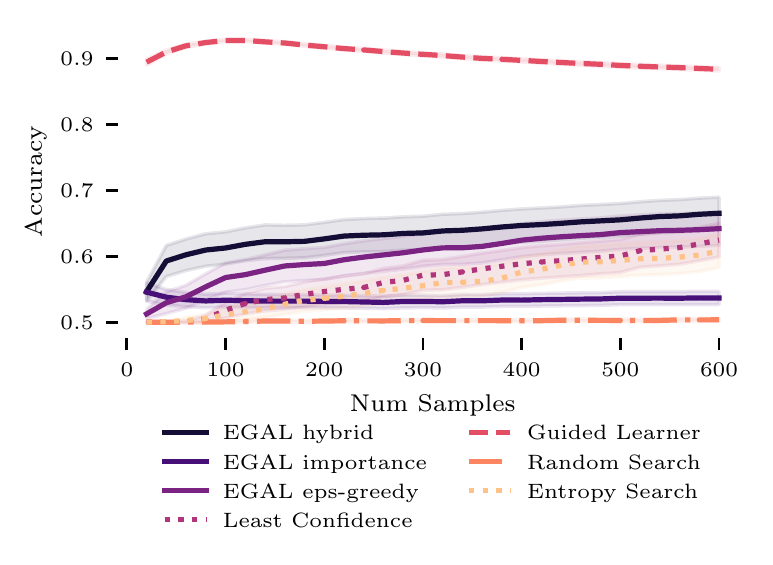}\put(20,35){\tiny{house}}\end{overpic}
\begin{overpic}[width=0.32\textwidth, trim=0 1.5cm 0 0,clip]{./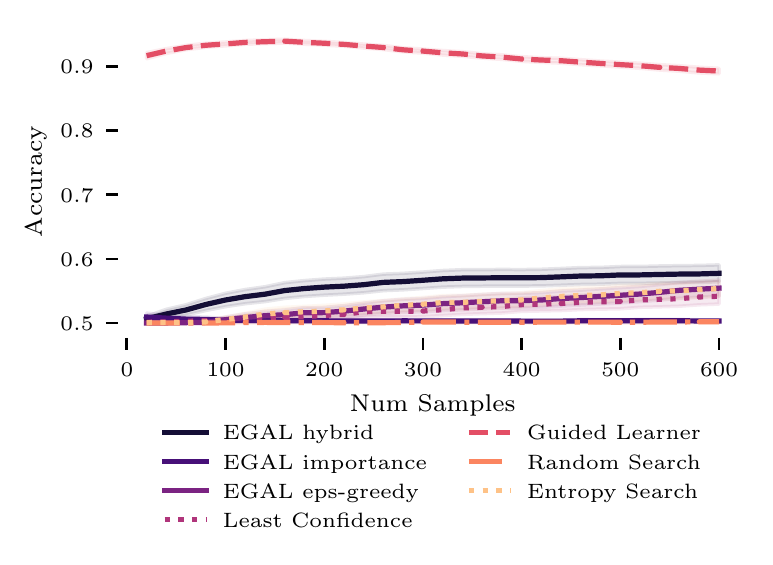}\put(20,35){\tiny{interest}}\end{overpic}
\begin{overpic}[width=0.32\textwidth, trim=0 1.5cm 0 0,clip]{./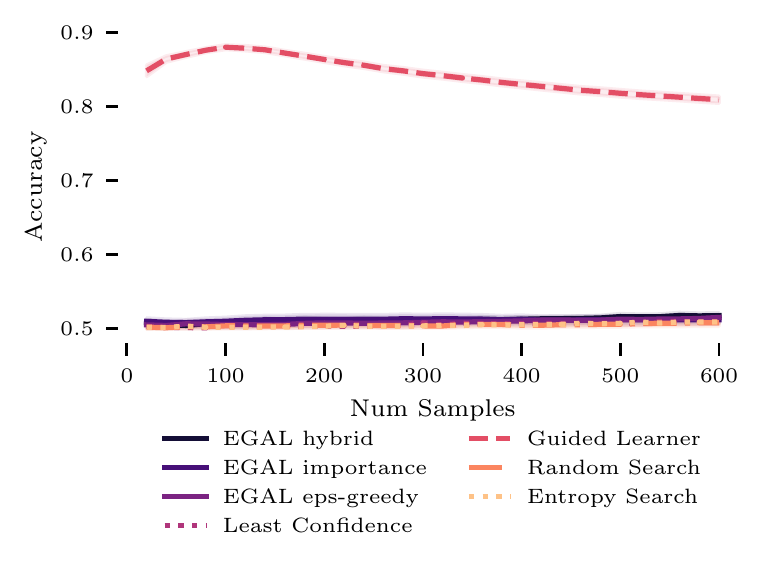}\put(20,35){\tiny{jobs}}\end{overpic}
\begin{overpic}[width=0.32\textwidth, trim=0 1.5cm 0 0,clip]{./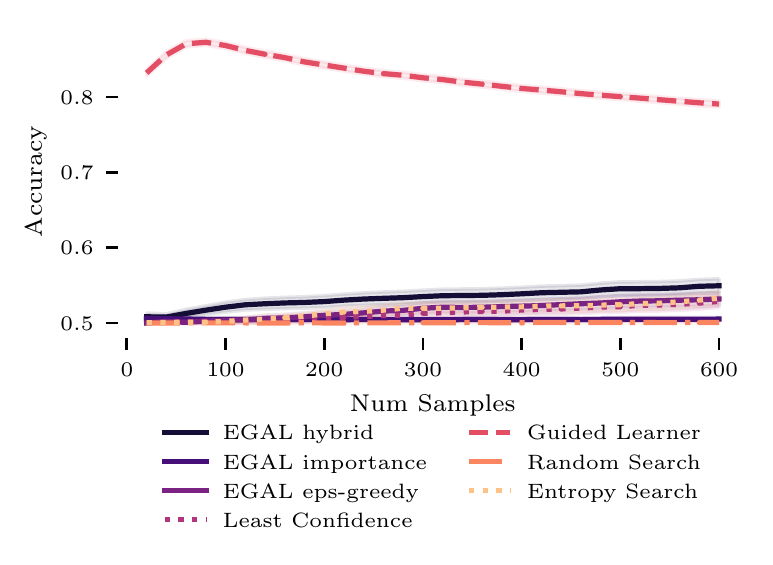}\put(20,35){\tiny{magic}}\end{overpic}
\begin{overpic}[width=0.32\textwidth, trim=0 1.5cm 0 0,clip]{./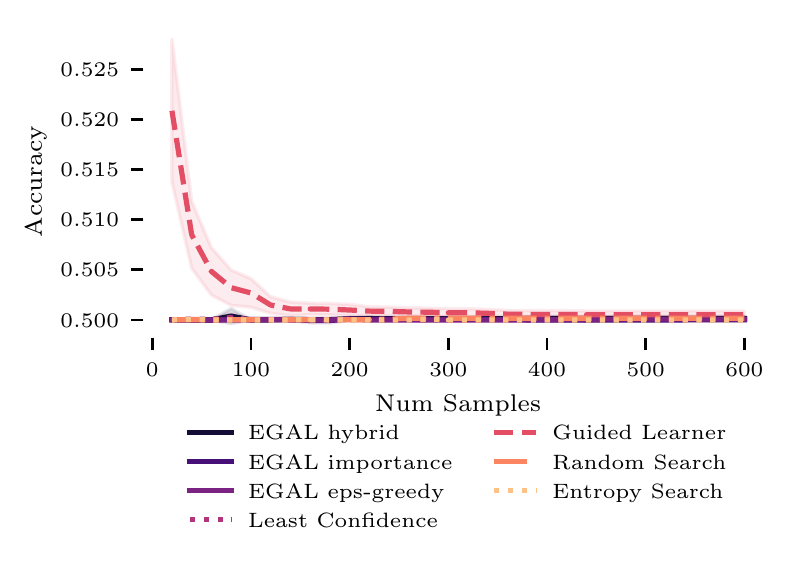}\put(20,35){\tiny{manual}}\end{overpic}
\begin{overpic}[width=0.32\textwidth, trim=0 1.5cm 0 0,clip]{./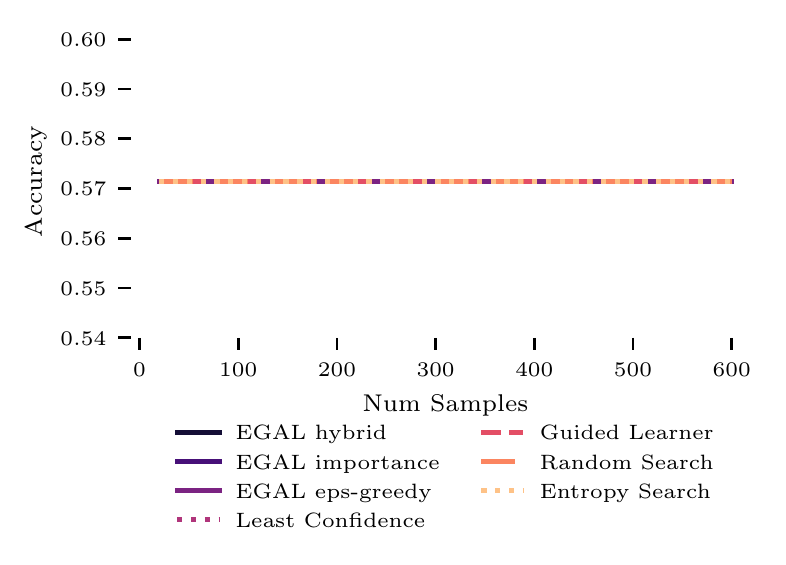}\put(20,35){\tiny{market}}\end{overpic}
\begin{overpic}[width=0.32\textwidth, trim=0 1.5cm 0 0,clip]{./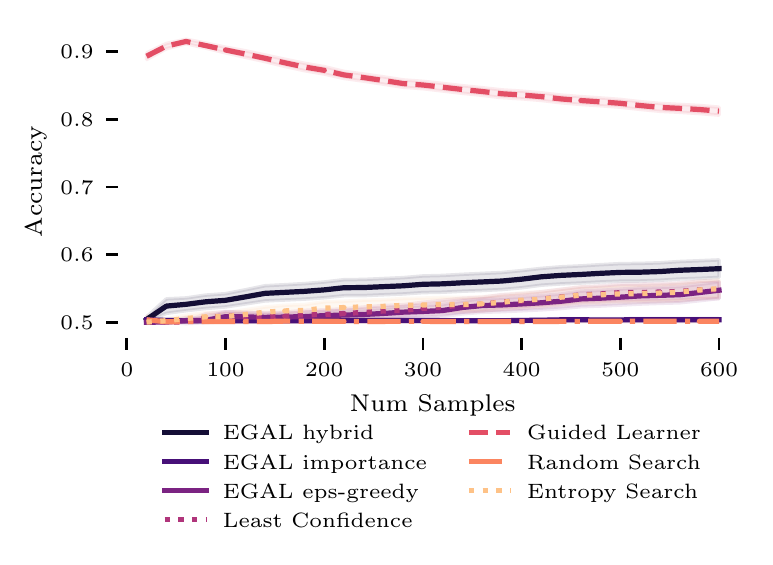}\put(20,35){\tiny{memory}}\end{overpic}
\begin{overpic}[width=0.32\textwidth, trim=0 1.5cm 0 0,clip]{./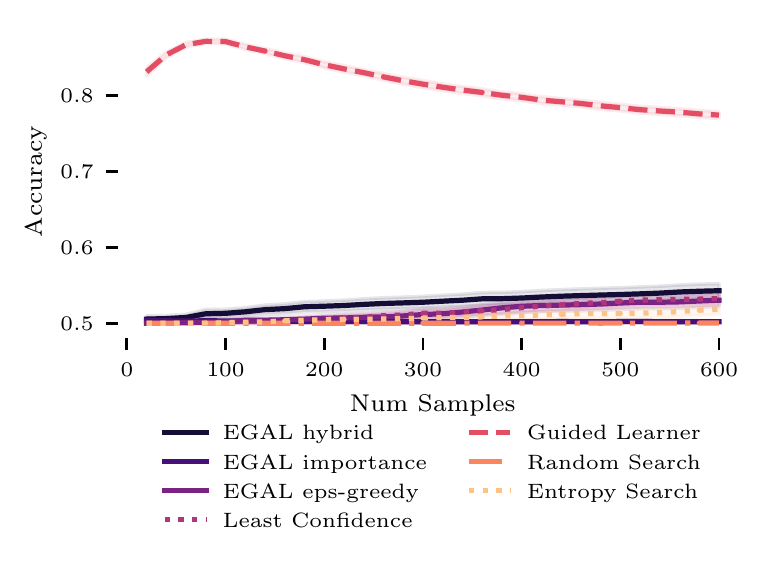}\put(20,35){\tiny{ride}}\end{overpic}
\begin{overpic}[width=0.32\textwidth, trim=0 1.5cm 0 0,clip]{./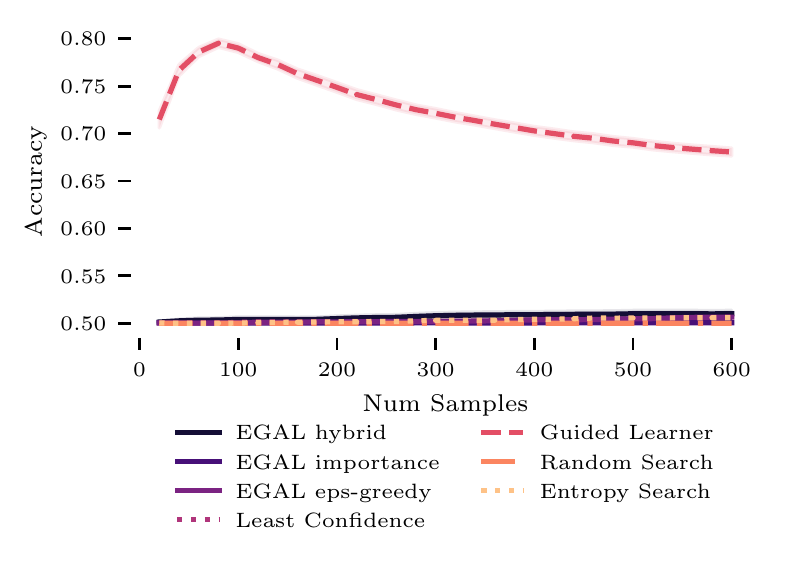}\put(20,35){\tiny{stick}}\end{overpic}
 \begin{overpic}[width=0.32\textwidth, trim=0 1.5cm 0 0,clip]{./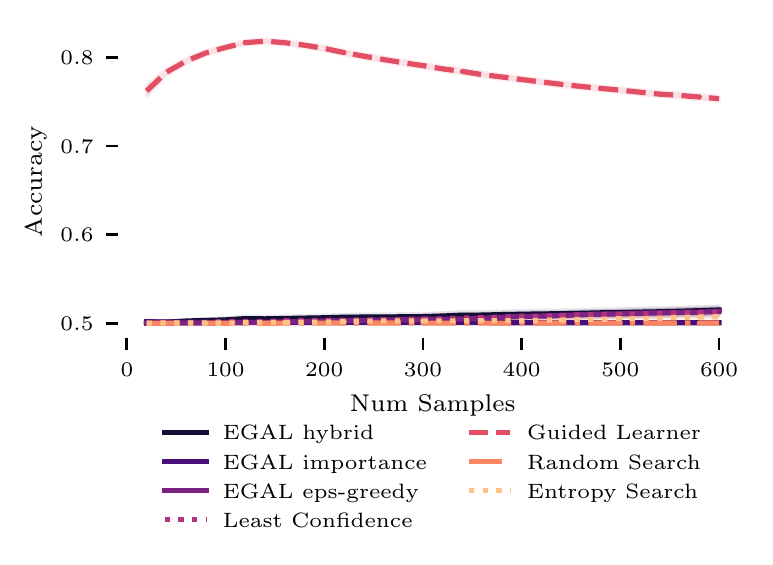}\put(20,35){\tiny{tank}}\end{overpic}
\begin{overpic}[width=0.32\textwidth, trim=0 1.5cm 0 0,clip]{./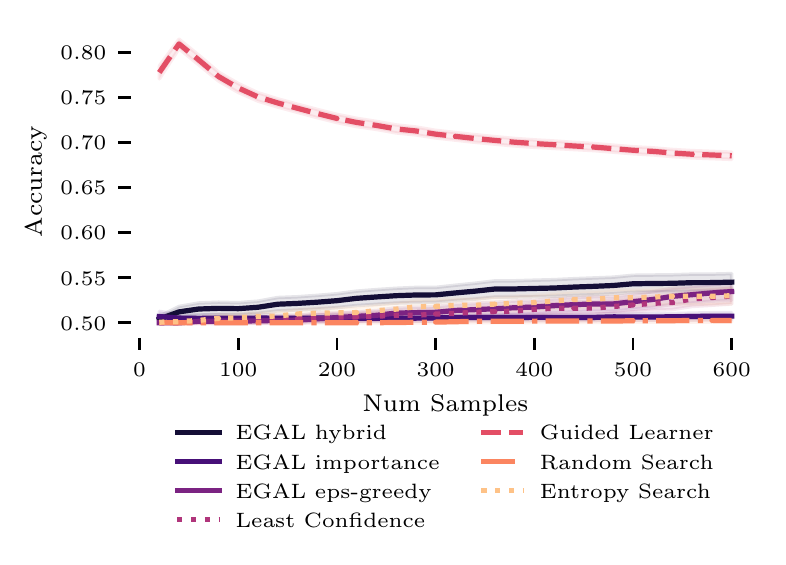}\put(20,35){\tiny{trade}}\end{overpic}
\begin{overpic}[width=0.32\textwidth, trim=0 1.5cm 0 0,clip]{./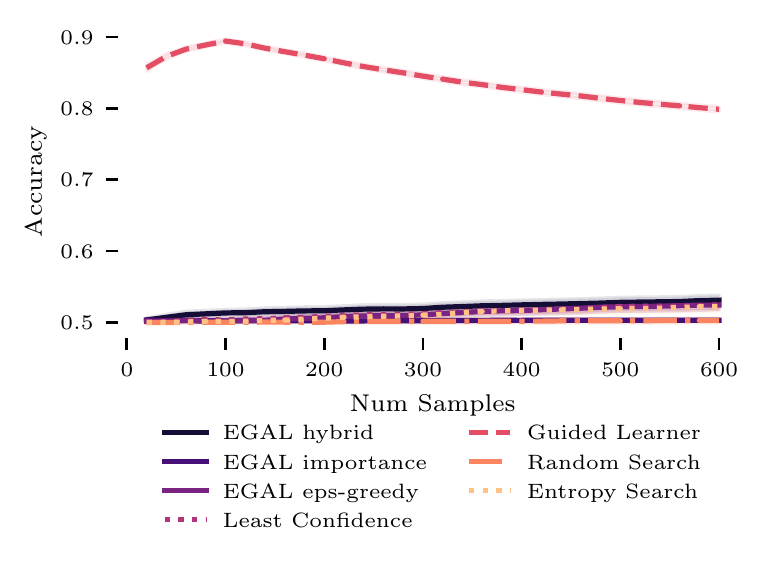}\put(20,35){\tiny{video}}\end{overpic}
\begin{overpic}[width=0.32\textwidth, trim=0 0 0 4.25cm,clip]{./plots/05perc/accuracy-supp_video_rare-50_rare-0_001_bs-20_samples-600.pdf}\end{overpic}
\caption{Average accuracy for each of the individual words with a ratio of frequent to rare class of 1:1000. }
  \end{center}
\end{figure}

\subsection{Imbalance}
\label{sec:imbalance}
\begin{figure}[h]
  \begin{center}
\begin{overpic}[width=0.32\textwidth, trim=0 1.5cm 0 0,clip]{./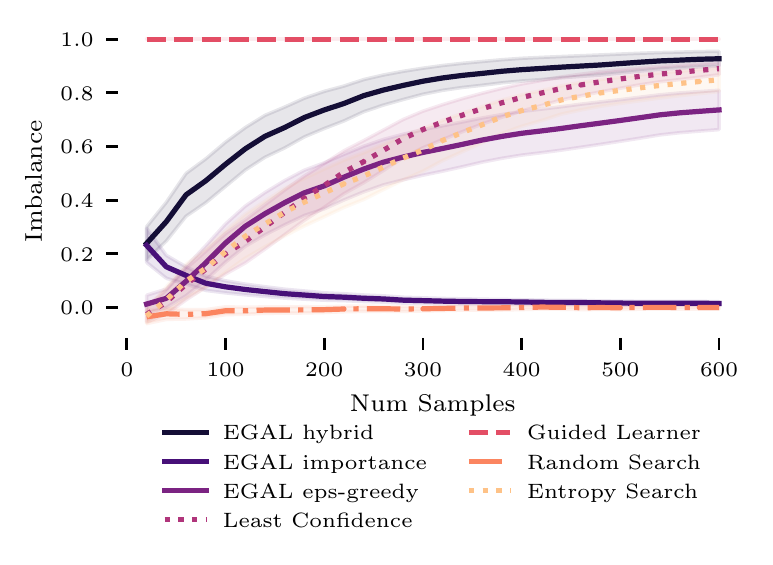}\put(20,35){\tiny{back}}\end{overpic}
\begin{overpic}[width=0.32\textwidth, trim=0 1.5cm 0 0,clip]{./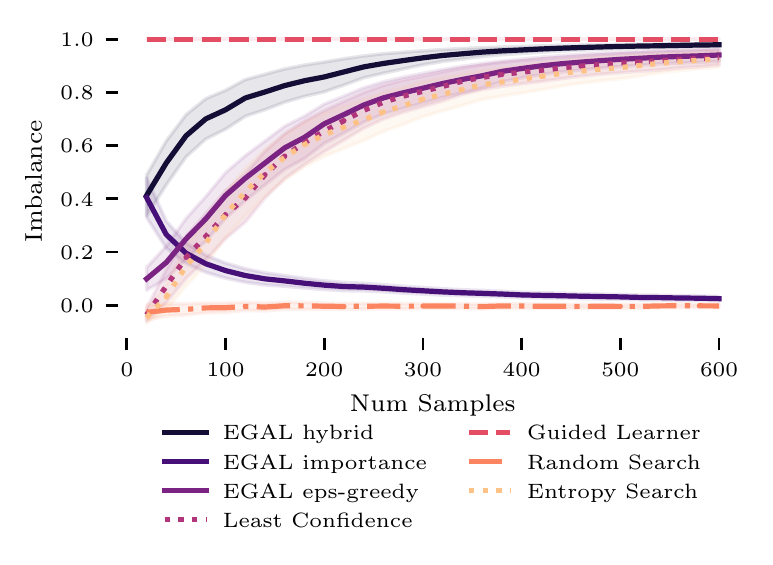}\put(20,35){\tiny{card}}\end{overpic}
\begin{overpic}[width=0.32\textwidth, trim=0 1.5cm 0 0,clip]{./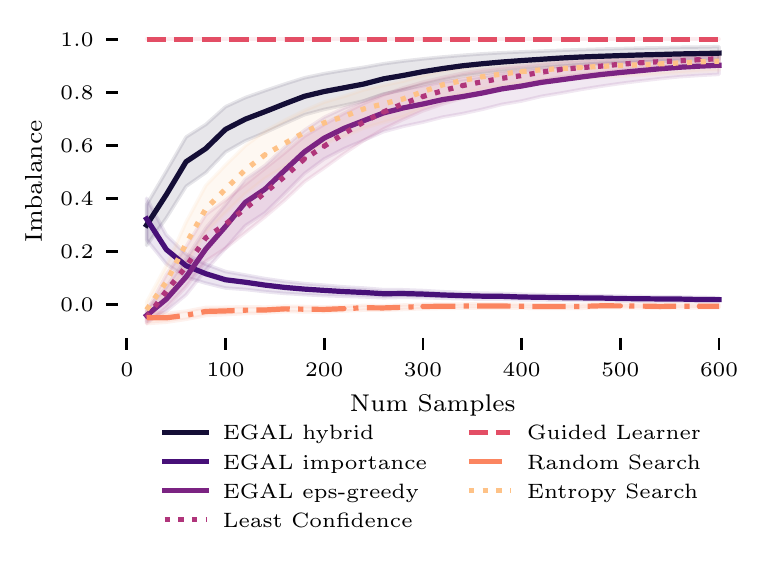}\put(20,35){\tiny{case}}\end{overpic}
\begin{overpic}[width=0.32\textwidth, trim=0 1.5cm 0 0,clip]{./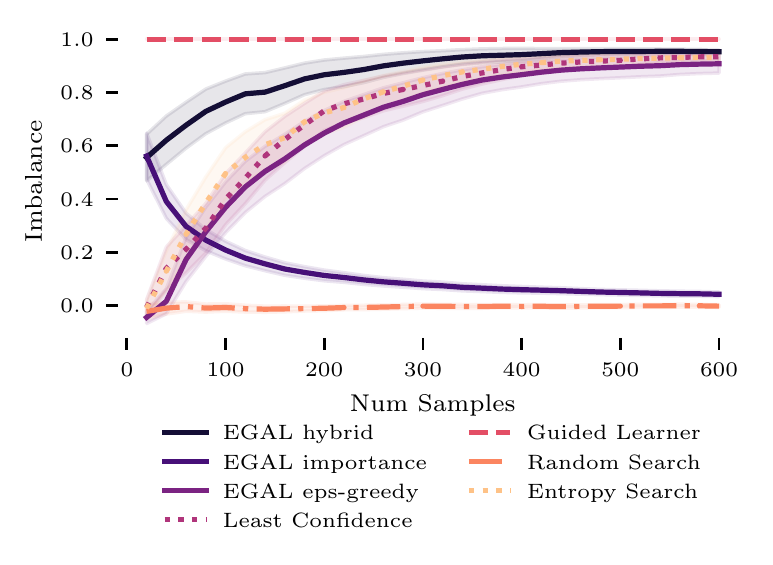}\put(20,35){\tiny{club}}\end{overpic}
\begin{overpic}[width=0.32\textwidth, trim=0 1.5cm 0 0,clip]{./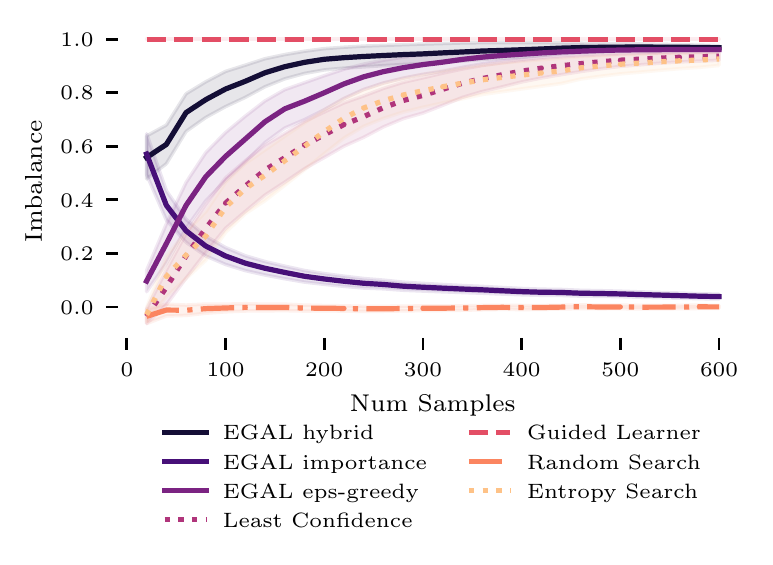}\put(20,35){\tiny{drive}}\end{overpic}
\begin{overpic}[width=0.32\textwidth, trim=0 1.5cm 0 0,clip]{./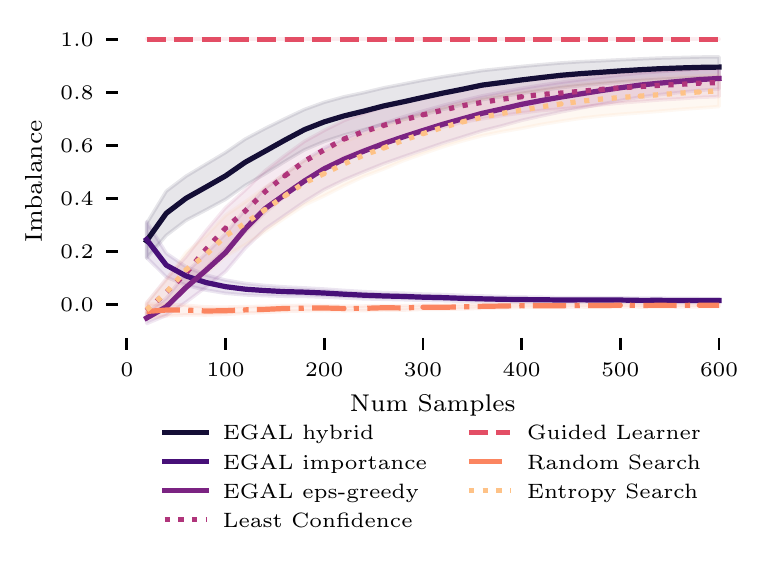}\put(20,35){\tiny{fit}}\end{overpic}
\begin{overpic}[width=0.32\textwidth, trim=0 1.5cm 0 0,clip]{./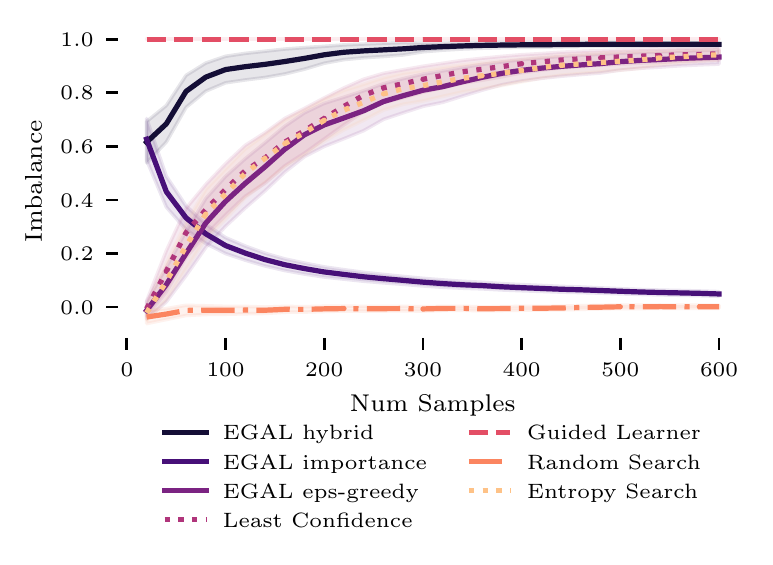}\put(20,35){\tiny{goals}}\end{overpic}
\begin{overpic}[width=0.32\textwidth, trim=0 1.5cm 0 0,clip]{./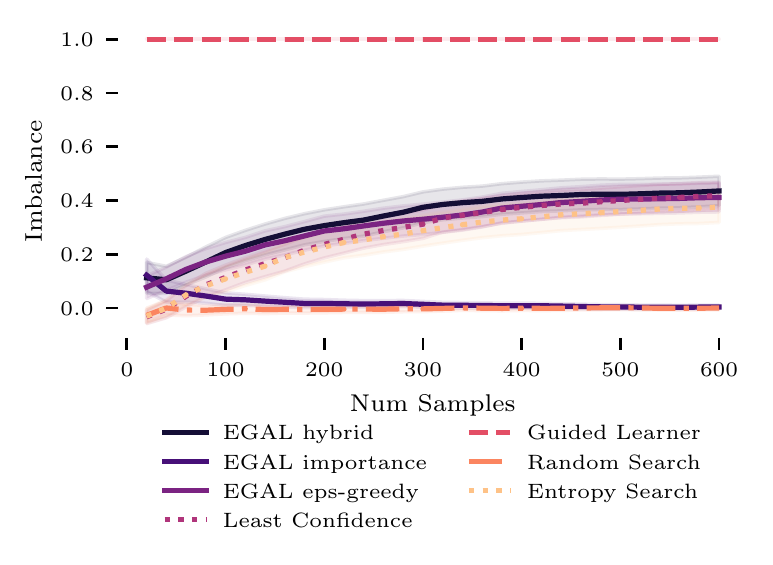}\put(20,35){\tiny{hard}}\end{overpic}
\begin{overpic}[width=0.32\textwidth, trim=0 1.5cm 0 0,clip]{./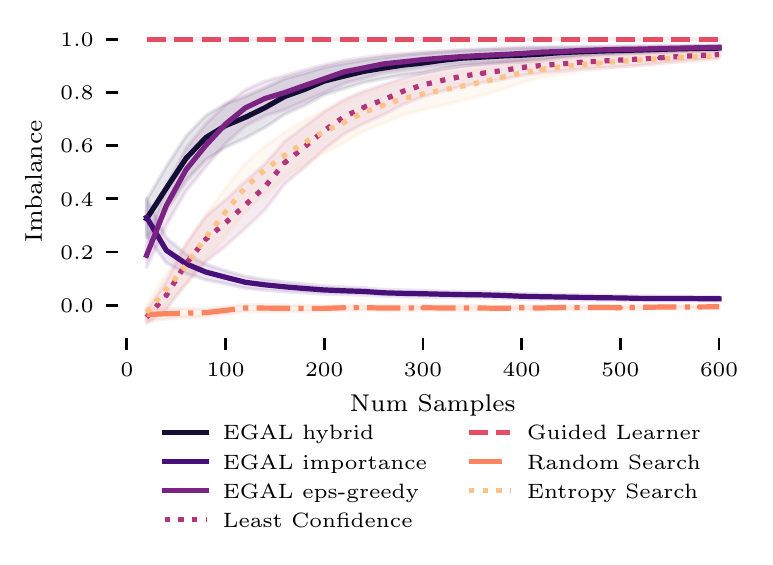}\put(20,35){\tiny{hero}}\end{overpic}
\begin{overpic}[width=0.32\textwidth, trim=0 1.5cm 0 0,clip]{./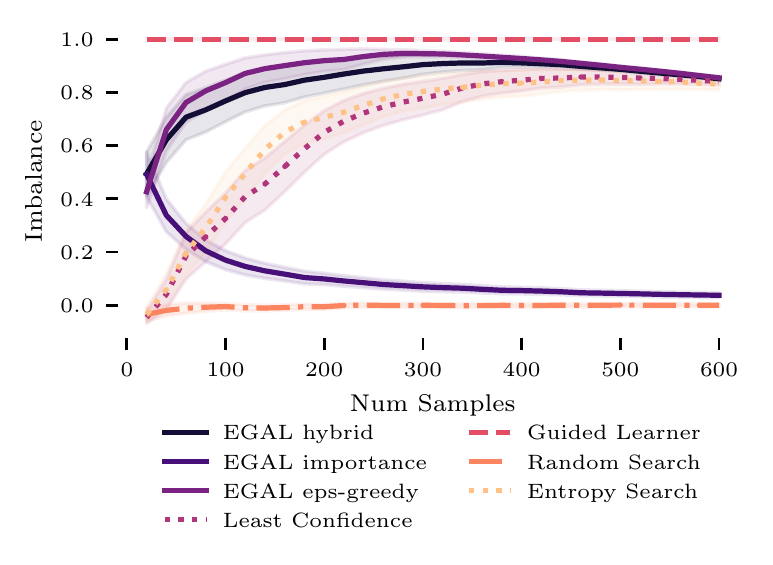}\put(20,35){\tiny{house}}\end{overpic}
\begin{overpic}[width=0.32\textwidth, trim=0 1.5cm 0 0,clip]{./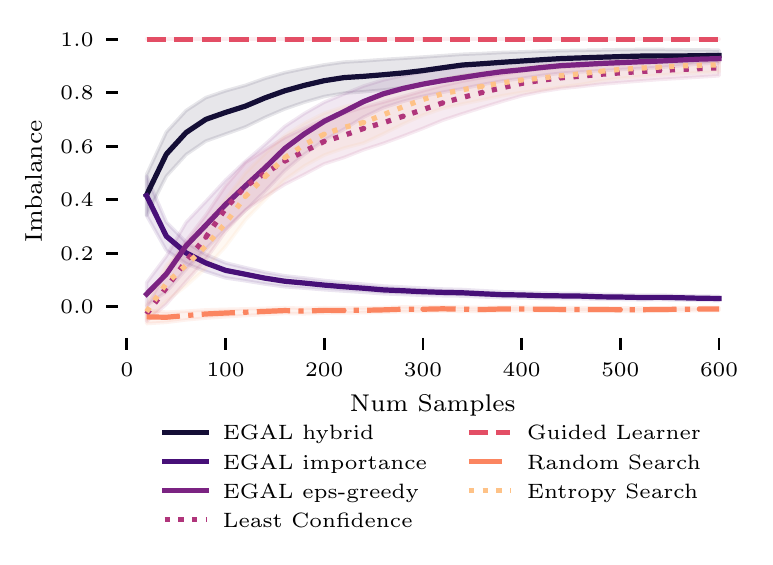}\put(20,35){\tiny{interest}}\end{overpic}
\begin{overpic}[width=0.32\textwidth, trim=0 1.5cm 0 0,clip]{./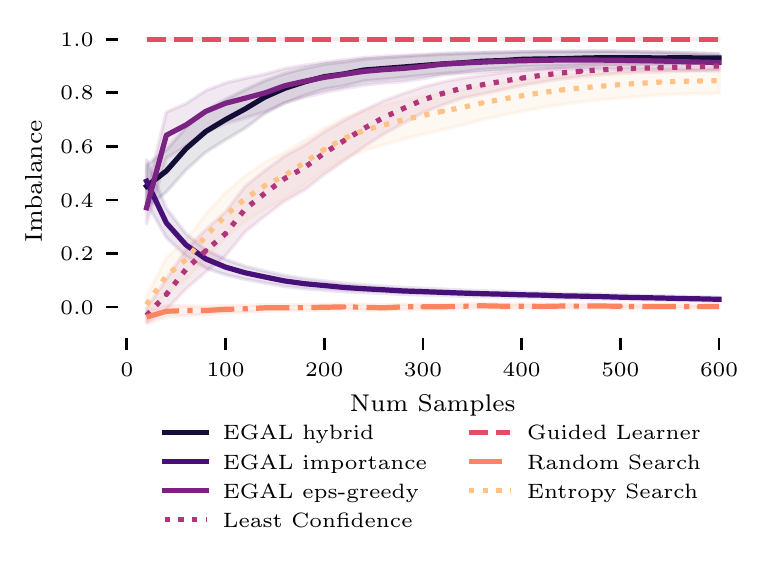}\put(20,35){\tiny{jobs}}\end{overpic}
\begin{overpic}[width=0.32\textwidth, trim=0 1.5cm 0 0,clip]{./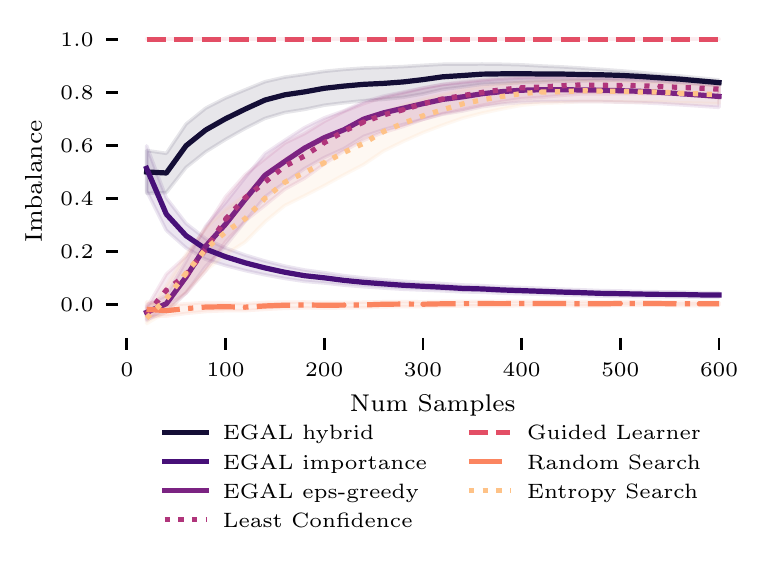}\put(20,35){\tiny{magic}}\end{overpic}
\begin{overpic}[width=0.32\textwidth, trim=0 1.5cm 0 0,clip]{./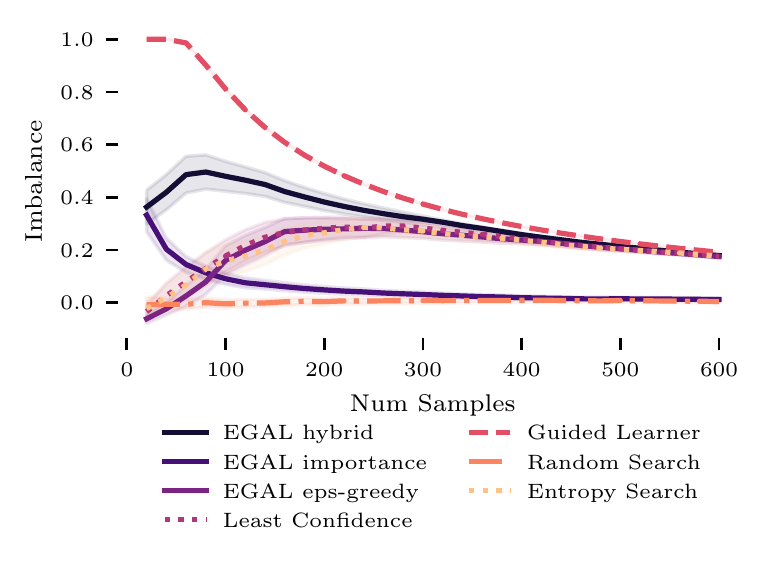}\put(20,35){\tiny{manual}}\end{overpic}
\begin{overpic}[width=0.32\textwidth, trim=0 1.5cm 0 0,clip]{./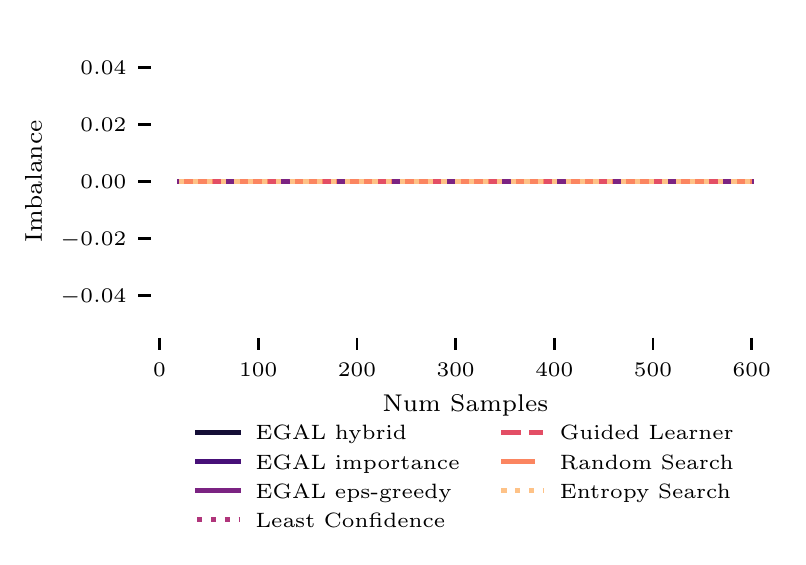}\put(20,35){\tiny{market}}\end{overpic}
\begin{overpic}[width=0.32\textwidth, trim=0 1.5cm 0 0,clip]{./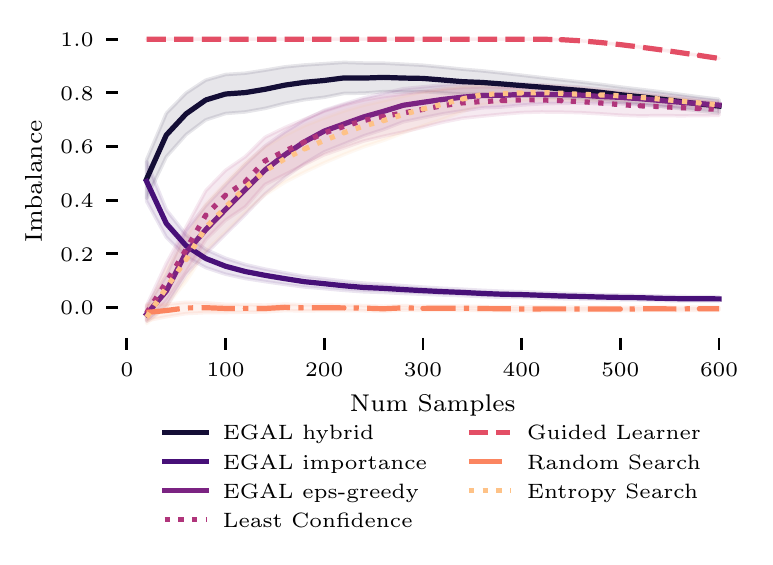}\put(20,35){\tiny{memory}}\end{overpic}
\begin{overpic}[width=0.32\textwidth, trim=0 1.5cm 0 0,clip]{./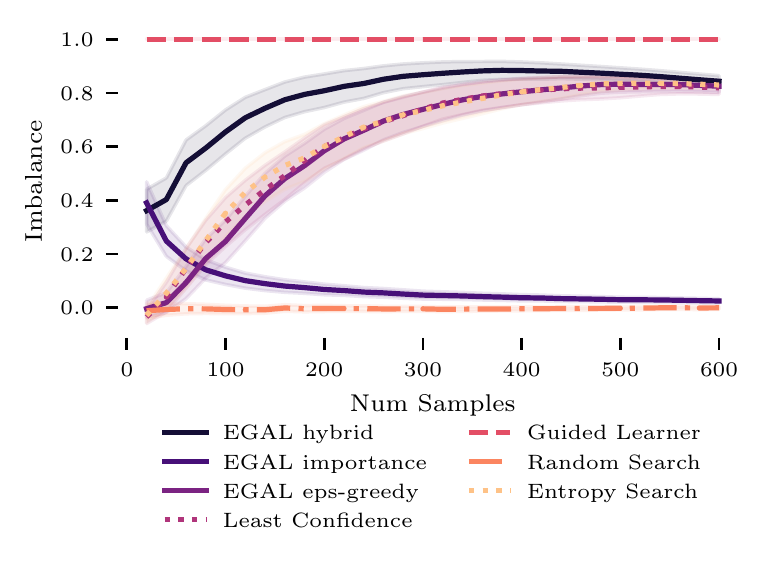}\put(20,35){\tiny{ride}}\end{overpic}
\begin{overpic}[width=0.32\textwidth, trim=0 1.5cm 0 0,clip]{./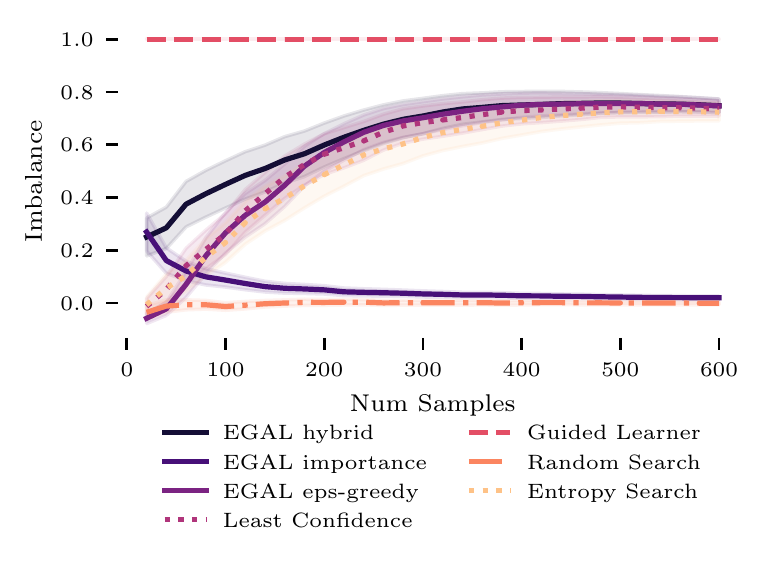}\put(20,35){\tiny{stick}}\end{overpic}
 \begin{overpic}[width=0.32\textwidth, trim=0 1.5cm 0 0,clip]{./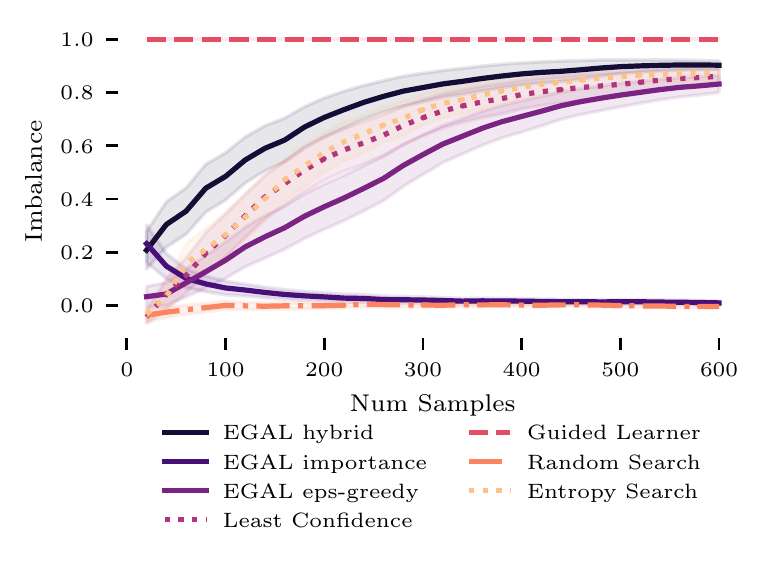}\put(20,35){\tiny{tank}}\end{overpic}
\begin{overpic}[width=0.32\textwidth, trim=0 1.5cm 0 0,clip]{./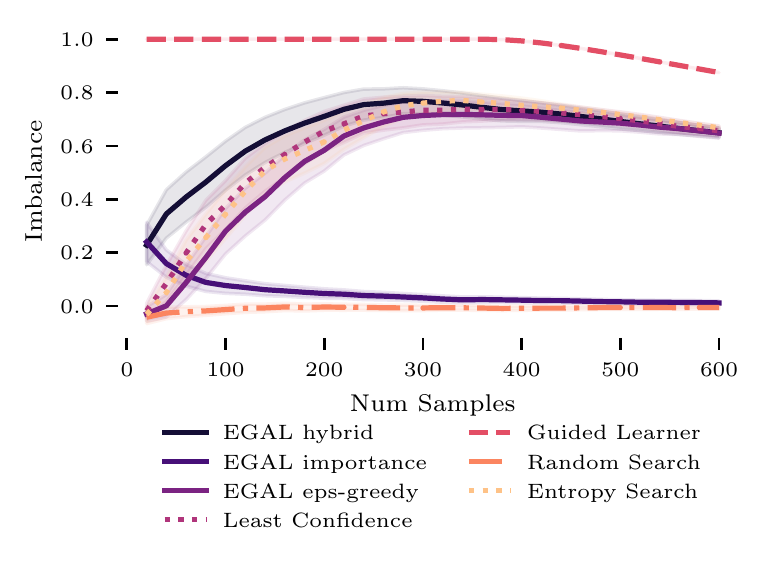}\put(20,35){\tiny{trade}}\end{overpic}
\begin{overpic}[width=0.32\textwidth, trim=0 1.5cm 0 0,clip]{./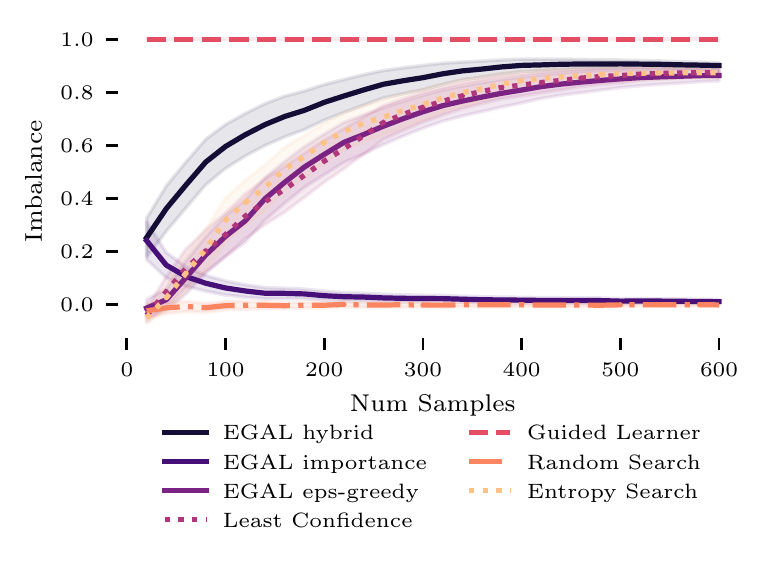}\put(20,35){\tiny{video}}\end{overpic}
\begin{overpic}[width=0.32\textwidth, trim=0 0 0 4.25cm,clip]{./plots/05perc/imbalance-supp_video_rare-50_rare-0_01_bs-20_samples-600.pdf}\end{overpic}
\caption{Average imbalance for each of the individual words with a ratio of frequent to rare class of 1:100.}
  \end{center}
\end{figure}

\begin{figure}[h]
  \begin{center}
\begin{overpic}[width=0.32\textwidth, trim=0 1.5cm 0 0,clip]{./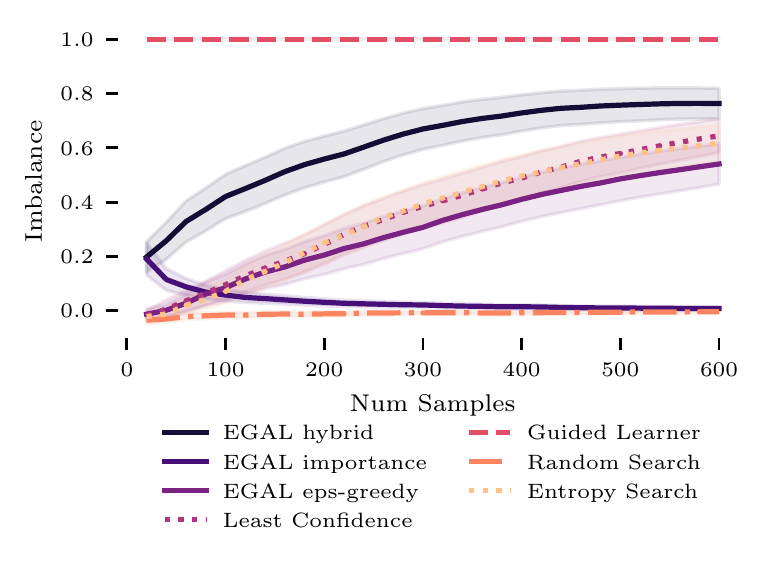}\put(20,35){\tiny{back}}\end{overpic}
\begin{overpic}[width=0.32\textwidth, trim=0 1.5cm 0 0,clip]{./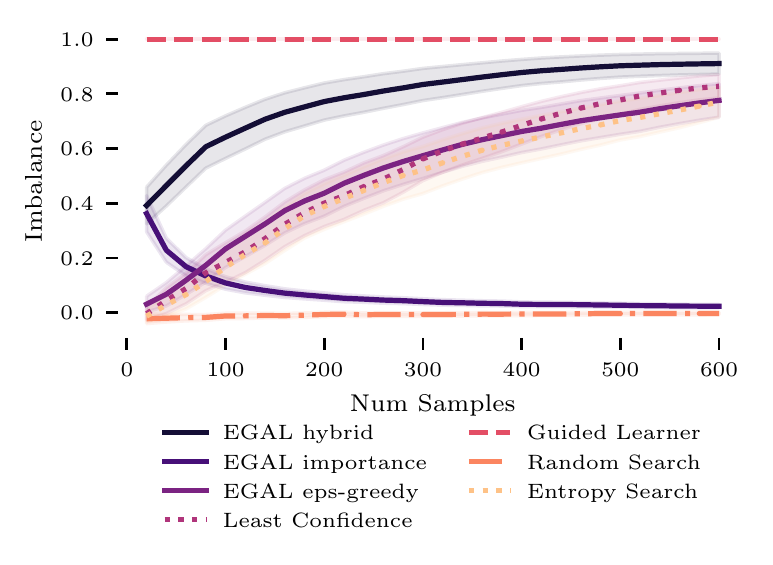}\put(20,35){\tiny{card}}\end{overpic}
\begin{overpic}[width=0.32\textwidth, trim=0 1.5cm 0 0,clip]{./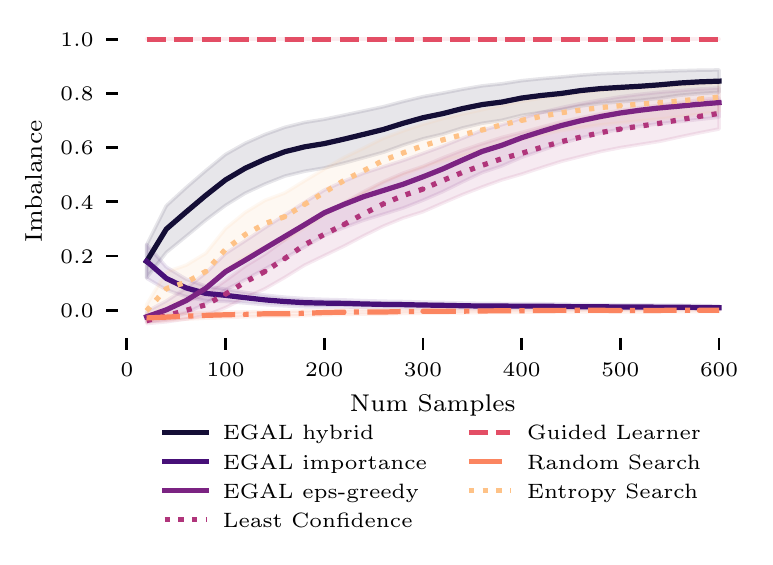}\put(20,35){\tiny{case}}\end{overpic}
\begin{overpic}[width=0.32\textwidth, trim=0 1.5cm 0 0,clip]{./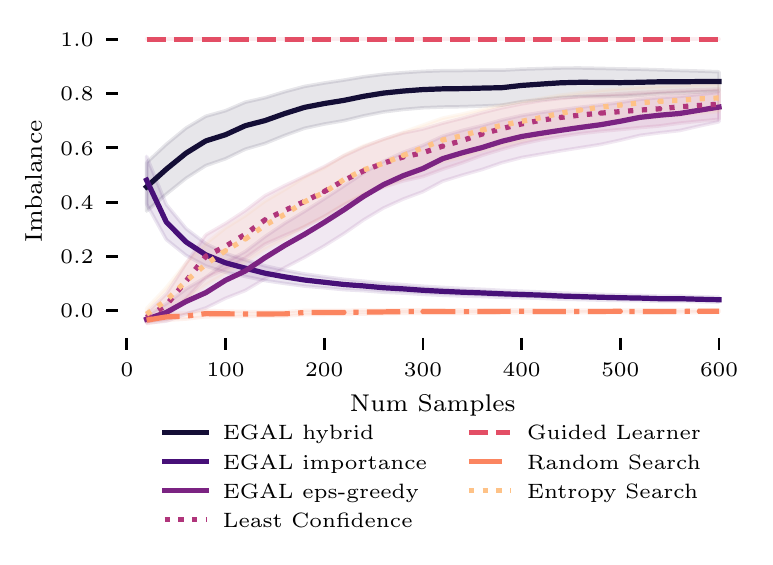}\put(20,35){\tiny{club}}\end{overpic}
\begin{overpic}[width=0.32\textwidth, trim=0 1.5cm 0 0,clip]{./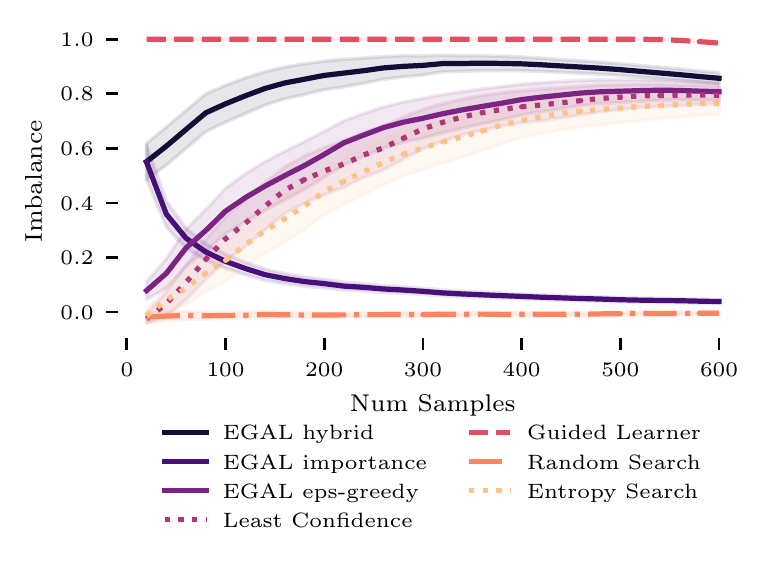}\put(20,35){\tiny{drive}}\end{overpic}
\begin{overpic}[width=0.32\textwidth, trim=0 1.5cm 0 0,clip]{./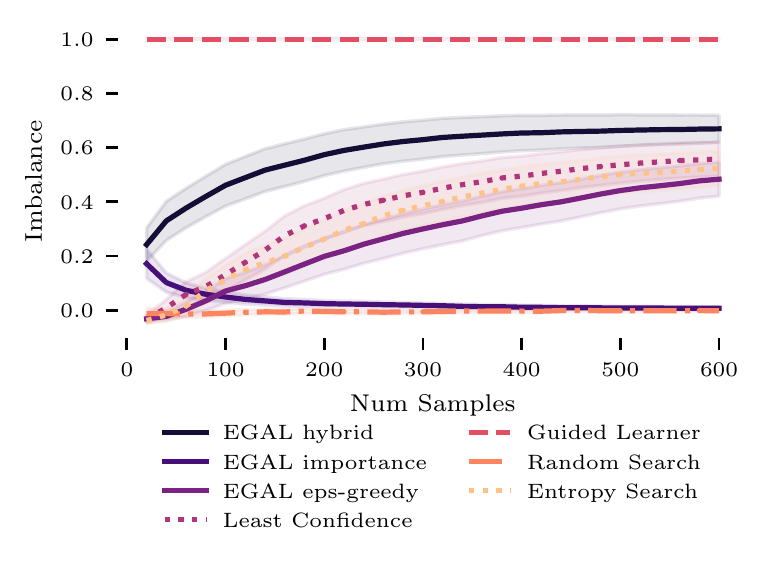}\put(20,35){\tiny{fit}}\end{overpic}
\begin{overpic}[width=0.32\textwidth, trim=0 1.5cm 0 0,clip]{./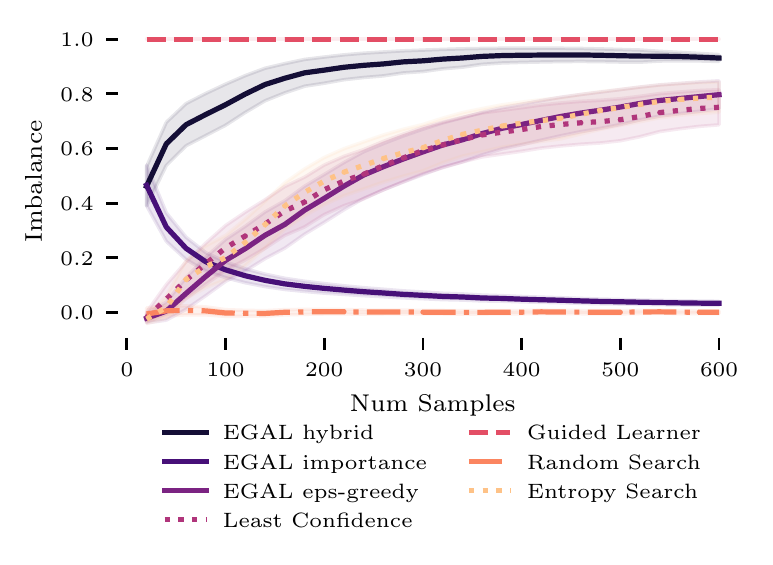}\put(20,35){\tiny{goals}}\end{overpic}
\begin{overpic}[width=0.32\textwidth, trim=0 1.5cm 0 0,clip]{./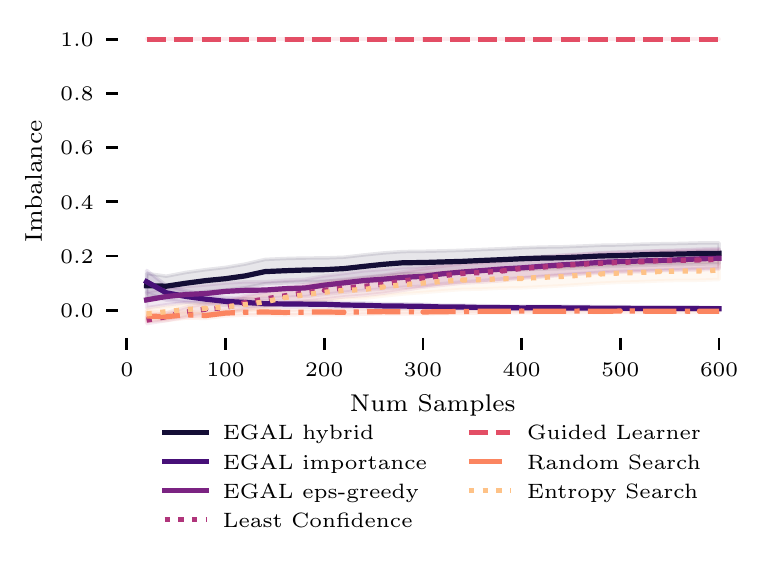}\put(20,35){\tiny{hard}}\end{overpic}
\begin{overpic}[width=0.32\textwidth, trim=0 1.5cm 0 0,clip]{./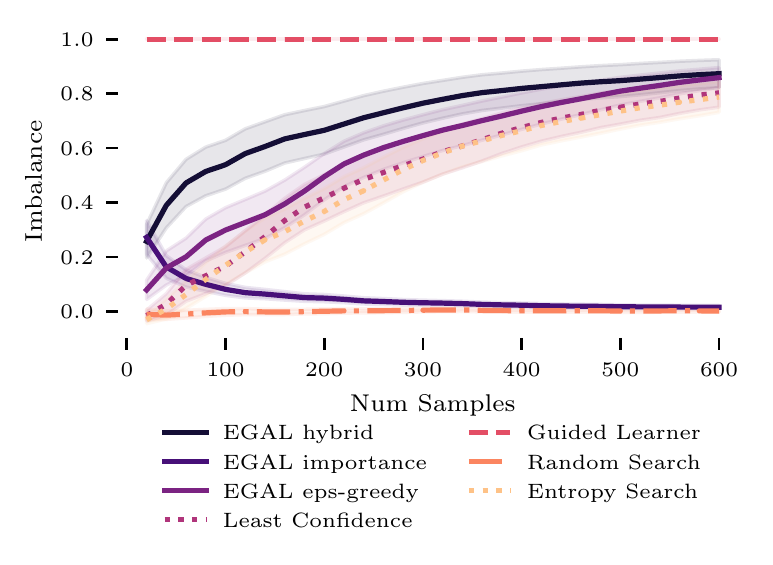}\put(20,35){\tiny{hero}}\end{overpic}
\begin{overpic}[width=0.32\textwidth, trim=0 1.5cm 0 0,clip]{./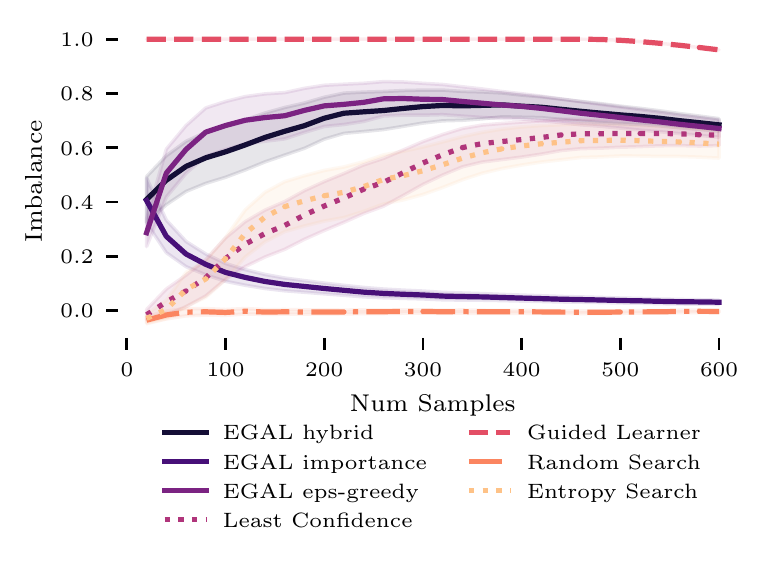}\put(20,35){\tiny{house}}\end{overpic}
\begin{overpic}[width=0.32\textwidth, trim=0 1.5cm 0 0,clip]{./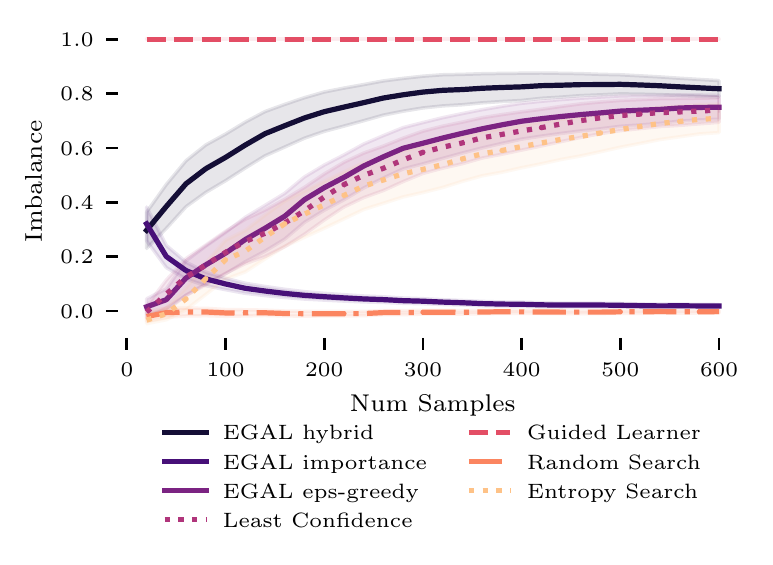}\put(20,35){\tiny{interest}}\end{overpic}
\begin{overpic}[width=0.32\textwidth, trim=0 1.5cm 0 0,clip]{./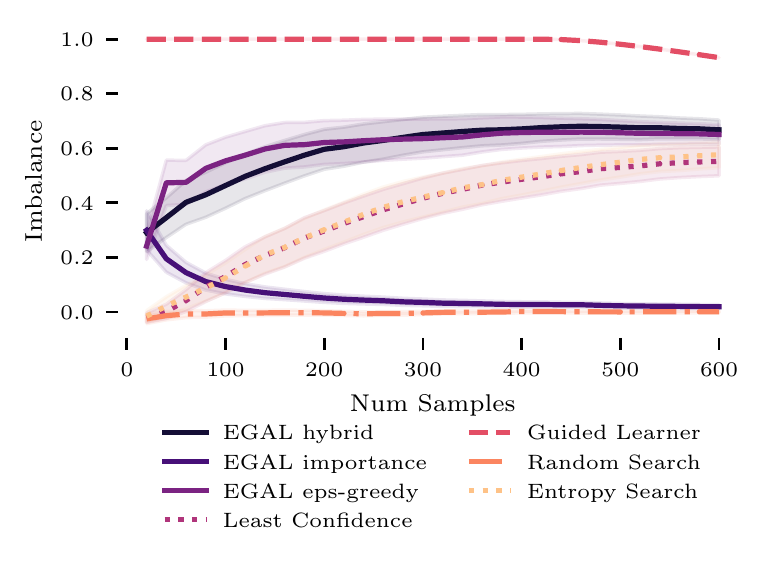}\put(20,35){\tiny{jobs}}\end{overpic}
\begin{overpic}[width=0.32\textwidth, trim=0 1.5cm 0 0,clip]{./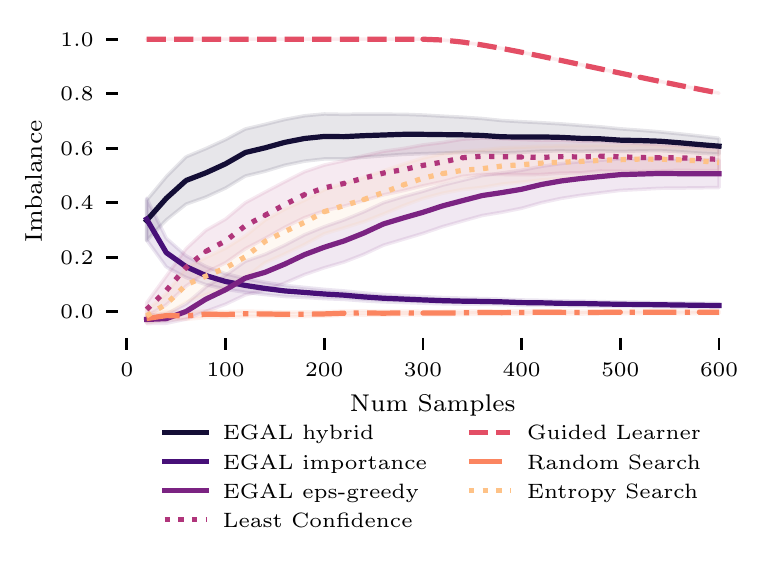}\put(20,35){\tiny{magic}}\end{overpic}
\begin{overpic}[width=0.32\textwidth, trim=0 1.5cm 0 0,clip]{./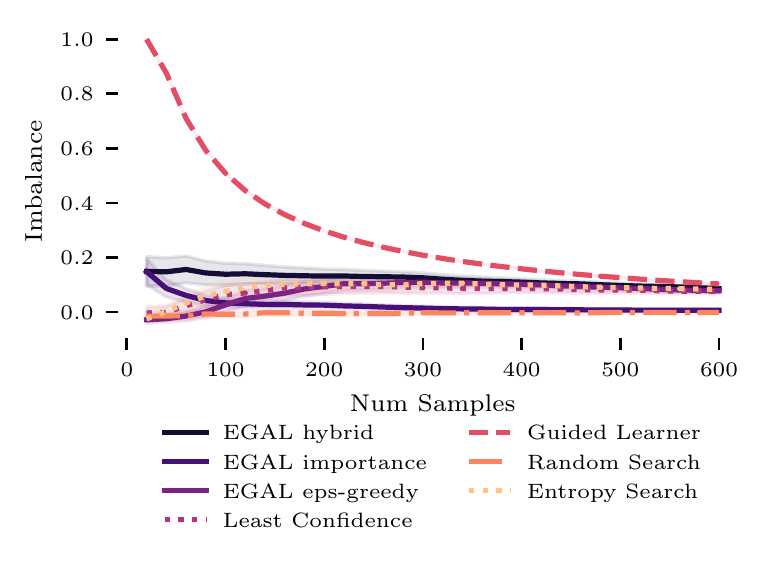}\put(20,35){\tiny{manual}}\end{overpic}
\begin{overpic}[width=0.32\textwidth, trim=0 1.5cm 0 0,clip]{./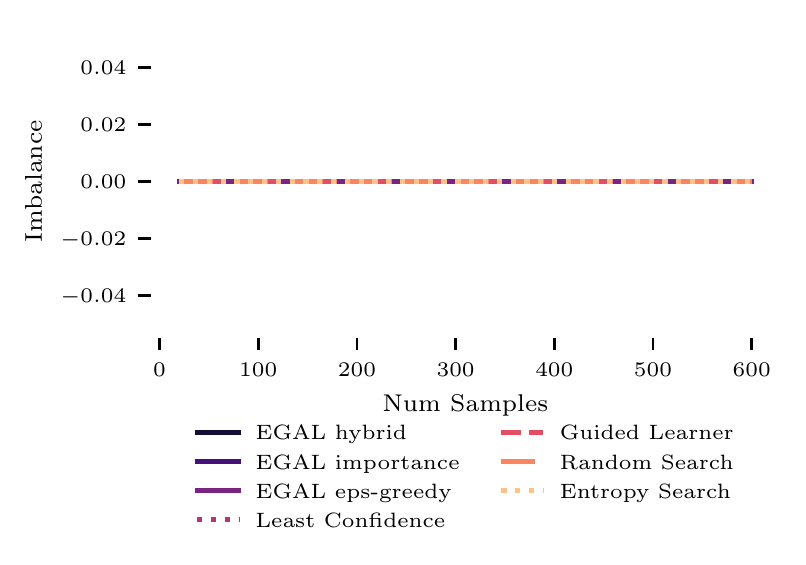}\put(20,35){\tiny{market}}\end{overpic}
\begin{overpic}[width=0.32\textwidth, trim=0 1.5cm 0 0,clip]{./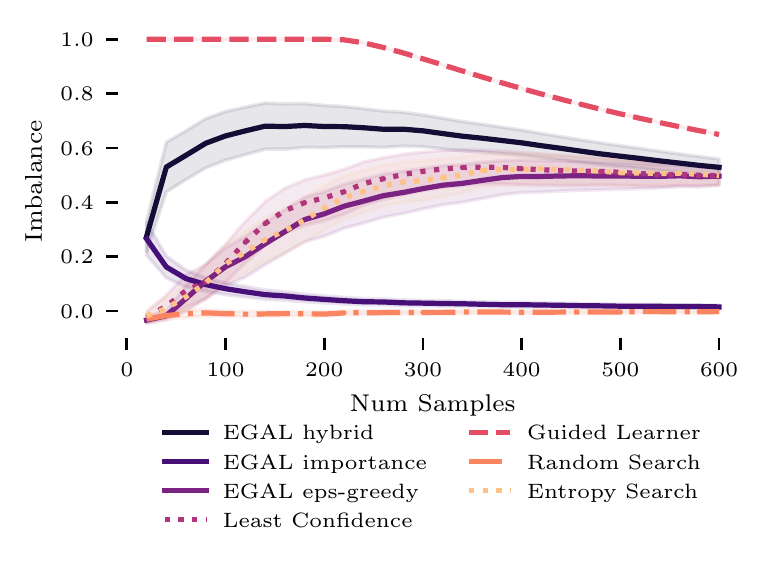}\put(20,35){\tiny{memory}}\end{overpic}
\begin{overpic}[width=0.32\textwidth, trim=0 1.5cm 0 0,clip]{./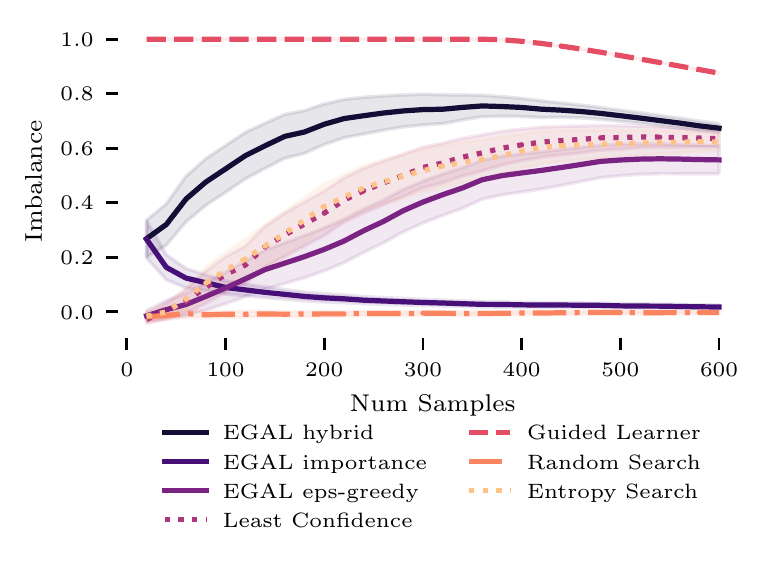}\put(20,35){\tiny{ride}}\end{overpic}
\begin{overpic}[width=0.32\textwidth, trim=0 1.5cm 0 0,clip]{./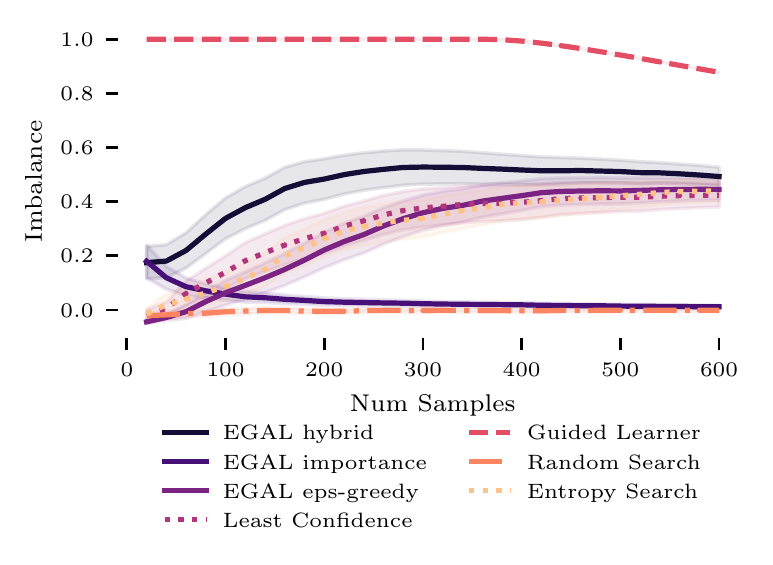}\put(20,35){\tiny{stick}}\end{overpic}
 \begin{overpic}[width=0.32\textwidth, trim=0 1.5cm 0 0,clip]{./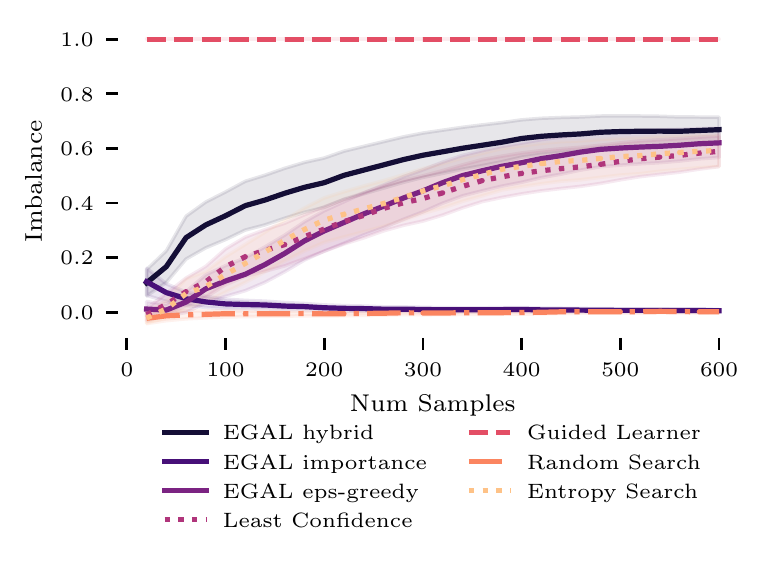}\put(20,35){\tiny{tank}}\end{overpic}
\begin{overpic}[width=0.32\textwidth, trim=0 1.5cm 0 0,clip]{./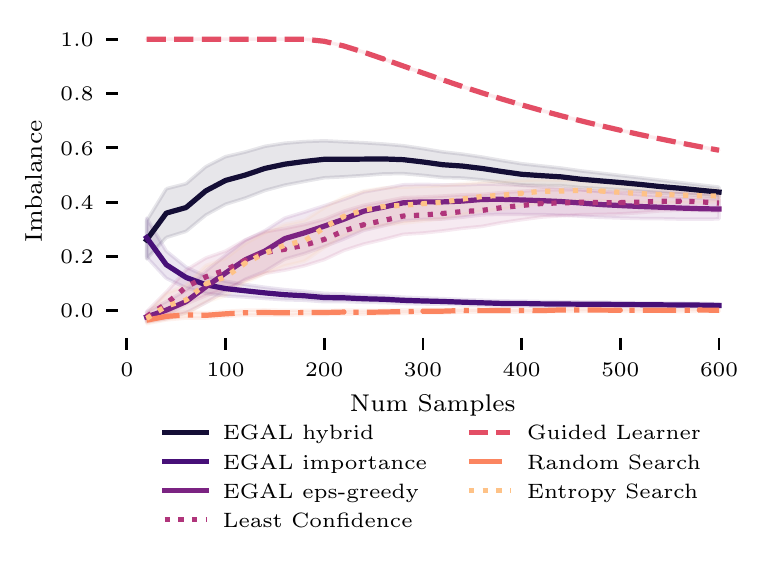}\put(20,35){\tiny{trade}}\end{overpic}
\begin{overpic}[width=0.32\textwidth, trim=0 1.5cm 0 0,clip]{./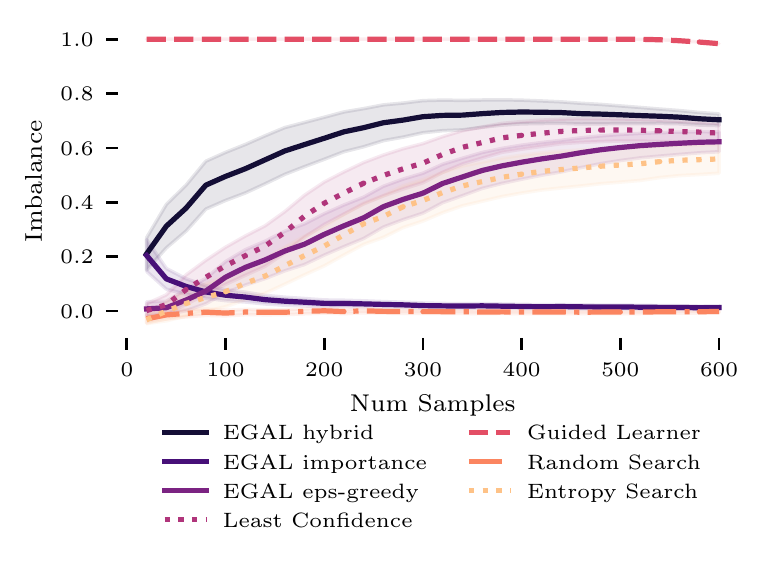}\put(20,35){\tiny{video}}\end{overpic}
\begin{overpic}[width=0.32\textwidth, trim=0 0 0 4.25cm,clip]{./plots/05perc/imbalance-supp_video_rare-50_rare-0_005_bs-20_samples-600.pdf}\end{overpic}
\caption{Average imbalance for each of the individual words with a ratio of frequent to rare class of 1:200.}
  \end{center}
\end{figure}

\begin{figure}[h]
  \begin{center}
\begin{overpic}[width=0.32\textwidth, trim=0 1.5cm 0 0,clip]{./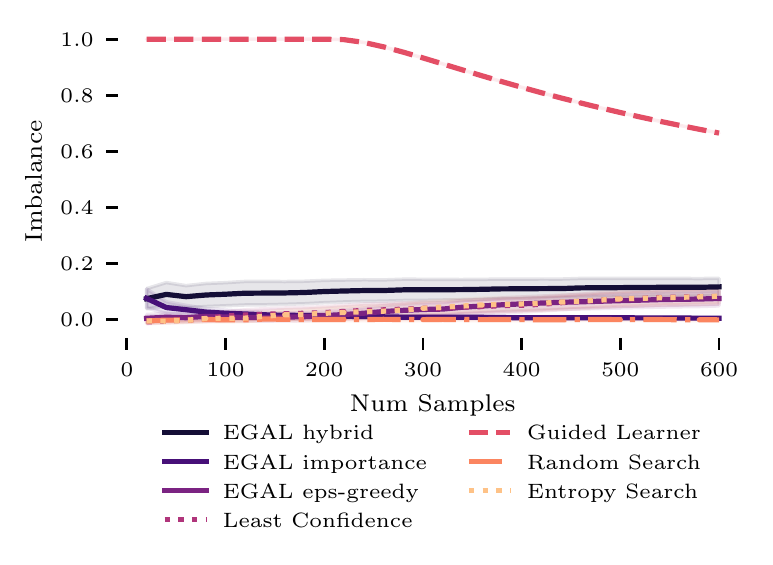}\put(20,35){\tiny{back}}\end{overpic}
\begin{overpic}[width=0.32\textwidth, trim=0 1.5cm 0 0,clip]{./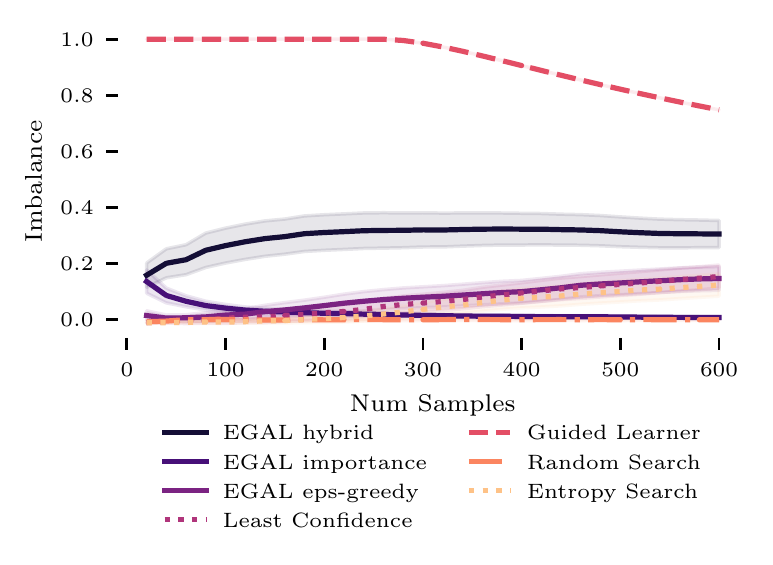}\put(20,35){\tiny{card}}\end{overpic}
\begin{overpic}[width=0.32\textwidth, trim=0 1.5cm 0 0,clip]{./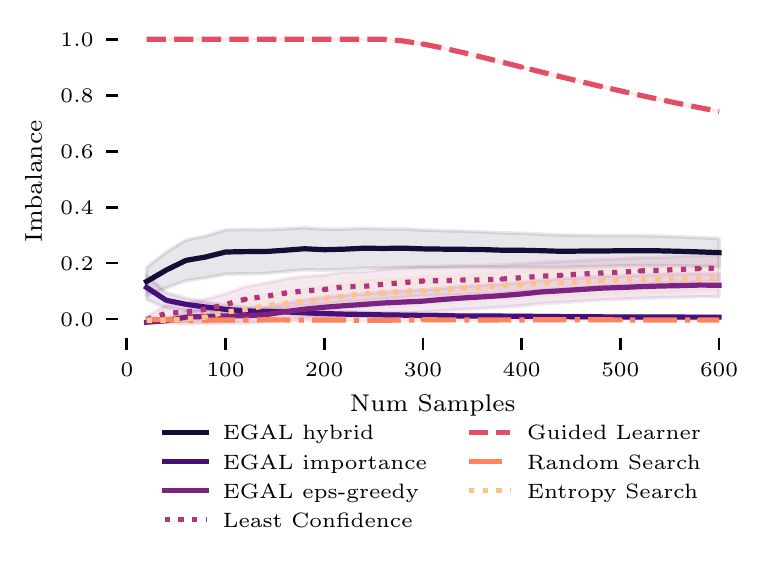}\put(20,35){\tiny{case}}\end{overpic}
\begin{overpic}[width=0.32\textwidth, trim=0 1.5cm 0 0,clip]{./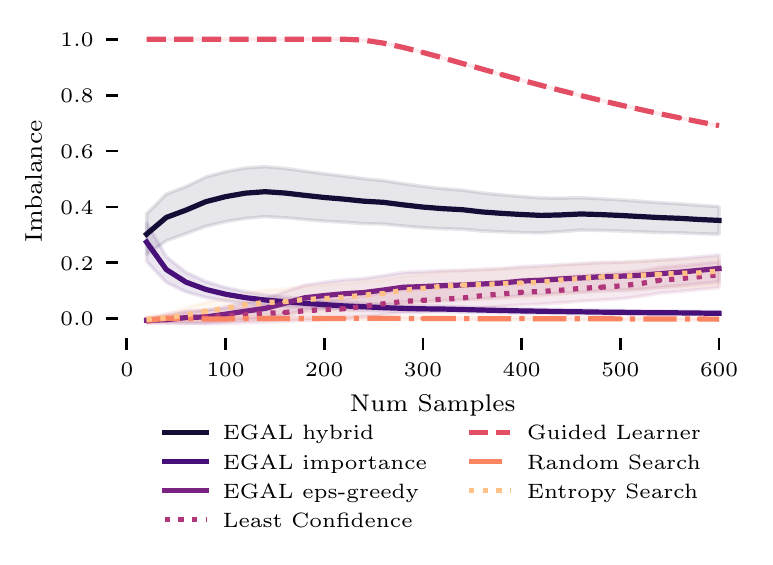}\put(20,35){\tiny{club}}\end{overpic}
\begin{overpic}[width=0.32\textwidth, trim=0 1.5cm 0 0,clip]{./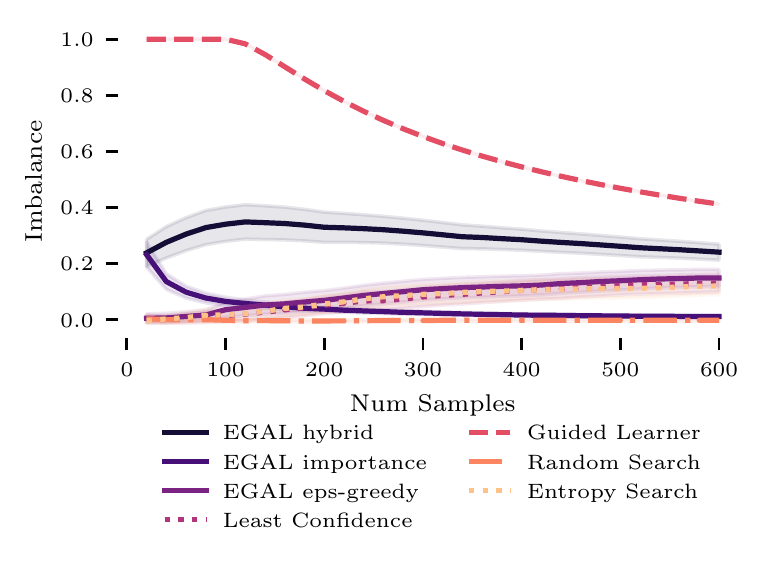}\put(20,35){\tiny{drive}}\end{overpic}
\begin{overpic}[width=0.32\textwidth, trim=0 1.5cm 0 0,clip]{./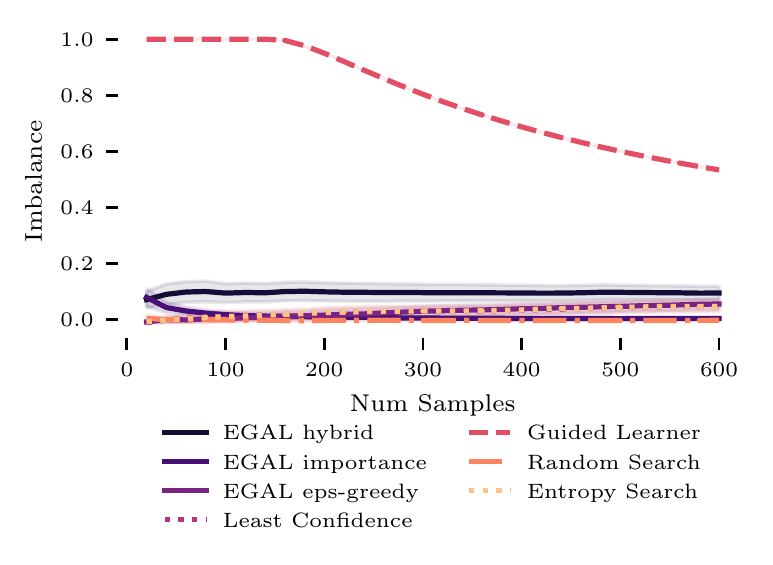}\put(20,35){\tiny{fit}}\end{overpic}
\begin{overpic}[width=0.32\textwidth, trim=0 1.5cm 0 0,clip]{./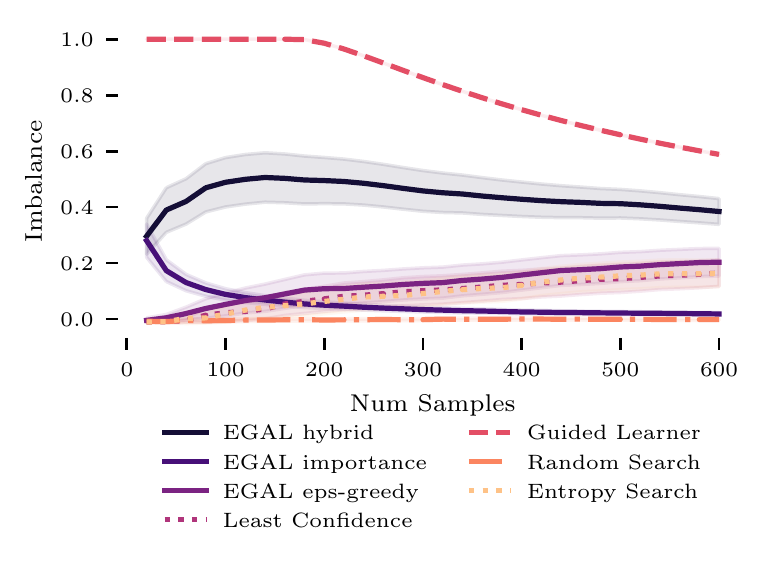}\put(20,35){\tiny{goals}}\end{overpic}
\begin{overpic}[width=0.32\textwidth, trim=0 1.5cm 0 0,clip]{./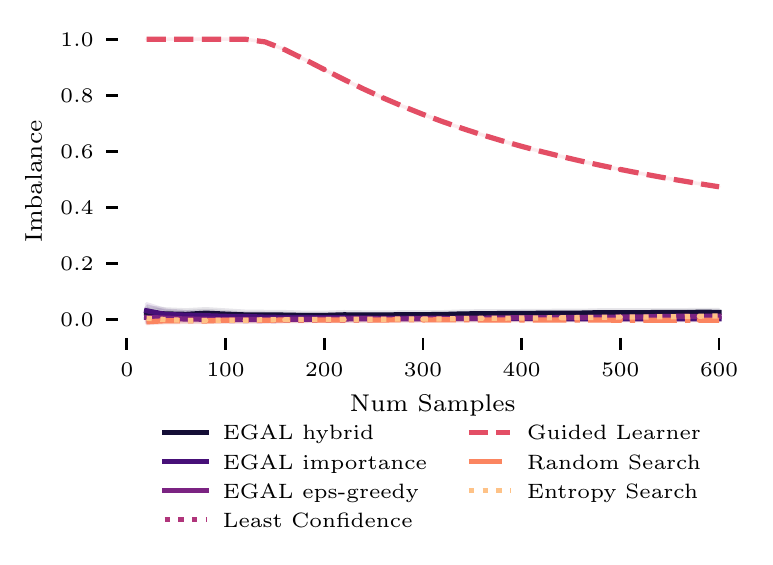}\put(20,35){\tiny{hard}}\end{overpic}
\begin{overpic}[width=0.32\textwidth, trim=0 1.5cm 0 0,clip]{./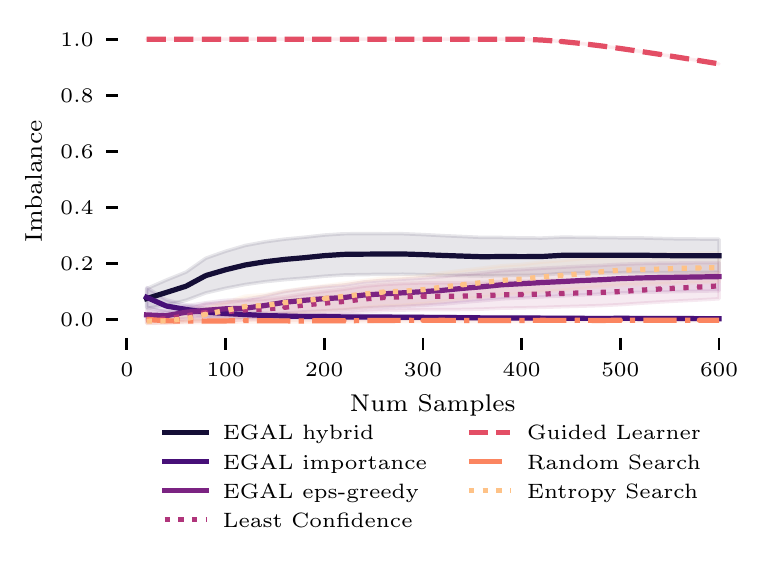}\put(20,35){\tiny{hero}}\end{overpic}
\begin{overpic}[width=0.32\textwidth, trim=0 1.5cm 0 0,clip]{./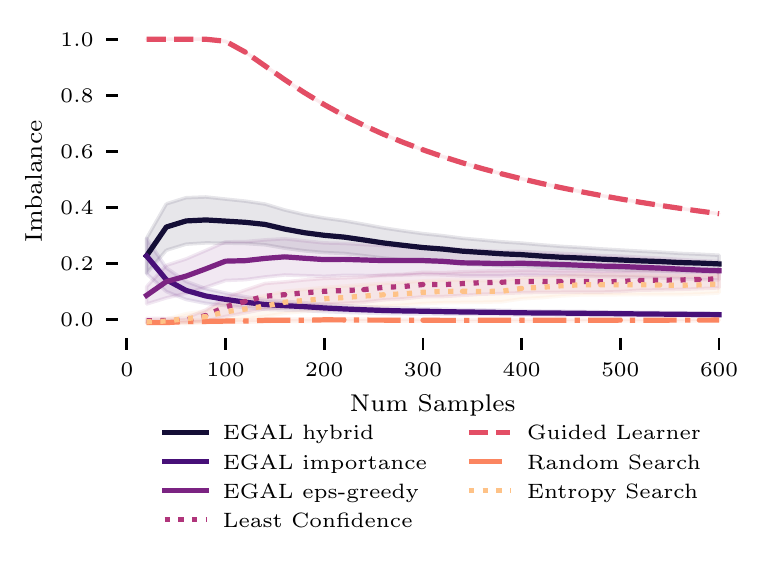}\put(20,35){\tiny{house}}\end{overpic}
\begin{overpic}[width=0.32\textwidth, trim=0 1.5cm 0 0,clip]{./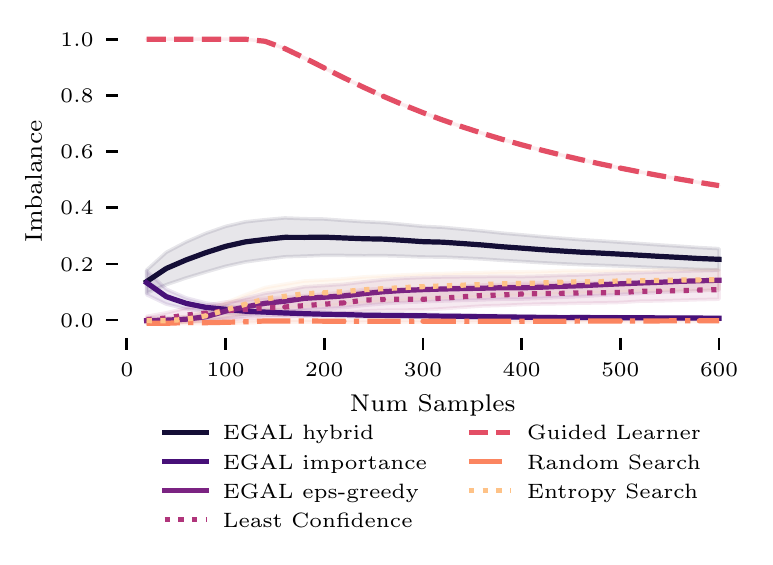}\put(20,35){\tiny{interest}}\end{overpic}
\begin{overpic}[width=0.32\textwidth, trim=0 1.5cm 0 0,clip]{./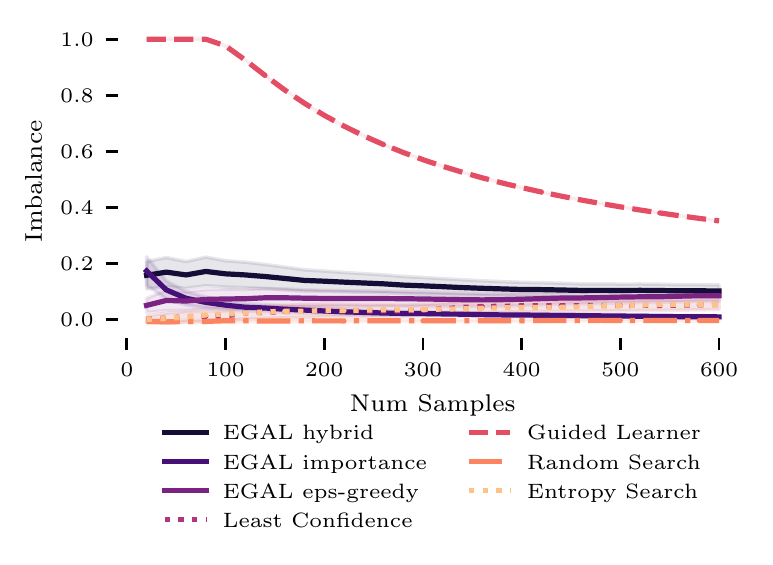}\put(20,35){\tiny{jobs}}\end{overpic}
\begin{overpic}[width=0.32\textwidth, trim=0 1.5cm 0 0,clip]{./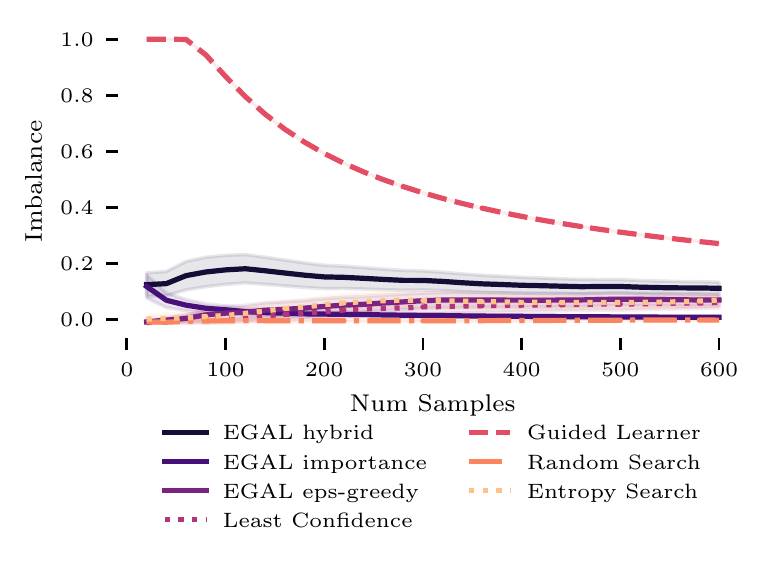}\put(20,35){\tiny{magic}}\end{overpic}
\begin{overpic}[width=0.32\textwidth, trim=0 1.5cm 0 0,clip]{./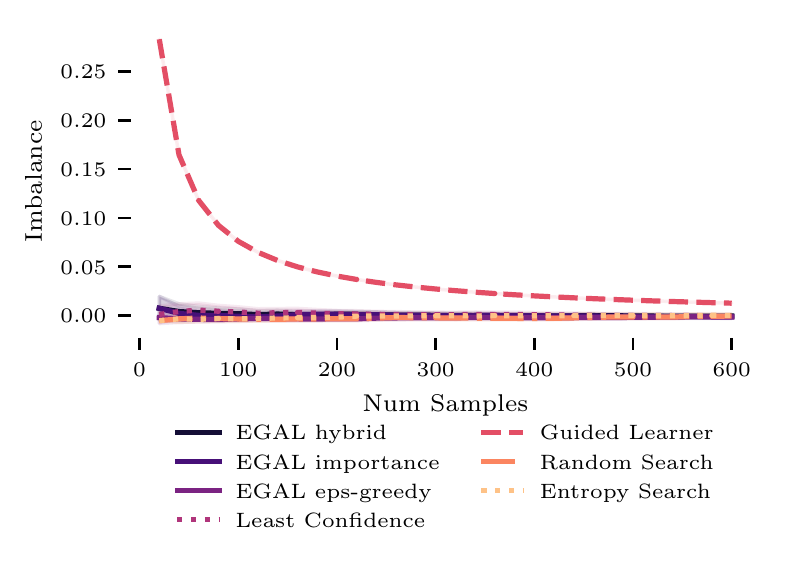}\put(20,35){\tiny{manual}}\end{overpic}
\begin{overpic}[width=0.32\textwidth, trim=0 1.5cm 0 0,clip]{./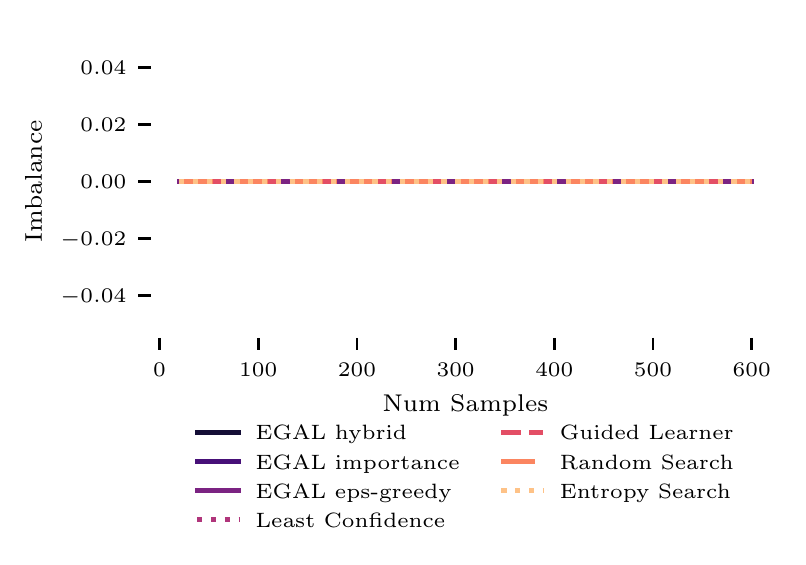}\put(20,35){\tiny{market}}\end{overpic}
\begin{overpic}[width=0.32\textwidth, trim=0 1.5cm 0 0,clip]{./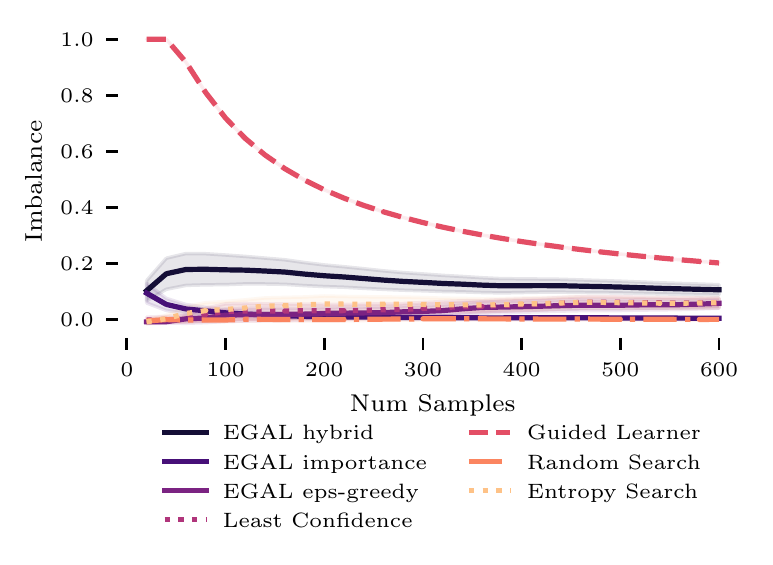}\put(20,35){\tiny{memory}}\end{overpic}
\begin{overpic}[width=0.32\textwidth, trim=0 1.5cm 0 0,clip]{./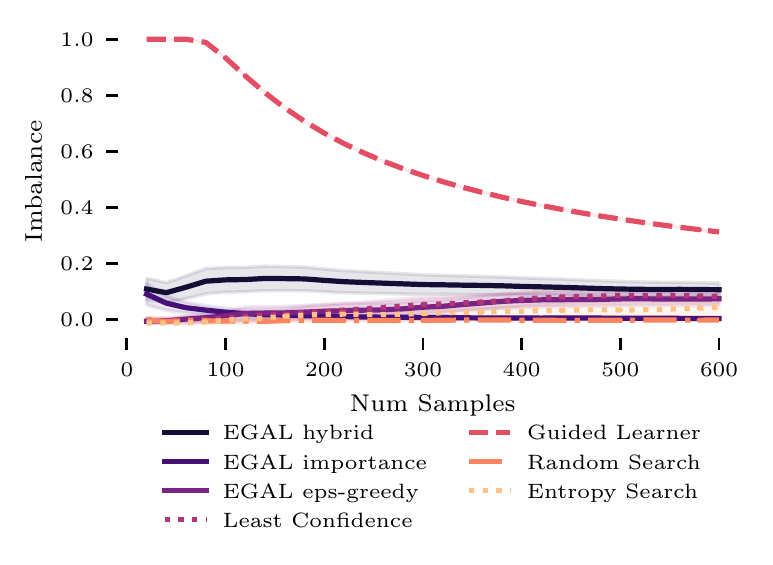}\put(20,35){\tiny{ride}}\end{overpic}
\begin{overpic}[width=0.32\textwidth, trim=0 1.5cm 0 0,clip]{./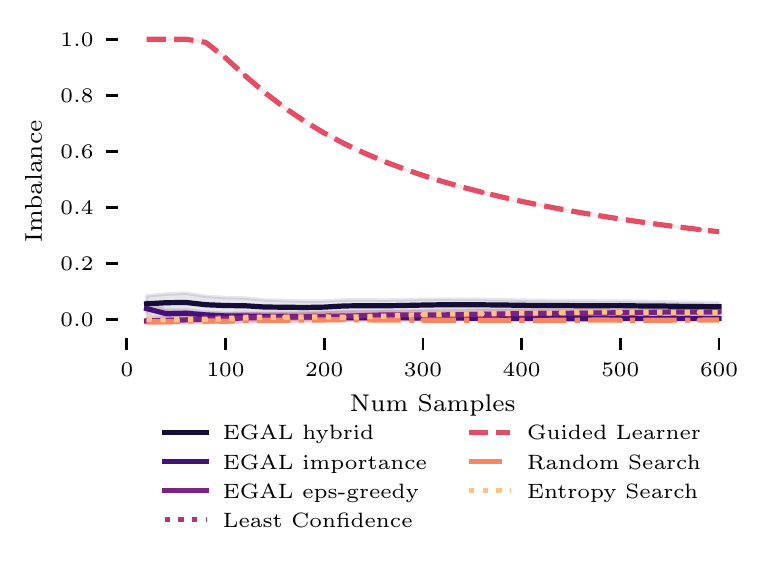}\put(20,35){\tiny{stick}}\end{overpic}
 \begin{overpic}[width=0.32\textwidth, trim=0 1.5cm 0 0,clip]{./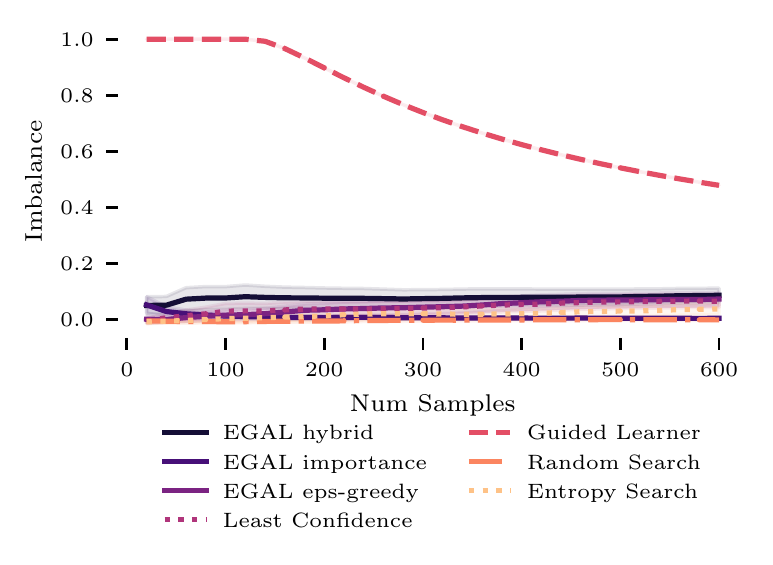}\put(20,35){\tiny{tank}}\end{overpic}
\begin{overpic}[width=0.32\textwidth, trim=0 1.5cm 0 0,clip]{./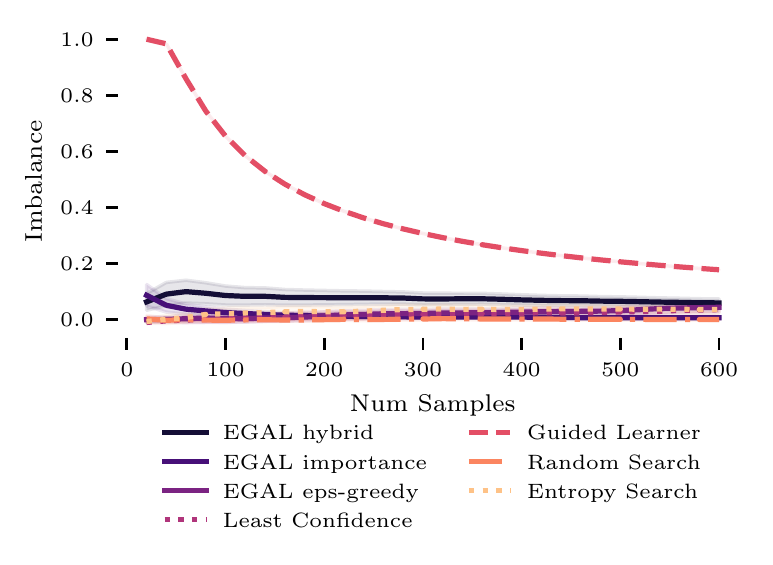}\put(20,35){\tiny{trade}}\end{overpic}
\begin{overpic}[width=0.32\textwidth, trim=0 1.5cm 0 0,clip]{./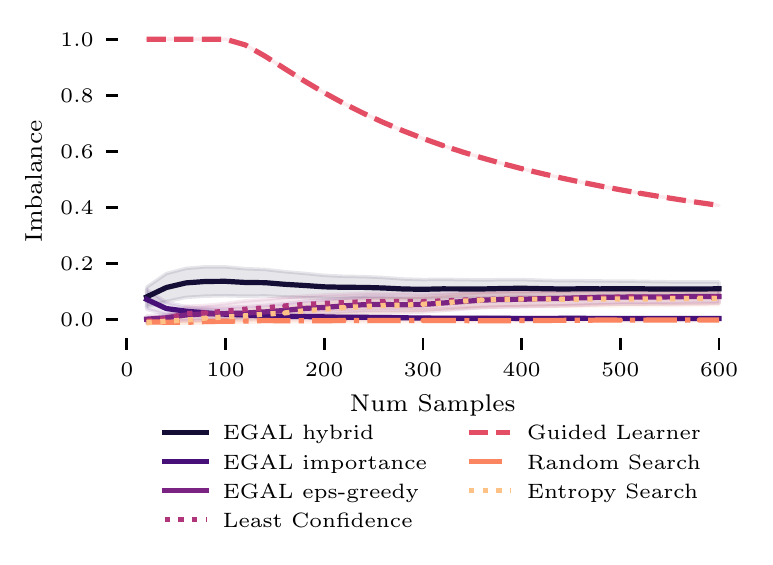}\put(20,35){\tiny{video}}\end{overpic}
\begin{overpic}[width=0.32\textwidth, trim=0 0 0 4.25cm,clip]{./plots/05perc/imbalance-supp_video_rare-50_rare-0_001_bs-20_samples-600.pdf}\end{overpic}
\caption{Average imbalance for each of the individual words with a ratio of frequent to rare class of 1:1000.}
  \end{center}
\end{figure}

\subsection{Coverage}
\label{sec:coverage}
\begin{figure}[h]
  \begin{center}
\begin{overpic}[width=0.32\textwidth, trim=0 1.5cm 0 0,clip]{./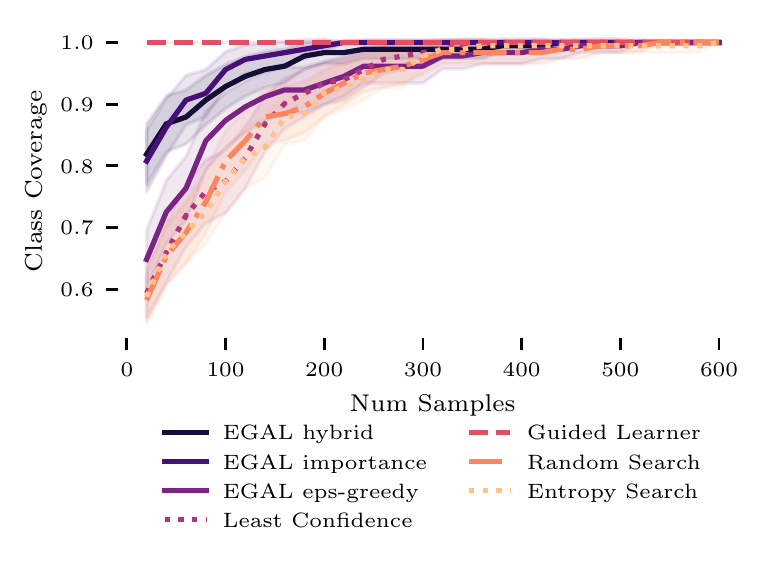}\put(20,35){\tiny{back}}\end{overpic}
\begin{overpic}[width=0.32\textwidth, trim=0 1.5cm 0 0,clip]{./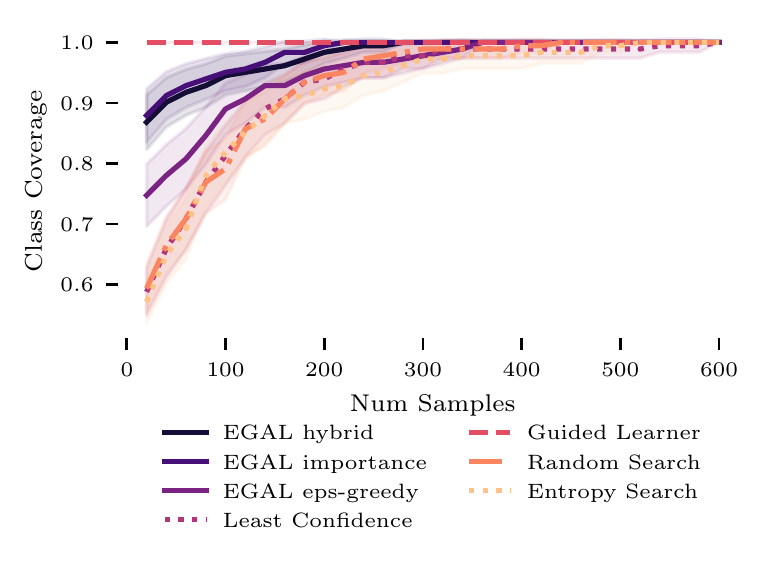}\put(20,35){\tiny{card}}\end{overpic}
\begin{overpic}[width=0.32\textwidth, trim=0 1.5cm 0 0,clip]{./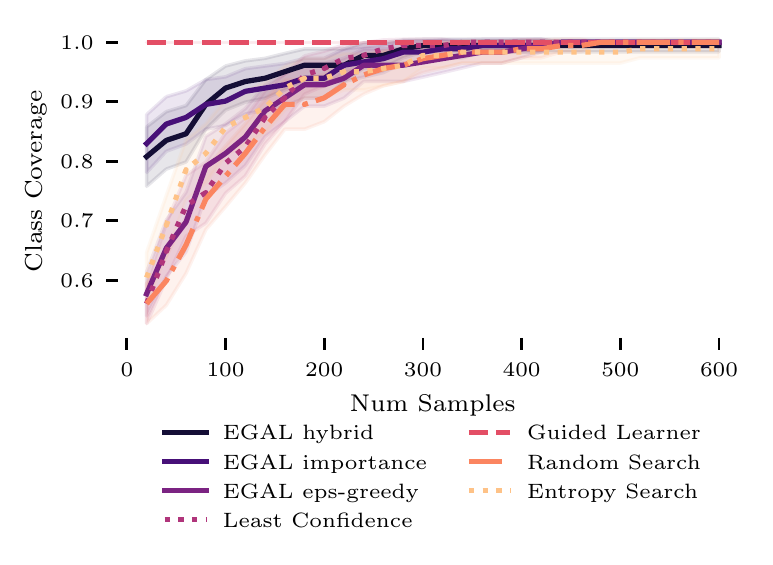}\put(20,35){\tiny{case}}\end{overpic}
\begin{overpic}[width=0.32\textwidth, trim=0 1.5cm 0 0,clip]{./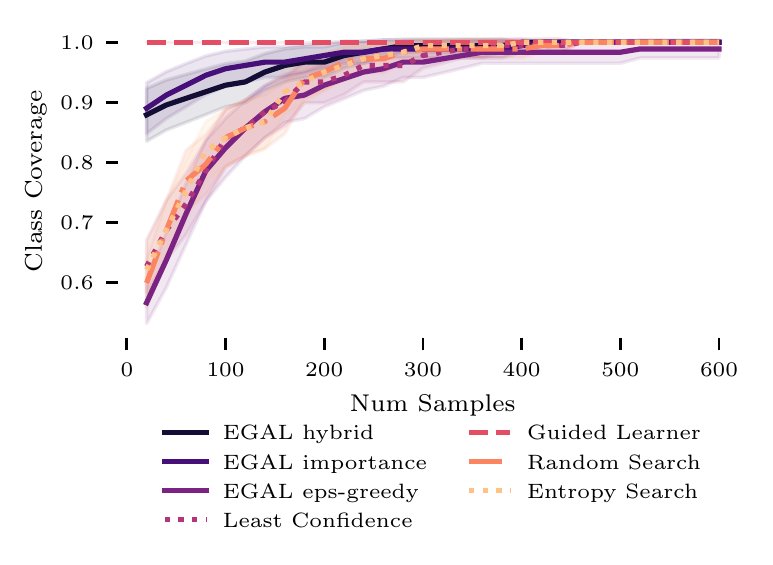}\put(20,35){\tiny{club}}\end{overpic}
\begin{overpic}[width=0.32\textwidth, trim=0 1.5cm 0 0,clip]{./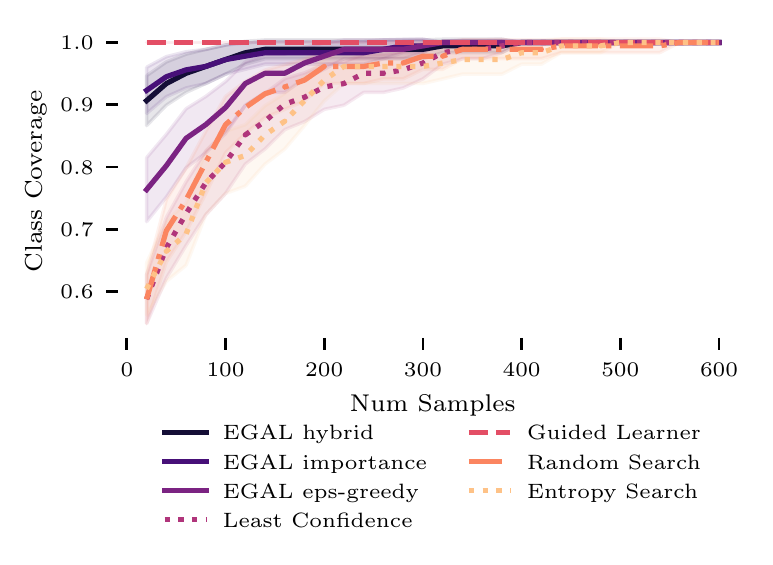}\put(20,35){\tiny{drive}}\end{overpic}
\begin{overpic}[width=0.32\textwidth, trim=0 1.5cm 0 0,clip]{./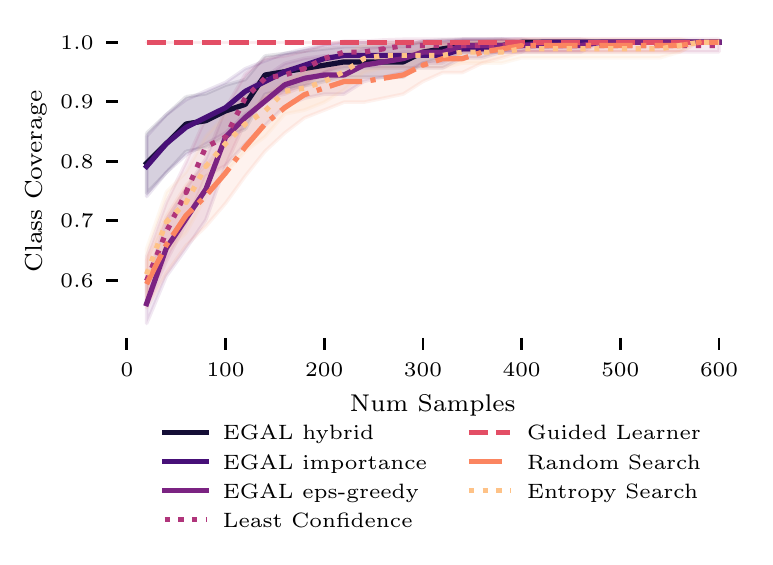}\put(20,35){\tiny{fit}}\end{overpic}
\begin{overpic}[width=0.32\textwidth, trim=0 1.5cm 0 0,clip]{./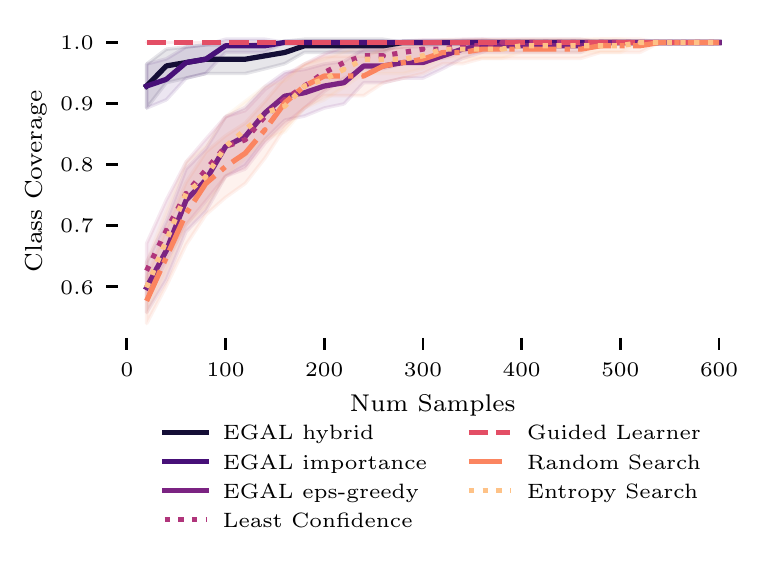}\put(20,35){\tiny{goals}}\end{overpic}
\begin{overpic}[width=0.32\textwidth, trim=0 1.5cm 0 0,clip]{./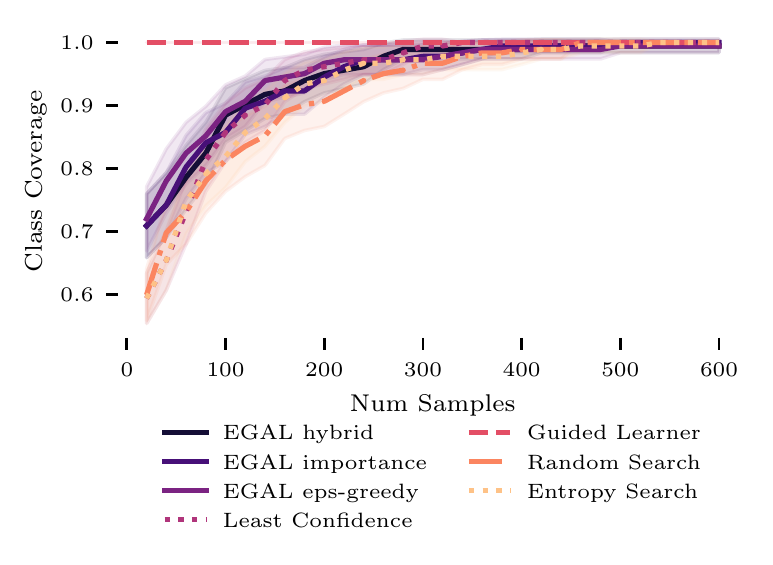}\put(20,35){\tiny{hard}}\end{overpic}
\begin{overpic}[width=0.32\textwidth, trim=0 1.5cm 0 0,clip]{./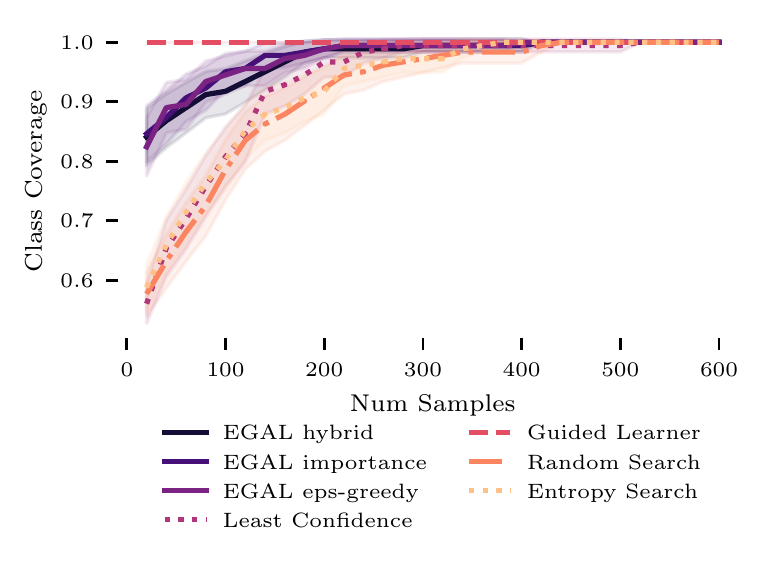}\put(20,35){\tiny{hero}}\end{overpic}
\begin{overpic}[width=0.32\textwidth, trim=0 1.5cm 0 0,clip]{./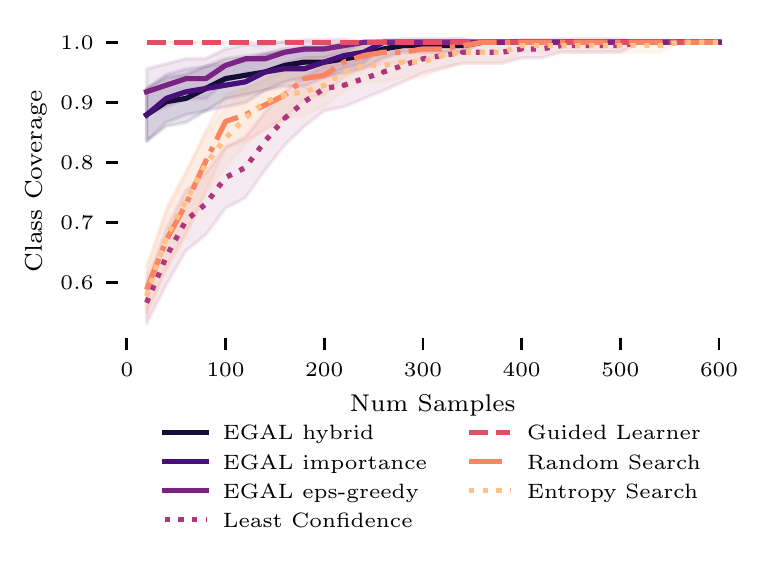}\put(20,35){\tiny{house}}\end{overpic}
\begin{overpic}[width=0.32\textwidth, trim=0 1.5cm 0 0,clip]{./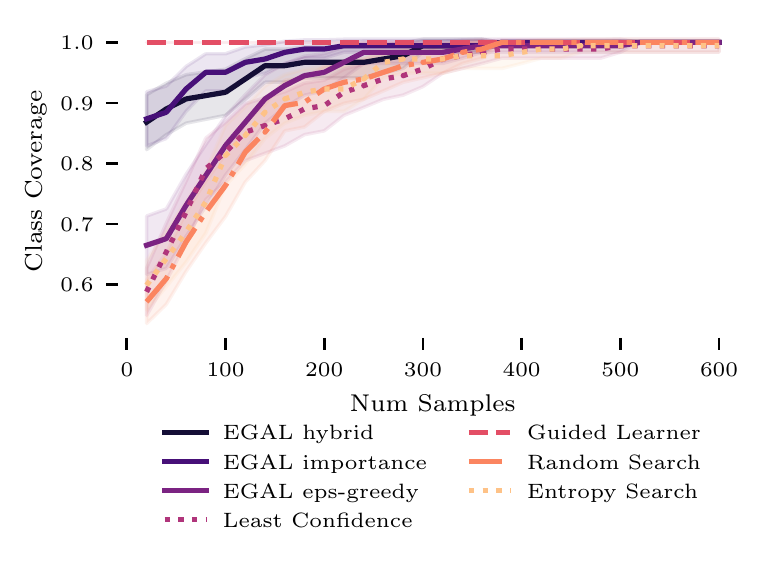}\put(20,35){\tiny{interest}}\end{overpic}
\begin{overpic}[width=0.32\textwidth, trim=0 1.5cm 0 0,clip]{./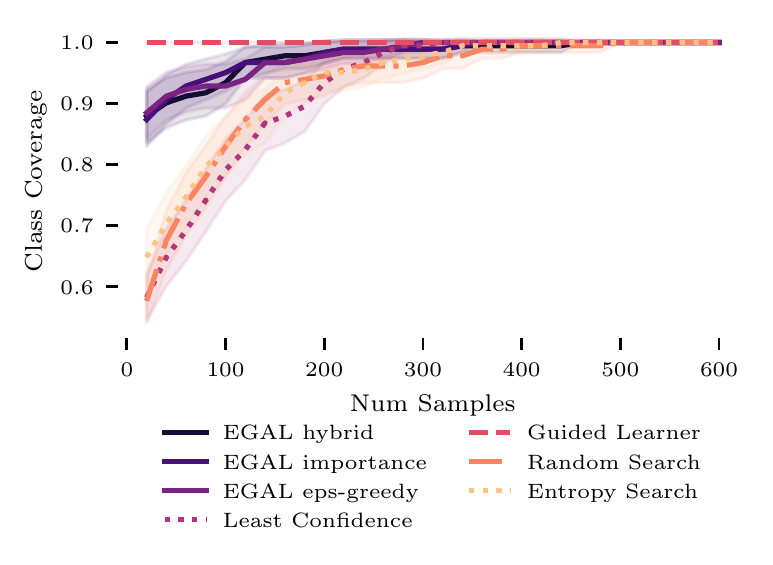}\put(20,35){\tiny{jobs}}\end{overpic}
\begin{overpic}[width=0.32\textwidth, trim=0 1.5cm 0 0,clip]{./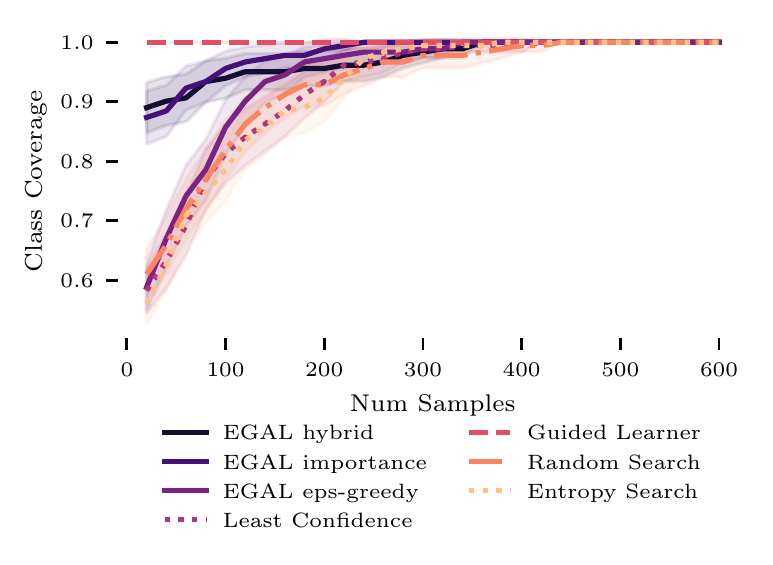}\put(20,35){\tiny{magic}}\end{overpic}
\begin{overpic}[width=0.32\textwidth, trim=0 1.5cm 0 0,clip]{./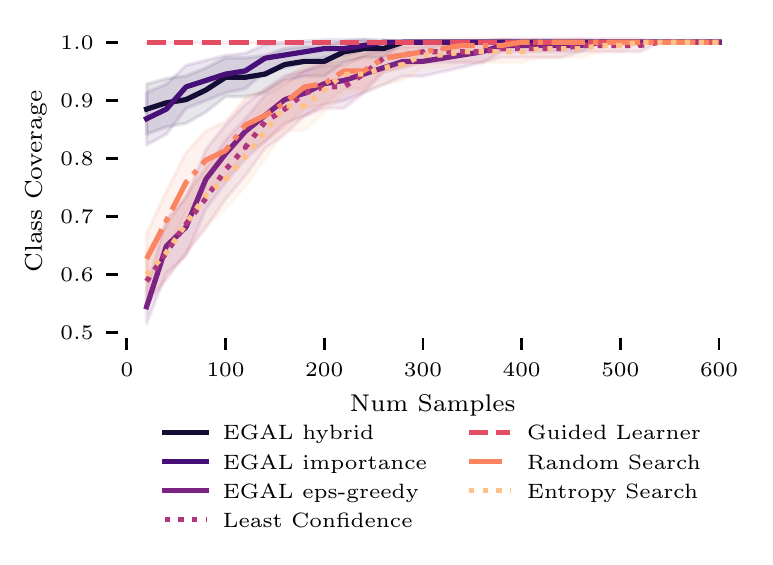}\put(20,35){\tiny{manual}}\end{overpic}
\begin{overpic}[width=0.32\textwidth, trim=0 1.5cm 0 0,clip]{./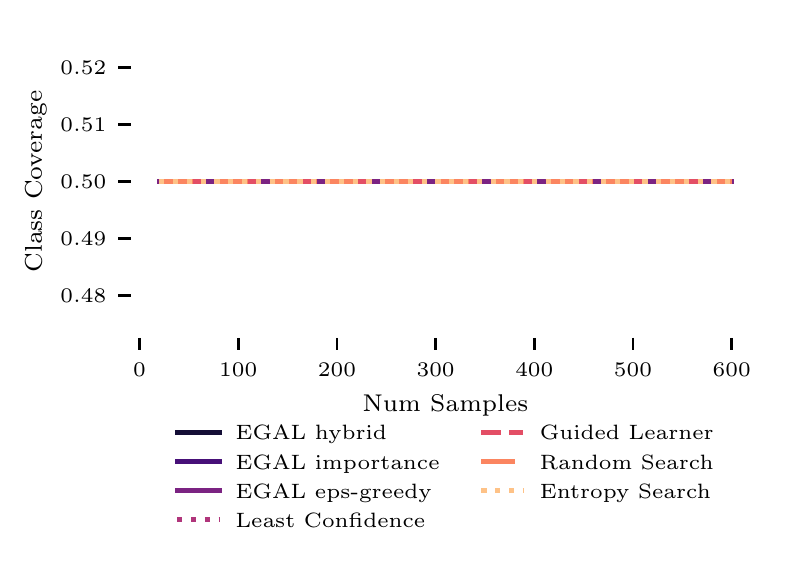}\put(20,35){\tiny{market}}\end{overpic}
\begin{overpic}[width=0.32\textwidth, trim=0 1.5cm 0 0,clip]{./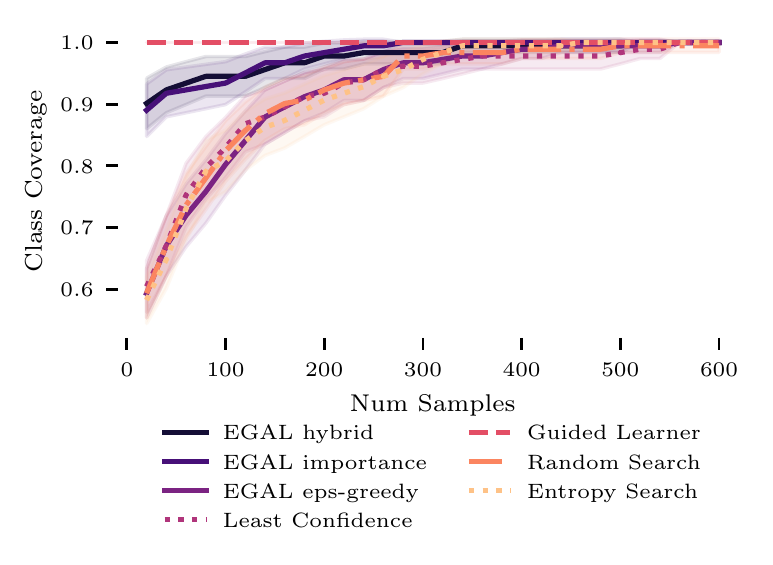}\put(20,35){\tiny{memory}}\end{overpic}
\begin{overpic}[width=0.32\textwidth, trim=0 1.5cm 0 0,clip]{./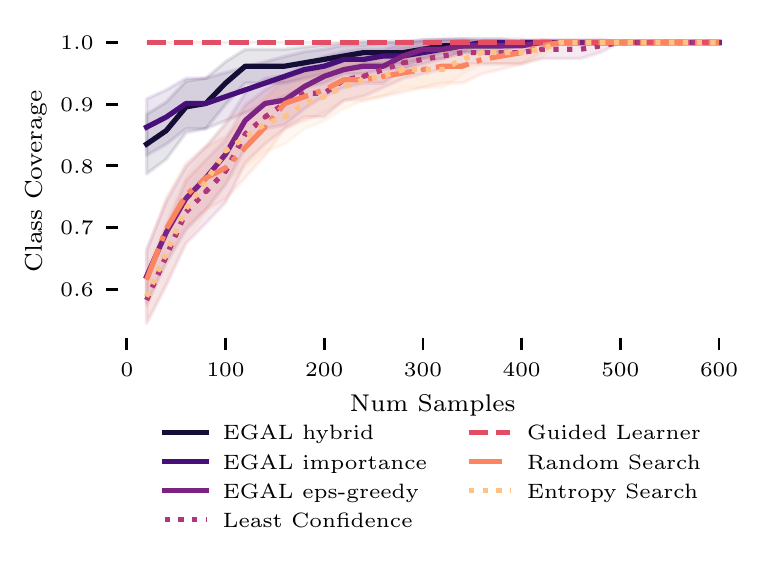}\put(20,35){\tiny{ride}}\end{overpic}
\begin{overpic}[width=0.32\textwidth, trim=0 1.5cm 0 0,clip]{./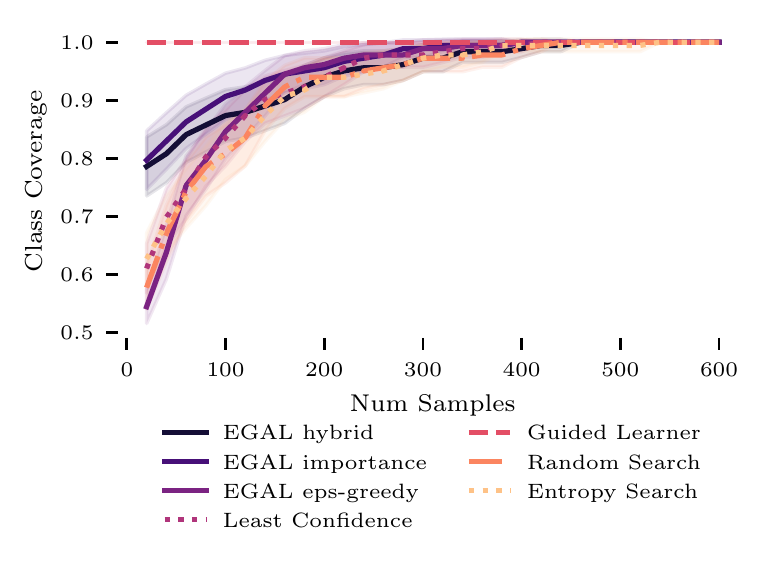}\put(20,35){\tiny{stick}}\end{overpic}
 \begin{overpic}[width=0.32\textwidth, trim=0 1.5cm 0 0,clip]{./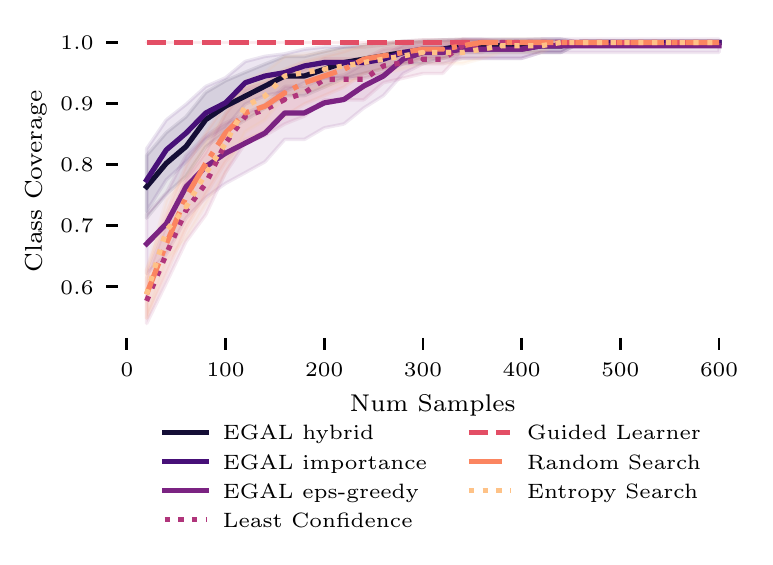}\put(20,35){\tiny{tank}}\end{overpic}
\begin{overpic}[width=0.32\textwidth, trim=0 1.5cm 0 0,clip]{./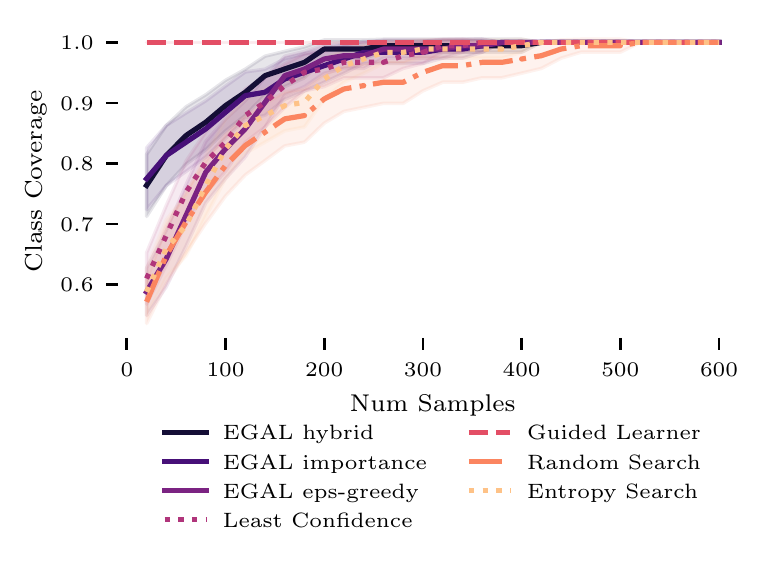}\put(20,35){\tiny{trade}}\end{overpic}
\begin{overpic}[width=0.32\textwidth, trim=0 1.5cm 0 0,clip]{./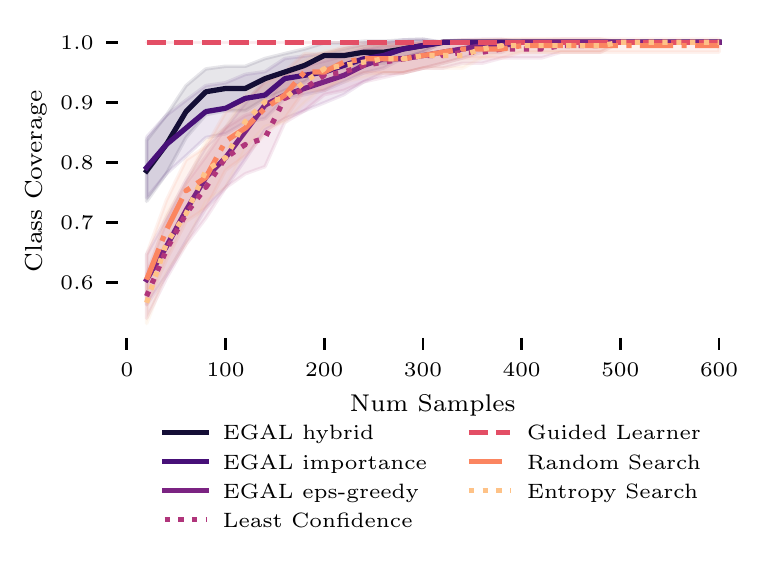}\put(20,35){\tiny{video}}\end{overpic}
\begin{overpic}[width=0.32\textwidth, trim=0 0 0 4.25cm,clip]{./plots/05perc/class_coverage-supp_video_rare-50_rare-0_01_bs-20_samples-600.pdf}\end{overpic}
\caption{Average class coverage for each of the individual words with a ratio of frequent to rare class of 1:100. }
  \end{center}
\end{figure}

\begin{figure}[h]
  \begin{center}
\begin{overpic}[width=0.32\textwidth, trim=0 1.5cm 0 0,clip]{./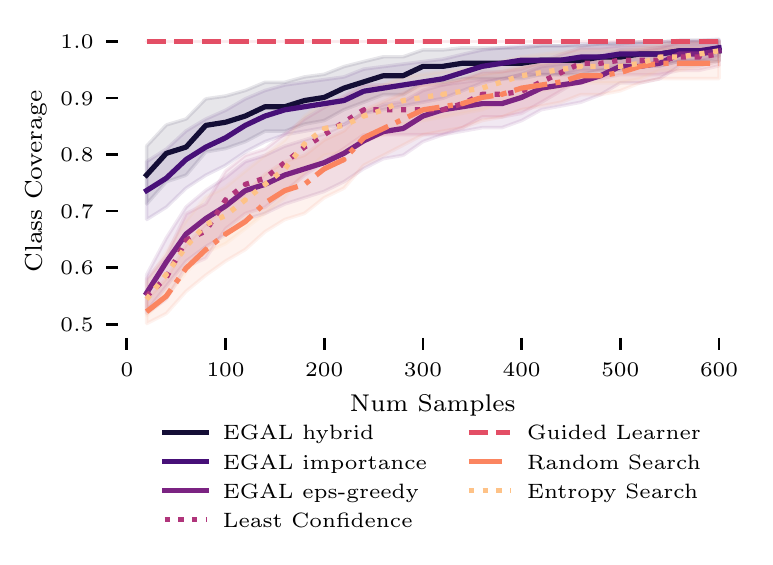}\put(20,35){\tiny{back}}\end{overpic}
\begin{overpic}[width=0.32\textwidth, trim=0 1.5cm 0 0,clip]{./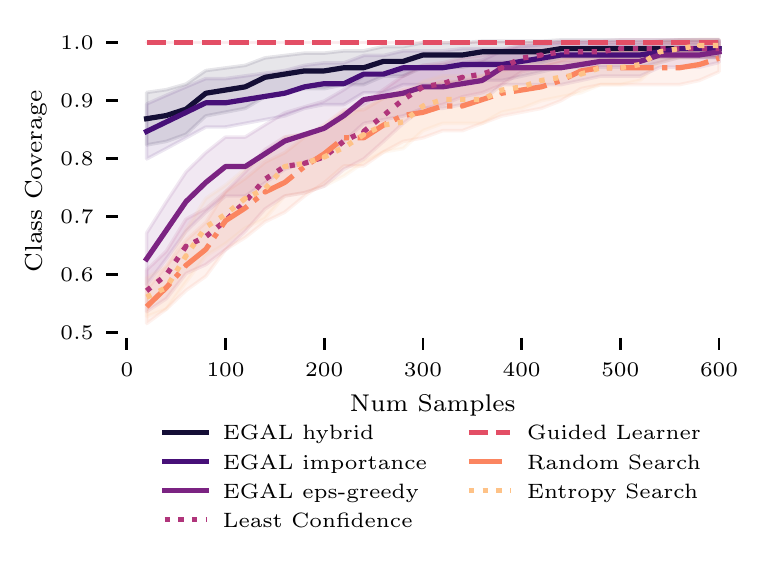}\put(20,35){\tiny{card}}\end{overpic}
\begin{overpic}[width=0.32\textwidth, trim=0 1.5cm 0 0,clip]{./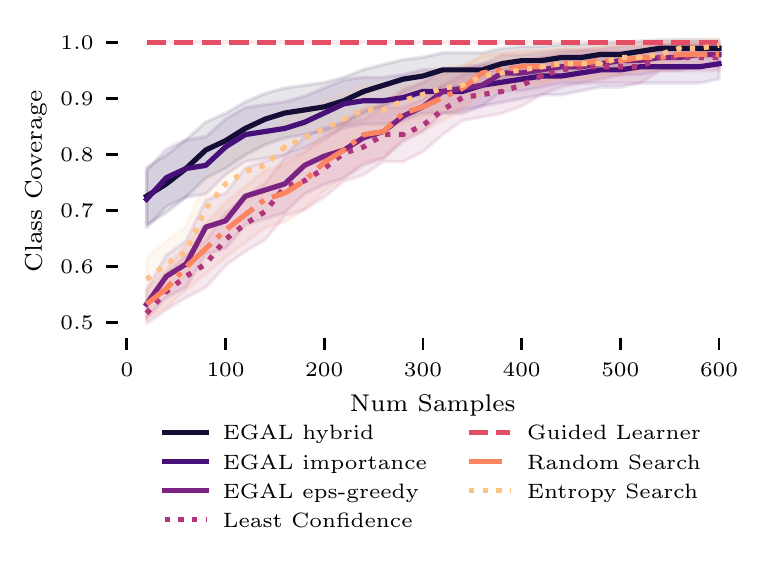}\put(20,35){\tiny{case}}\end{overpic}
\begin{overpic}[width=0.32\textwidth, trim=0 1.5cm 0 0,clip]{./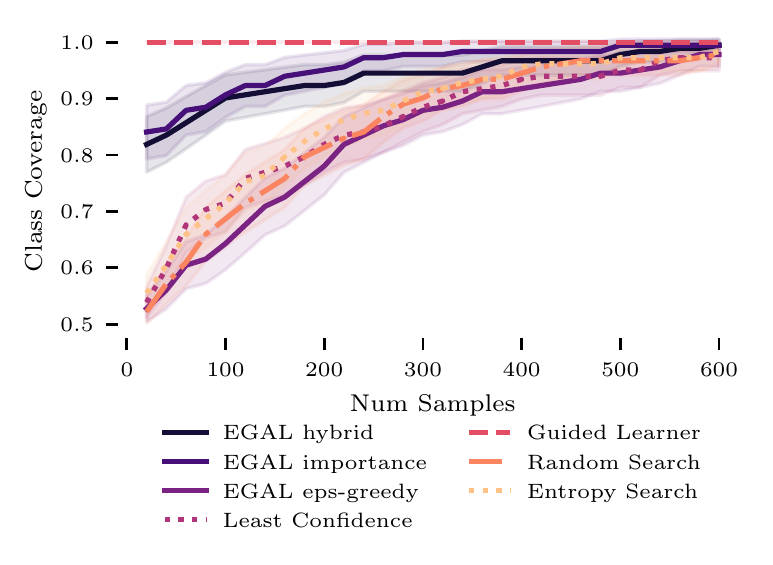}\put(20,35){\tiny{club}}\end{overpic}
\begin{overpic}[width=0.32\textwidth, trim=0 1.5cm 0 0,clip]{./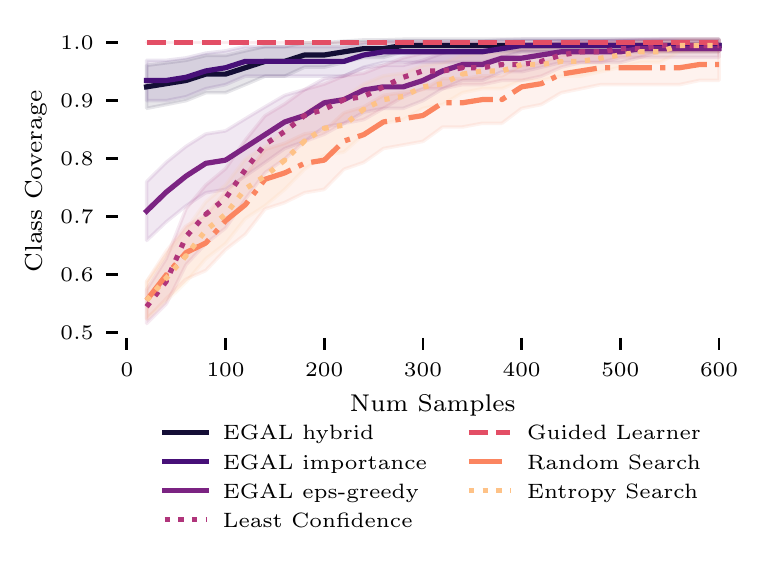}\put(20,35){\tiny{drive}}\end{overpic}
\begin{overpic}[width=0.32\textwidth, trim=0 1.5cm 0 0,clip]{./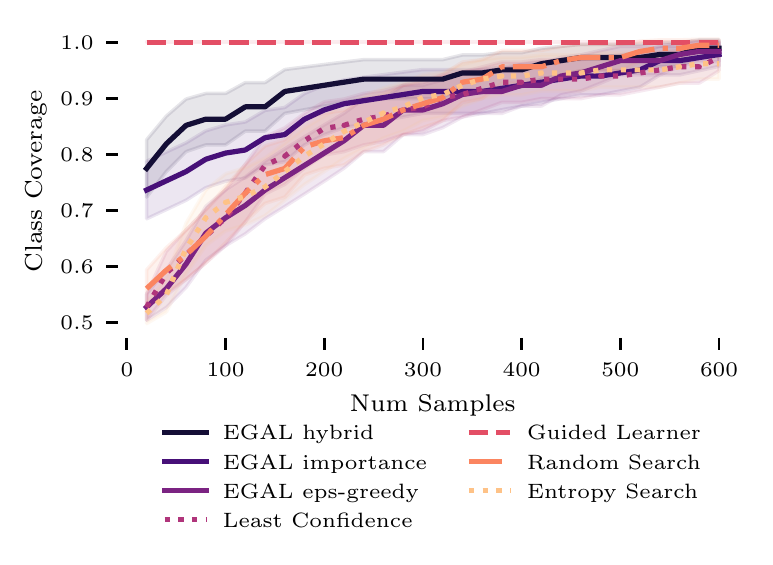}\put(20,35){\tiny{fit}}\end{overpic}
\begin{overpic}[width=0.32\textwidth, trim=0 1.5cm 0 0,clip]{./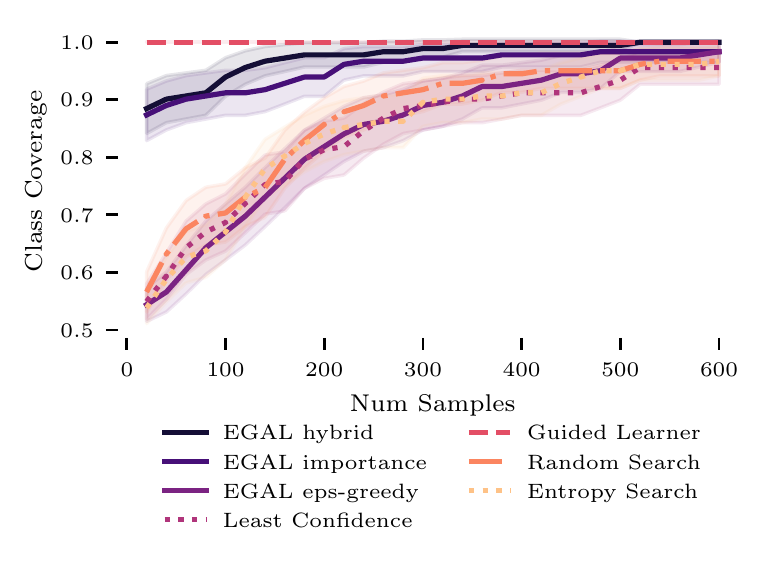}\put(20,35){\tiny{goals}}\end{overpic}
\begin{overpic}[width=0.32\textwidth, trim=0 1.5cm 0 0,clip]{./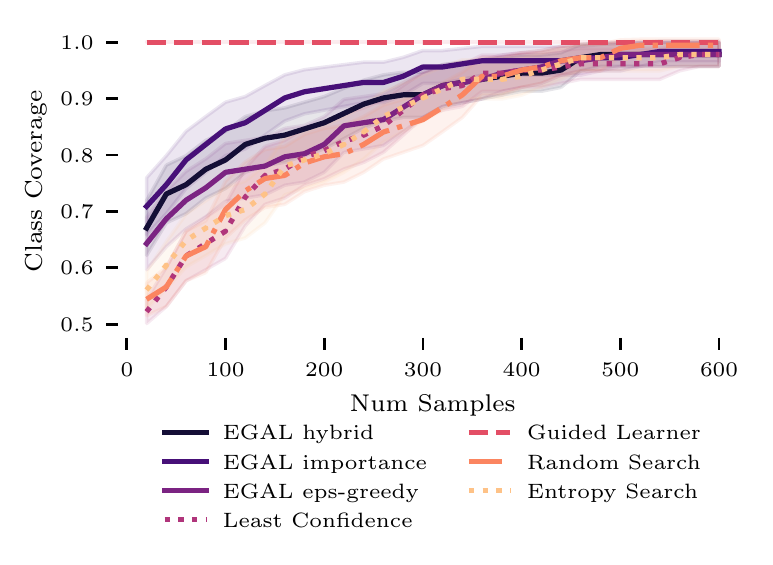}\put(20,35){\tiny{hard}}\end{overpic}
\begin{overpic}[width=0.32\textwidth, trim=0 1.5cm 0 0,clip]{./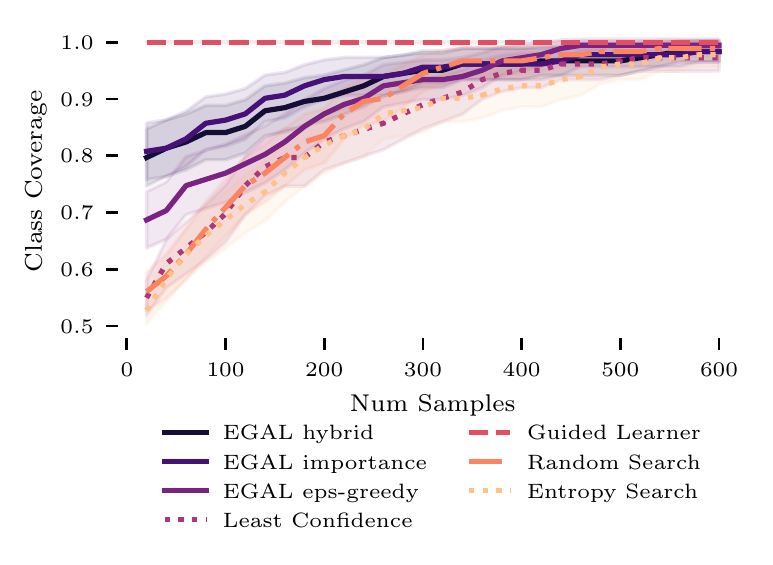}\put(20,35){\tiny{hero}}\end{overpic}
\begin{overpic}[width=0.32\textwidth, trim=0 1.5cm 0 0,clip]{./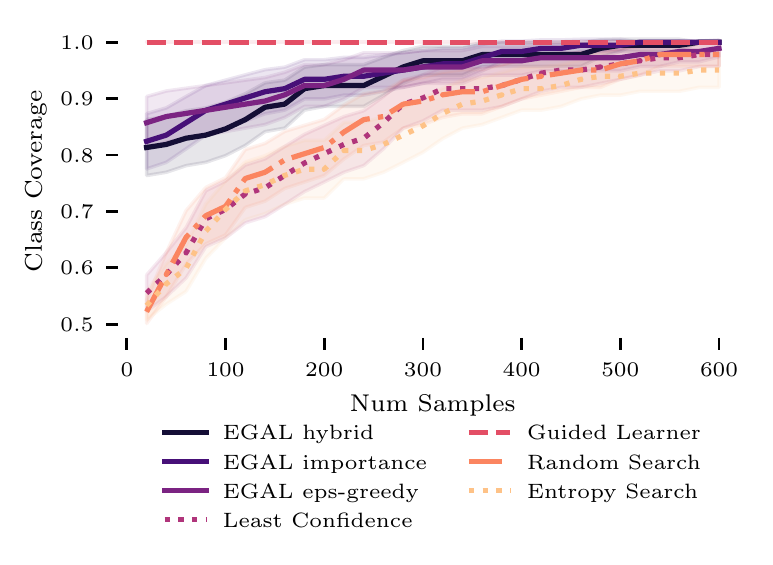}\put(20,35){\tiny{house}}\end{overpic}
\begin{overpic}[width=0.32\textwidth, trim=0 1.5cm 0 0,clip]{./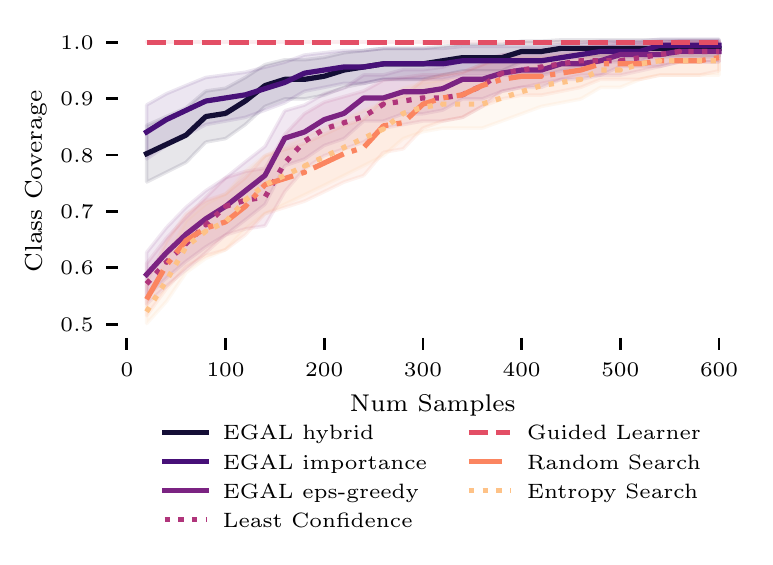}\put(20,35){\tiny{interest}}\end{overpic}
\begin{overpic}[width=0.32\textwidth, trim=0 1.5cm 0 0,clip]{./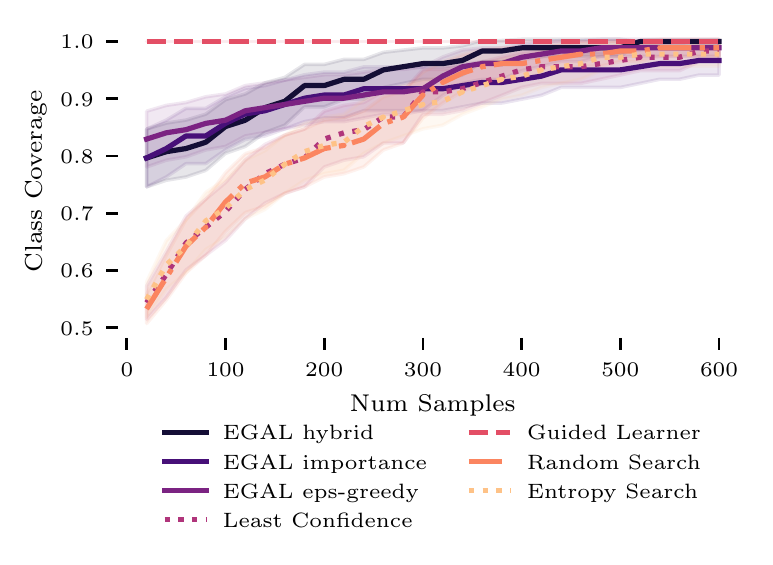}\put(20,35){\tiny{jobs}}\end{overpic}
\begin{overpic}[width=0.32\textwidth, trim=0 1.5cm 0 0,clip]{./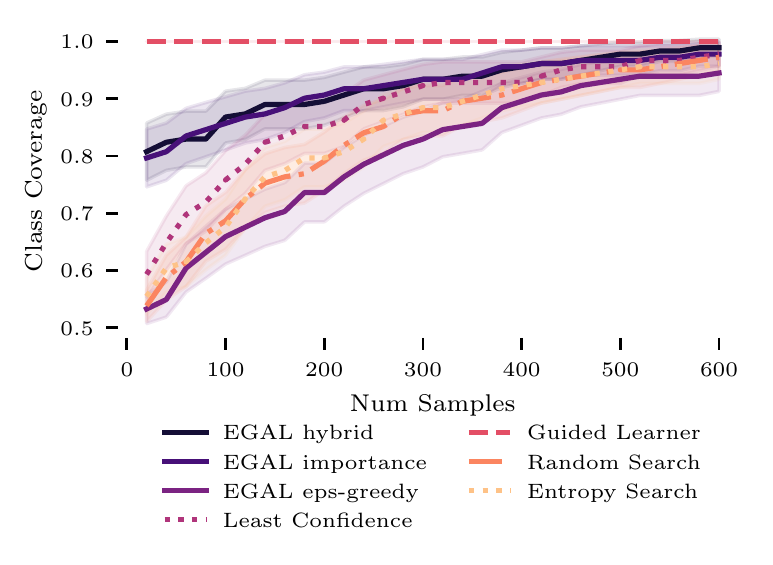}\put(20,35){\tiny{magic}}\end{overpic}
\begin{overpic}[width=0.32\textwidth, trim=0 1.5cm 0 0,clip]{./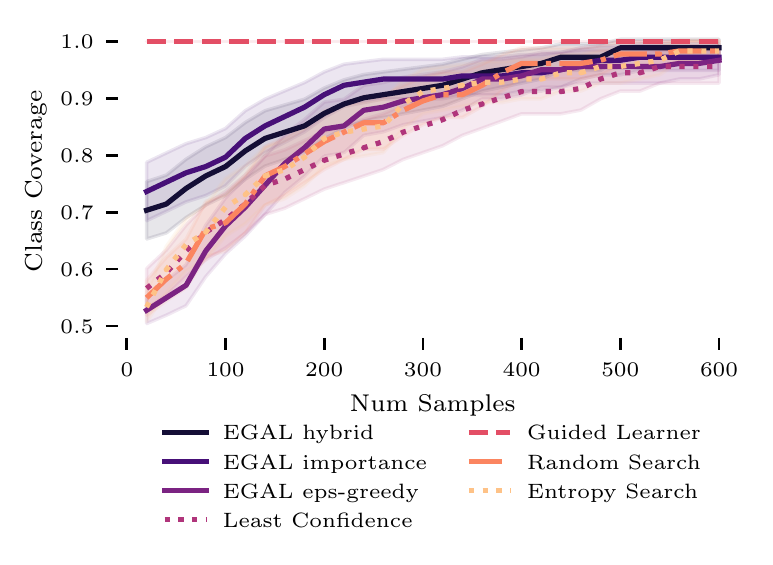}\put(20,35){\tiny{manual}}\end{overpic}
\begin{overpic}[width=0.32\textwidth, trim=0 1.5cm 0 0,clip]{./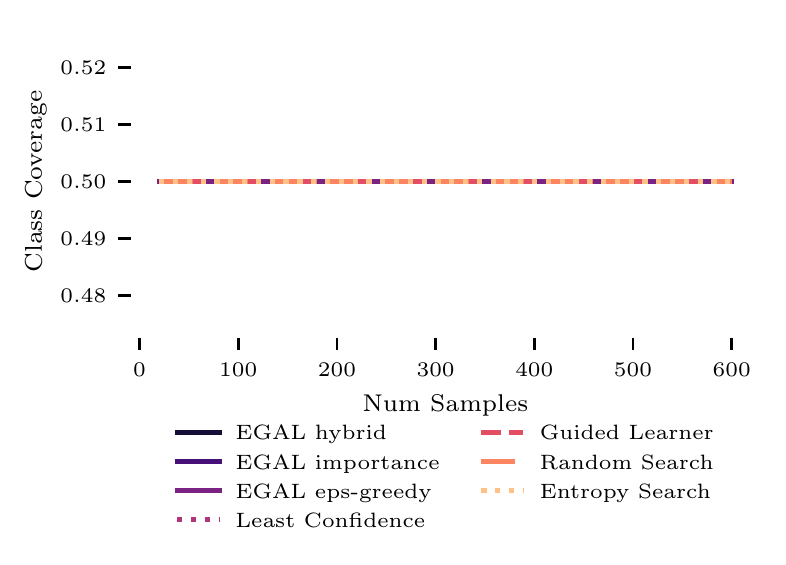}\put(20,35){\tiny{market}}\end{overpic}
\begin{overpic}[width=0.32\textwidth, trim=0 1.5cm 0 0,clip]{./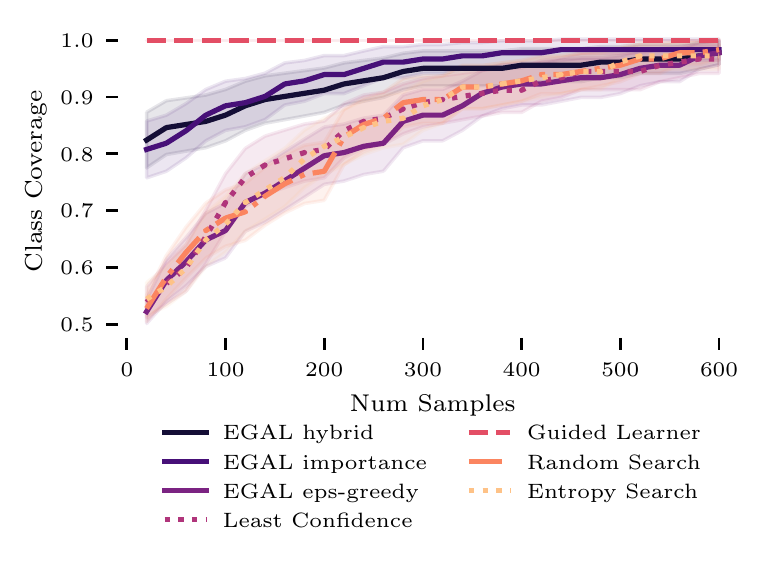}\put(20,35){\tiny{memory}}\end{overpic}
\begin{overpic}[width=0.32\textwidth, trim=0 1.5cm 0 0,clip]{./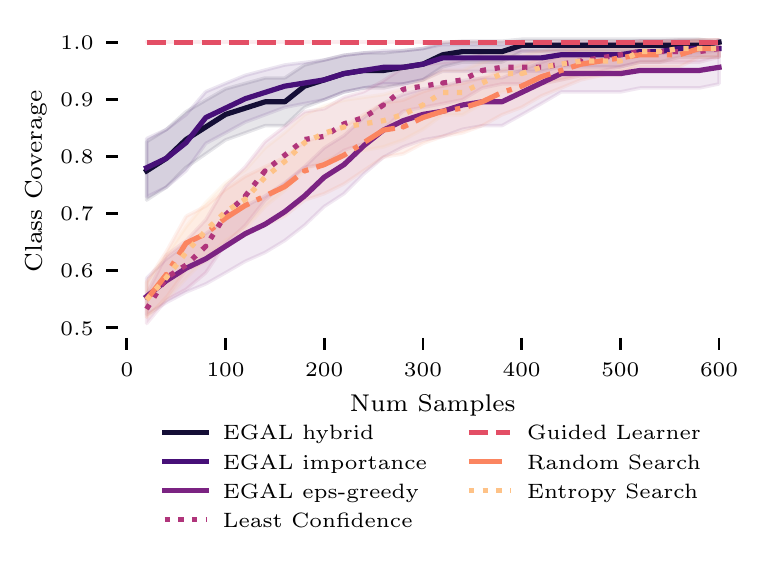}\put(20,35){\tiny{ride}}\end{overpic}
\begin{overpic}[width=0.32\textwidth, trim=0 1.5cm 0 0,clip]{./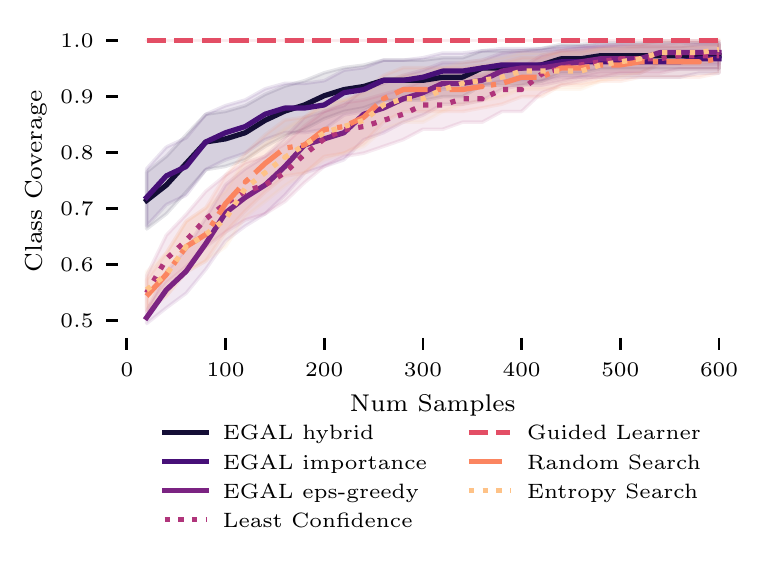}\put(20,35){\tiny{stick}}\end{overpic}
 \begin{overpic}[width=0.32\textwidth, trim=0 1.5cm 0 0,clip]{./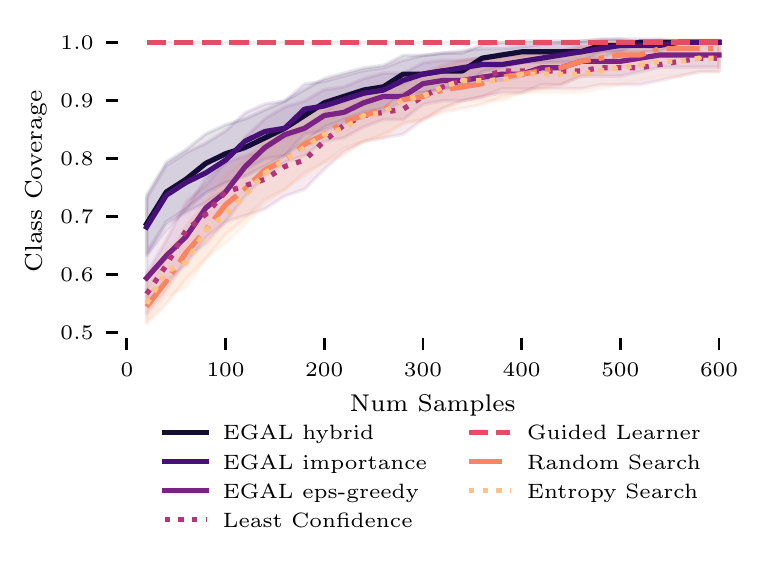}\put(20,35){\tiny{tank}}\end{overpic}
\begin{overpic}[width=0.32\textwidth, trim=0 1.5cm 0 0,clip]{./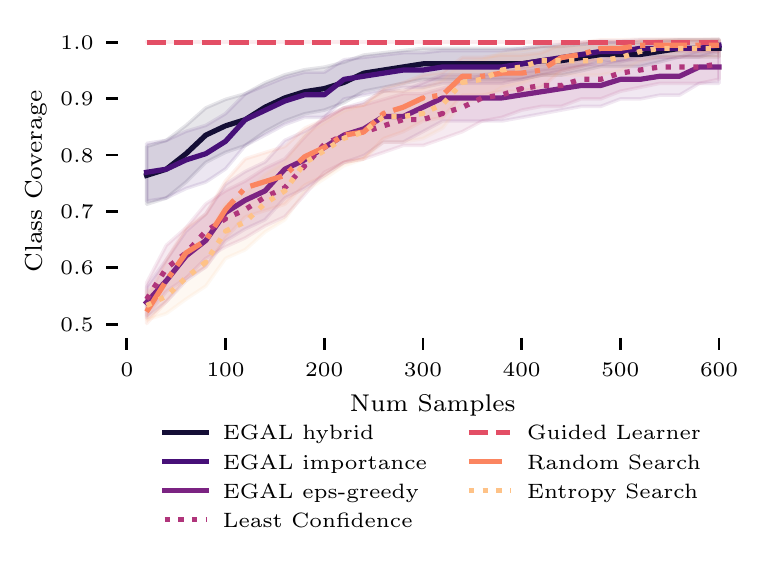}\put(20,35){\tiny{trade}}\end{overpic}
\begin{overpic}[width=0.32\textwidth, trim=0 1.5cm 0 0,clip]{./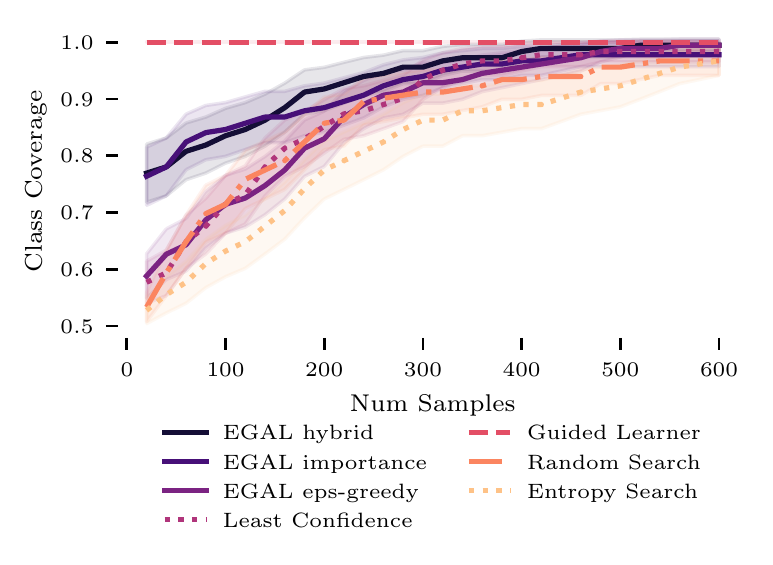}\put(20,35){\tiny{video}}\end{overpic}
\begin{overpic}[width=0.32\textwidth, trim=0 0 0 4.25cm,clip]{./plots/05perc/class_coverage-supp_video_rare-50_rare-0_005_bs-20_samples-600.pdf}\end{overpic}
\caption{Average class coverage for each of the individual words with a ratio of frequent to rare class of 1:200.}
  \end{center}
\end{figure}

\begin{figure}[h]
  \begin{center}
\begin{overpic}[width=0.32\textwidth, trim=0 1.5cm 0 0,clip]{./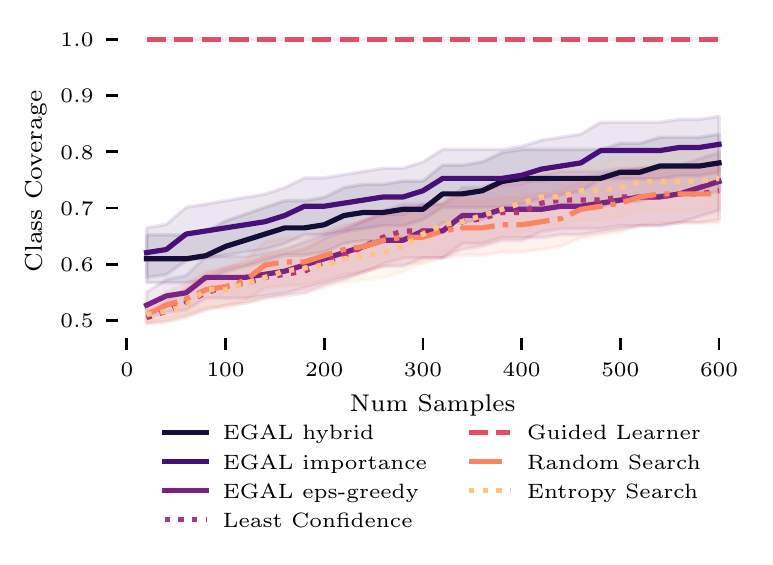}\put(20,35){\tiny{back}}\end{overpic}
\begin{overpic}[width=0.32\textwidth, trim=0 1.5cm 0 0,clip]{./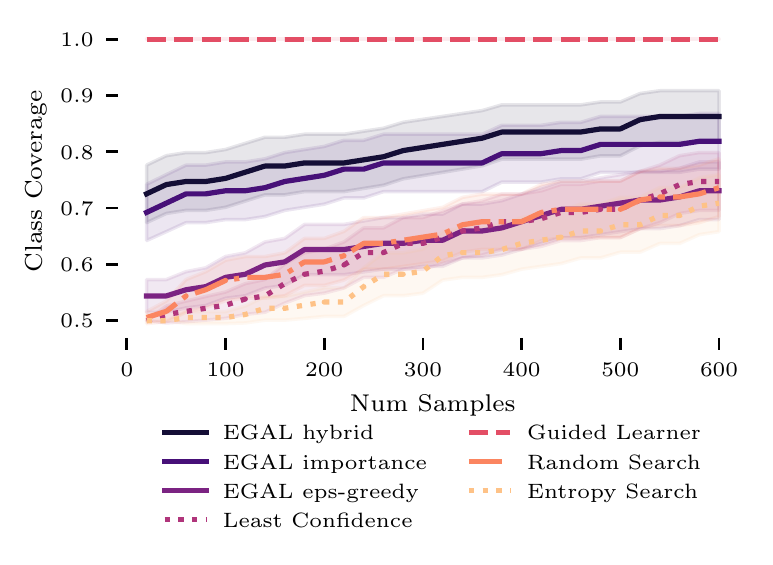}\put(20,35){\tiny{card}}\end{overpic}
\begin{overpic}[width=0.32\textwidth, trim=0 1.5cm 0 0,clip]{./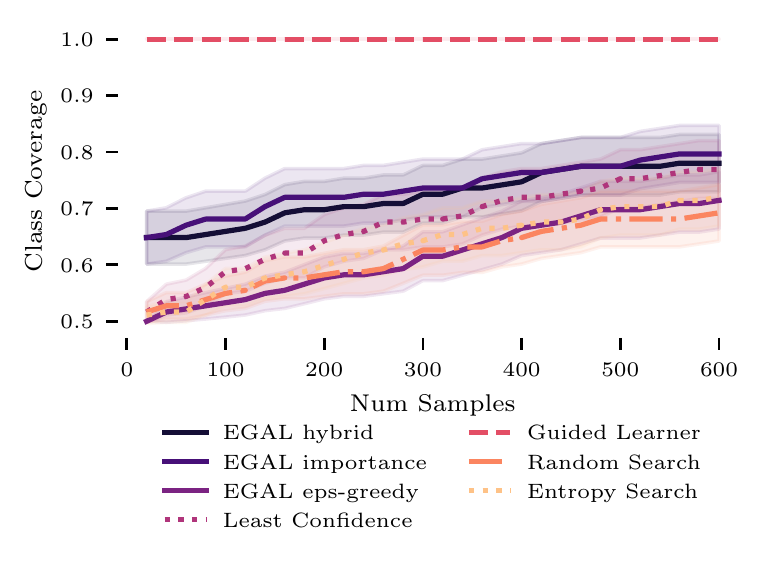}\put(20,35){\tiny{case}}\end{overpic}
\begin{overpic}[width=0.32\textwidth, trim=0 1.5cm 0 0,clip]{./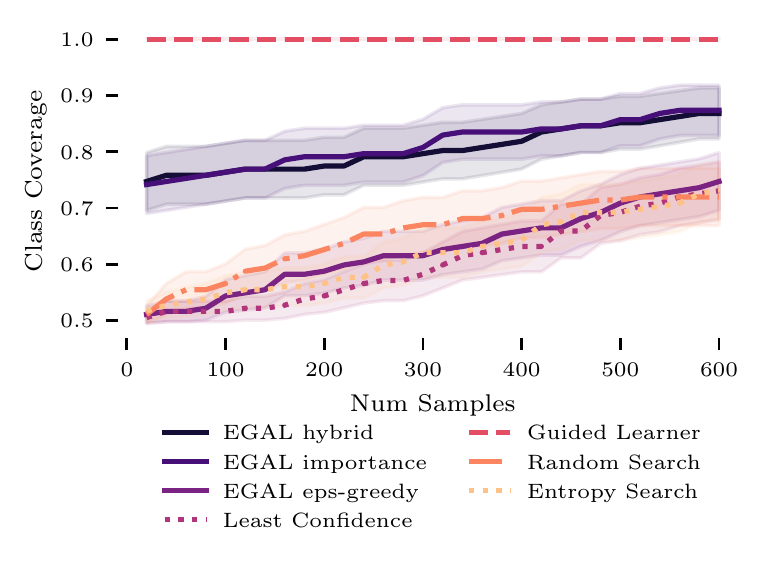}\put(20,35){\tiny{club}}\end{overpic}
\begin{overpic}[width=0.32\textwidth, trim=0 1.5cm 0 0,clip]{./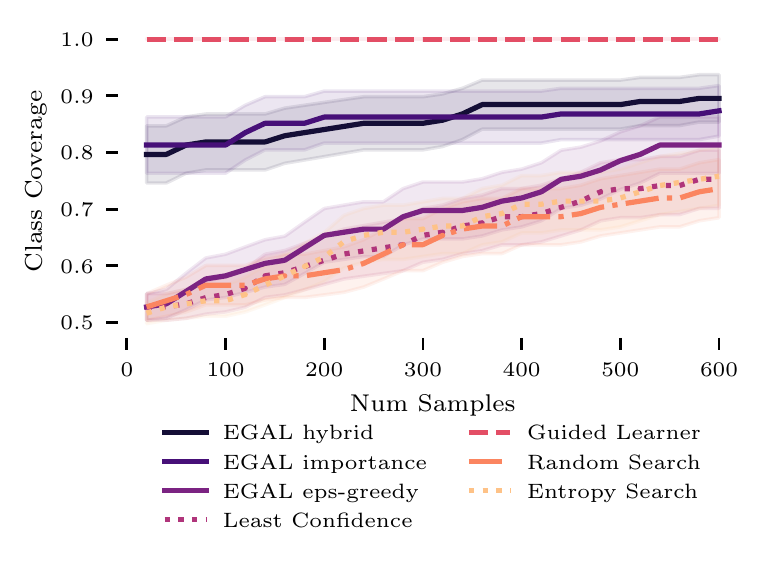}\put(20,35){\tiny{drive}}\end{overpic}
\begin{overpic}[width=0.32\textwidth, trim=0 1.5cm 0 0,clip]{./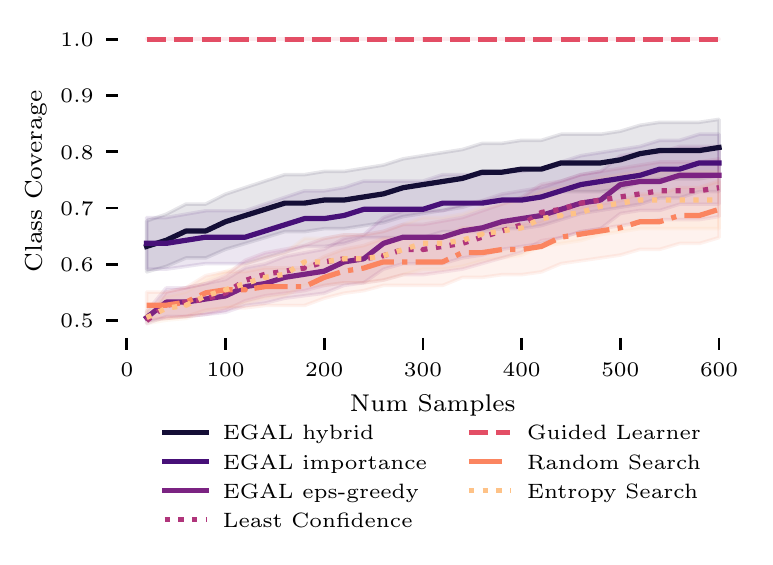}\put(20,35){\tiny{fit}}\end{overpic}
\begin{overpic}[width=0.32\textwidth, trim=0 1.5cm 0 0,clip]{./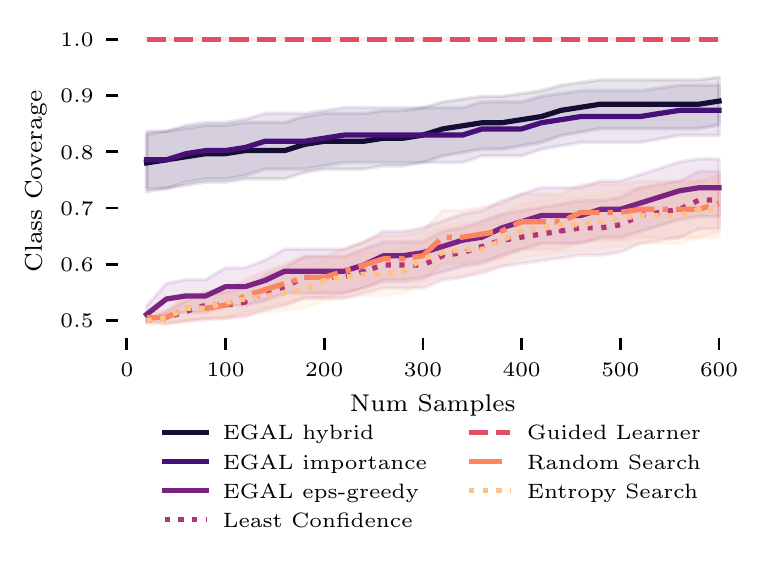}\put(20,35){\tiny{goals}}\end{overpic}
\begin{overpic}[width=0.32\textwidth, trim=0 1.5cm 0 0,clip]{./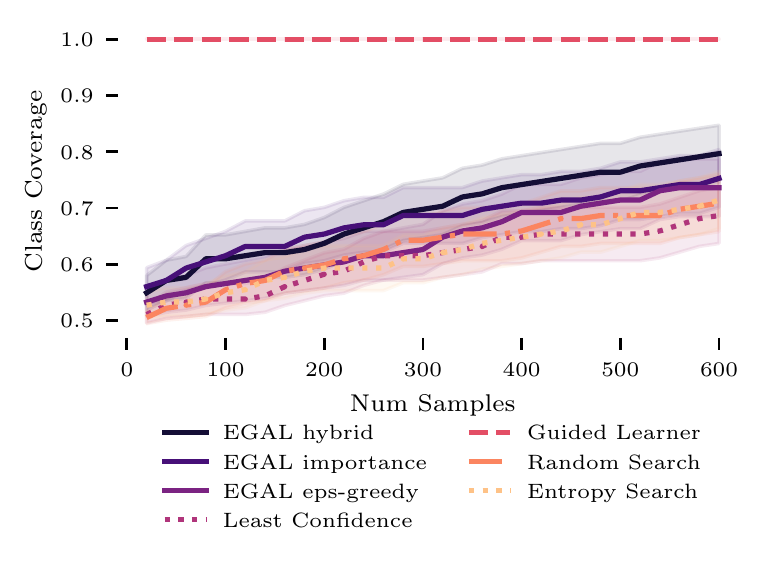}\put(20,35){\tiny{hard}}\end{overpic}
\begin{overpic}[width=0.32\textwidth, trim=0 1.5cm 0 0,clip]{./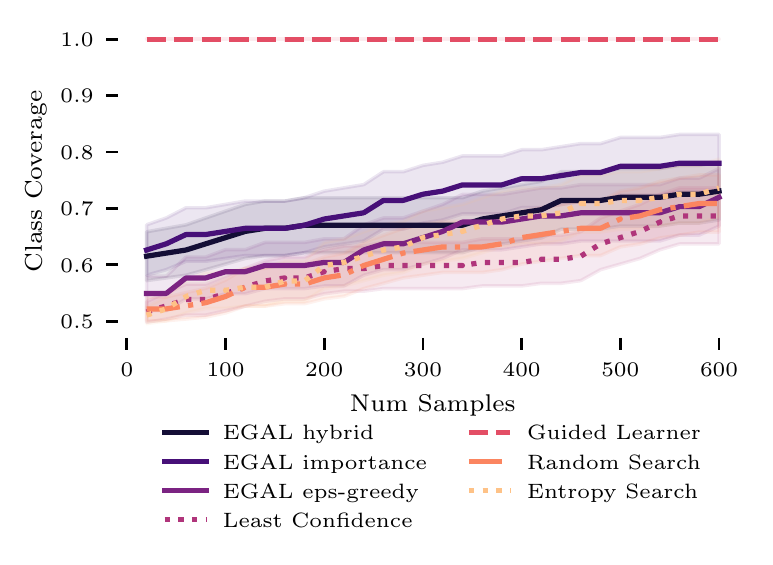}\put(20,35){\tiny{hero}}\end{overpic}
\begin{overpic}[width=0.32\textwidth, trim=0 1.5cm 0 0,clip]{./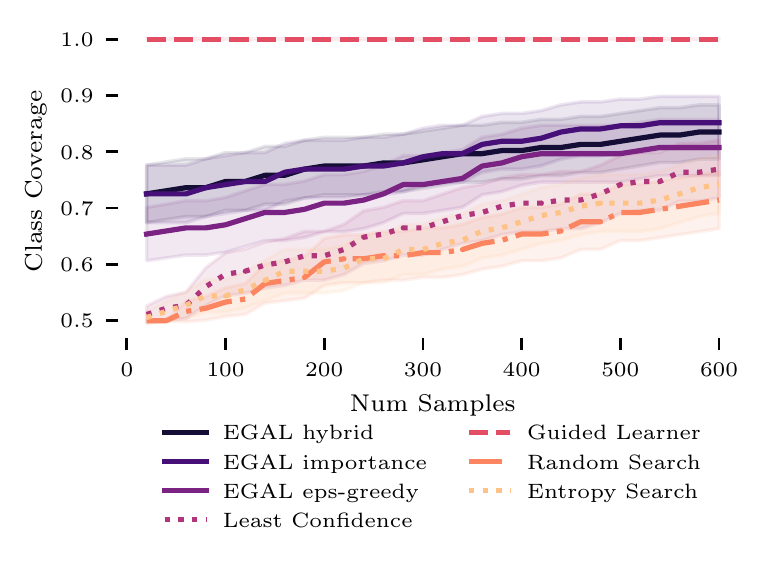}\put(20,35){\tiny{house}}\end{overpic}
\begin{overpic}[width=0.32\textwidth, trim=0 1.5cm 0 0,clip]{./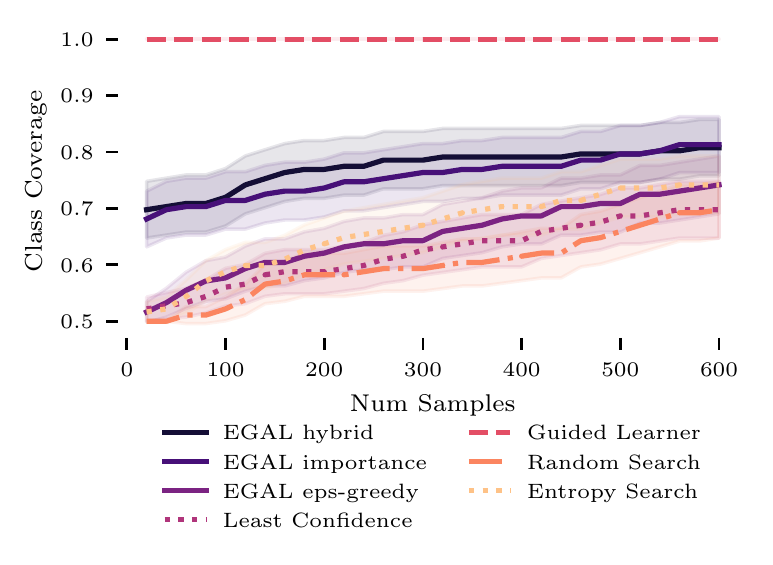}\put(20,35){\tiny{interest}}\end{overpic}
\begin{overpic}[width=0.32\textwidth, trim=0 1.5cm 0 0,clip]{./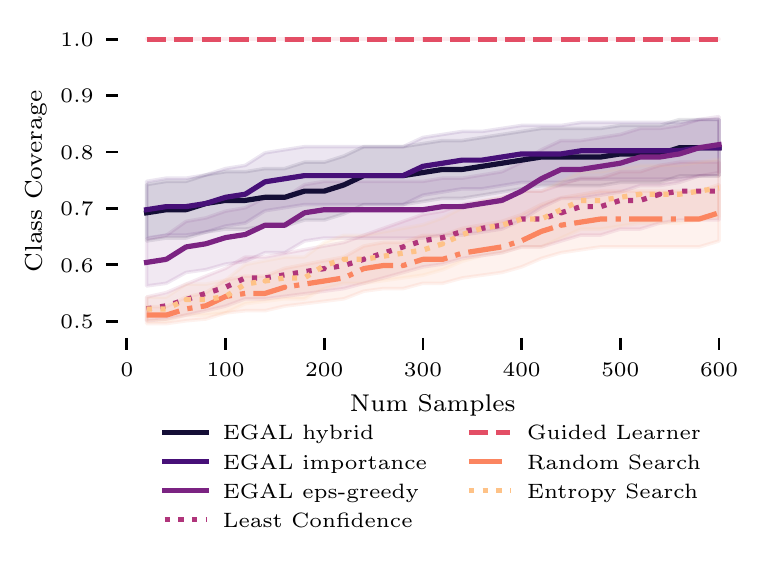}\put(20,35){\tiny{jobs}}\end{overpic}
\begin{overpic}[width=0.32\textwidth, trim=0 1.5cm 0 0,clip]{./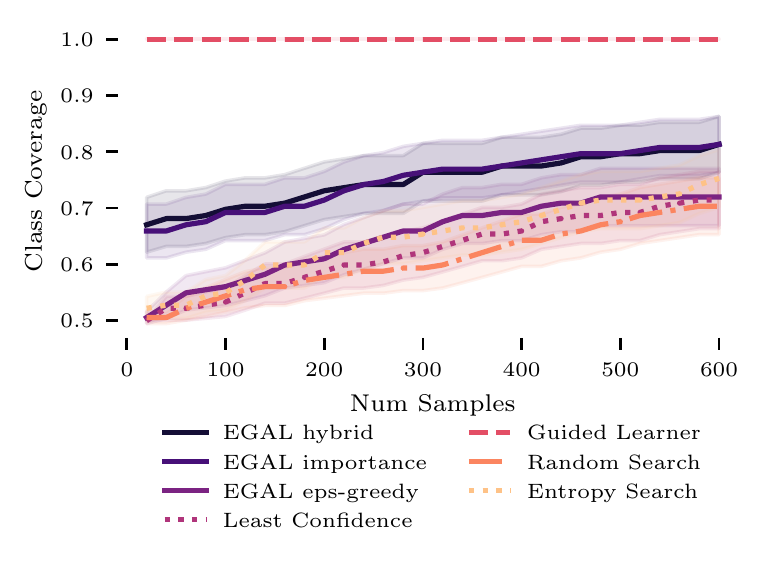}\put(20,35){\tiny{magic}}\end{overpic}
\begin{overpic}[width=0.32\textwidth, trim=0 1.5cm 0 0,clip]{./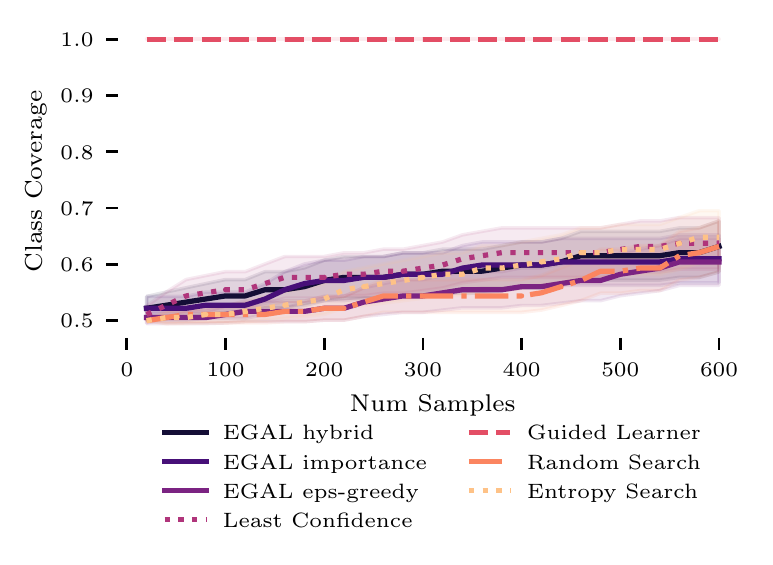}\put(20,35){\tiny{manual}}\end{overpic}
\begin{overpic}[width=0.32\textwidth, trim=0 1.5cm 0 0,clip]{./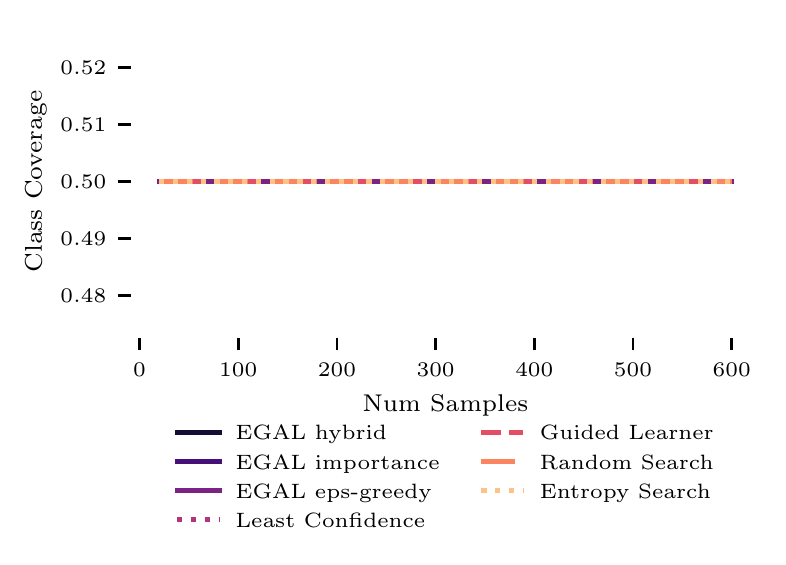}\put(20,35){\tiny{market}}\end{overpic}
\begin{overpic}[width=0.32\textwidth, trim=0 1.5cm 0 0,clip]{./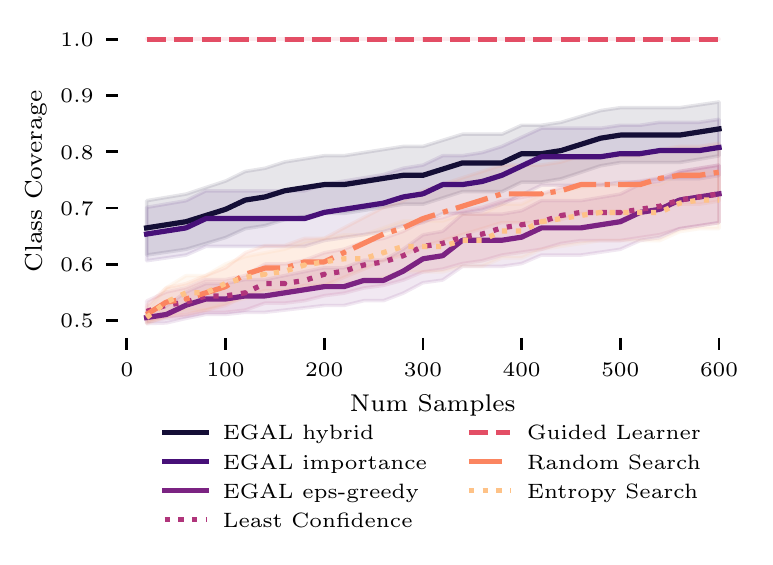}\put(20,35){\tiny{memory}}\end{overpic}
\begin{overpic}[width=0.32\textwidth, trim=0 1.5cm 0 0,clip]{./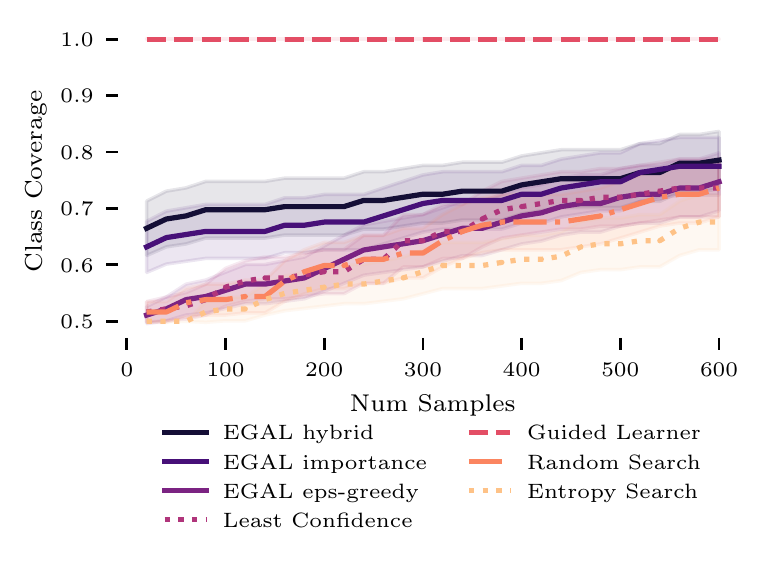}\put(20,35){\tiny{ride}}\end{overpic}
\begin{overpic}[width=0.32\textwidth, trim=0 1.5cm 0 0,clip]{./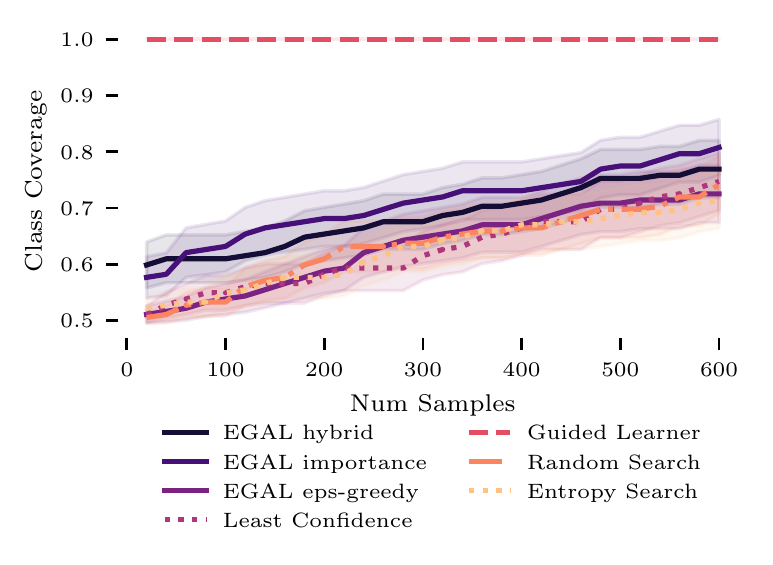}\put(20,35){\tiny{stick}}\end{overpic}
 \begin{overpic}[width=0.32\textwidth, trim=0 1.5cm 0 0,clip]{./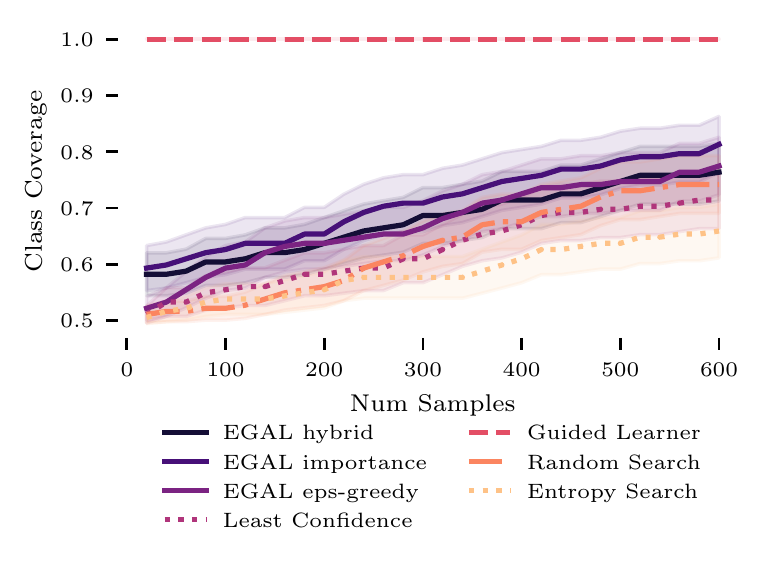}\put(20,35){\tiny{tank}}\end{overpic}
\begin{overpic}[width=0.32\textwidth, trim=0 1.5cm 0 0,clip]{./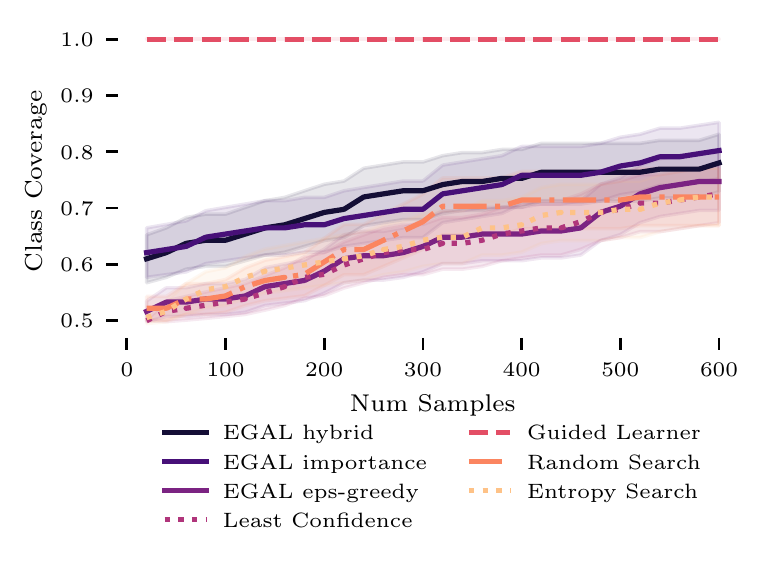}\put(20,35){\tiny{trade}}\end{overpic}
\begin{overpic}[width=0.32\textwidth, trim=0 1.5cm 0 0,clip]{./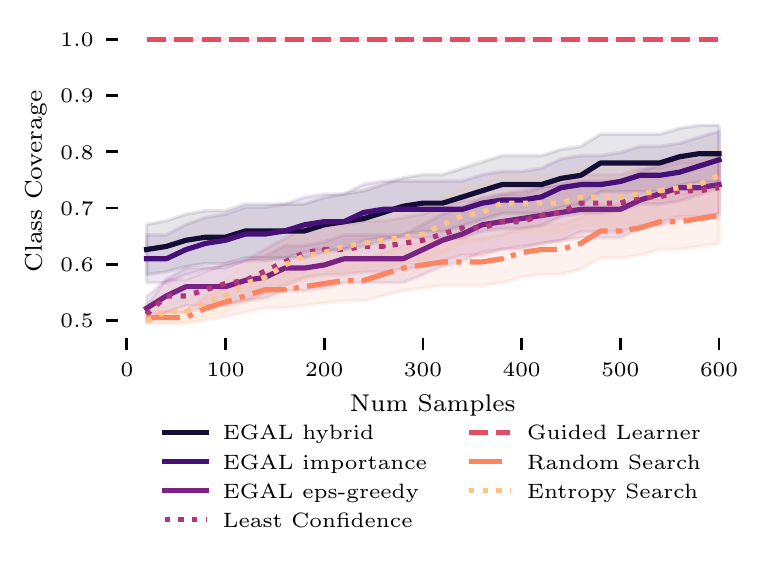}\put(20,35){\tiny{video}}\end{overpic}
\begin{overpic}[width=0.32\textwidth, trim=0 0 0 4.25cm,clip]{./plots/05perc/class_coverage-supp_video_rare-50_rare-0_001_bs-20_samples-600.pdf}\end{overpic}
\caption{Average class coverage for each of the individual words with a ratio of frequent to rare class of 1:1000. }
  \end{center}
\end{figure}

\end{document}